\documentclass[12pt]{amsart}

\usepackage{Style_files/Packages}
\usepackage{Style_files/New_commands}
\usepackage{Style_files/Style}
\usepackage{comment}
\usepackage{thmtools}
\usepackage{thm-restate}

\usepackage{pgf, tikz}
\usetikzlibrary{arrows, automata, positioning}

\newcommand{\LPG}{\mathsf{L}\mathsf{P}\mathsf{G}}

\renewcommand{\S}{\mathcal{S}_{d-1}}
\renewcommand{\P}{\mathbb{P}}

\setlength{\parindent}{0em}
\setlength{\parskip}{0.4em}

\usepackage{accents}

\title{On Single Index Models beyond Gaussian Data}
\author{Joan Bruna$^{\ast, \star, \dagger}$, Loucas Pillaud-Vivien$^{\ast, \dagger}$ and Aaron Zweig$^{\ast}$}
\address{$^\ast$ Courant Institute of Mathematical Sciences,  New York University}
\address{$^\star$ Center for Data Science, New York University}
\address{$^\dagger$ Center for Computational Mathematics, Flatiron Institute}


\begin{document}

\maketitle

\begin{abstract}
Sparse high-dimensional functions have arisen as a rich framework to study the behavior of gradient-descent methods using shallow neural networks, showcasing their ability to perform feature learning beyond linear models. 
Amongst those functions, the simplest are single-index models $f(x) = \phi( x \cdot \theta^*)$, where the labels are generated by an arbitrary non-linear scalar link function $\phi$ applied to an unknown one-dimensional projection $\theta^*$ of the input data. By focusing on Gaussian data, several recent works have built a remarkable picture, where the so-called information exponent (related to the regularity of the link function) controls the required sample complexity. In essence, these tools exploit the stability and spherical symmetry of Gaussian distributions. In this work, building from the framework of \cite{arous2020online}, we explore extensions of this picture beyond the Gaussian setting, where both stability or symmetry might be violated. Focusing on the planted setting where $\phi$ is known, our main results establish that Stochastic Gradient Descent can efficiently recover the unknown direction $\theta^*$ in the high-dimensional regime, under assumptions that extend previous works ~\cite{yehudai2020learning,wu2022learning}. 
\end{abstract}


\setcounter{tocdepth}{1}
\tableofcontents



\section{Introduction}
Over the past years, there has been sustained effort to enlarge our mathematical understanding of high-dimensional learning, particularly when using neural networks trained with gradient-descent methods --- highlighting the interplay between algorithmic, statistical and approximation questions. An essential, distinctive aspect of such models is their ability to perform \emph{feature learning}, or to extract useful low-dimensional features out of high-dimensional observations. 

An appealing framework to rigorously analyze this behavior are sparse functions of the form $f(x) = \phi( \Theta_*^\top x)$, where the labels are generated by a generic non-linear, low-dimensional function $\phi: \mathbb{R}^k \to \mathbb{R}$ of linear features $\Theta^\top_* x$, with $\Theta_* \in \mathbb{R}^{d \times k}$ with $k \ll d$. While the statistical and approximation aspects of such function classes are by now well-understood \cite{barron1993universal,bach2017breaking}, the outstanding challenge remains computational, in particular in understanding the ability of gradient-descent methods to succeed. Even in the simplest setting of \emph{single-index models} ($k=1$), and assuming that $\phi$ is known,  the success of gradient-based learning depends on an intricate interaction between the data distribution $x \sim \nu$ and the `link' function $\phi$; and in fact computational lower bounds are known for certain such choices \cite{yehudai2020learning,song2021cryptographic,goel2020superpolynomial,diakonikolas2017statistical,shamir2018distribution}. 

Positive results thus require to make specific assumptions, either about the data, or about the link function, or both. On one end, there is a long literature, starting at least with \cite{kalai2009isotron,shalev2010learning,kakade2011efficient}, that exploits certain properties of $\phi$, such as invertibility or monotonicity, under generic data distributions satisfying mild anti-concentration properties \cite{soltanolkotabi2017learning,frei2020agnostic,yehudai2020learning,wu2022learning}. On the other end, by focusing on canonical high-dimensional measures such as the Gaussian distribution, the seminal works \cite{arous2020online,dudeja2018learning} built a harmonic analysis framework of SGD, resulting in a fairly complete picture of the sample complexity required to learn generic link functions $\phi$, and revealing a rich asymptotic landscape beyond the proportional regime $n \asymp d$, characterized by the number of vanishing moments, or \emph{information exponent} $s$ of $\phi$, 
whereby $n \asymp d^{s-1}$ samples are needed for recovery. Since then, several authors have built and enriched this setting to multi-index models~\cite{abbe2022merged,abbe2023sgd,damian2022neural,arnaboldi2023high}, addressing the semi-parametric learning of the link function~\cite{biettilearning2022}, as well as exploring SGD-variants~\cite{arous2022high,barak2022hidden,berthier2023learning,chen2023learning}. This harmonic analysis framework relies on two key properties of the Gaussian measure and their interplay with SGD: its spherical symmetry and its stability by linear projection. Together, they provide an optimization landscape that is well-behaved in the limit of infinite data, and enable SGD to escape the `mediocrity' of initialisation, where the initial direction $\theta_0$, in the high-dimensional setting, has vanishingly small correlation $| \theta_0 \cdot \theta^*| \simeq 1/\sqrt{d}$ with the planted direction $\theta^*$. 

In this work, we study to what extent the `Gaussian picture' is robust to perturbations, focusing on the planted setting where $\phi$ is known. Our motivation comes from the fact that real data is rarely Gaussian, yet amenable to being approximately Gaussian via CLT-type arguments.
We establish novel positive results along two main directions: (i) when spherical symmetry is preserved but stability is lost, and (ii) when spherical symmetry is lost altogether. In the former, we show that spherical harmonics can be leveraged to provide a benign optimization landscape for SGD under mild regularity assumptions, for initialisations that can be reached with constant probability with the same sample complexity as in the Gaussian case. 
In the latter, we quantify the lack of symmetry with robust projected Wasserstein distances, and show that for `quasi-symmetric' measures with small distance to the Gaussian reference, SGD efficiently succeeds for link functions with information exponent $s \leq 2$. Finally, using Stein's method, we address substantially `non-symmetric' distributions, demonstrating the strength and versatility of the harmonic analysis framework.

\section{Preliminaries and Problem Setup}
\label{sec:preliminaries}

The focus of this work is to understand regression problems with input/output data $(x, y) \in \R^d \times \R $ generated by \textit{single-index} models. This is a class of problems where the data labels are produced by a non-linear map of a one-dimensional projection of the input, that is
\begin{align}
\label{eq:y_model}
    y = \phi(x \cdot \theta^*), 
\end{align}
where $\phi:\mathbb{R} \to \mathbb{R}$ is also known as the \textit{link function}, and $\theta^* \in \S$, the sphere of $\R^d$, is the \textit{hidden direction} that the models wants to learn. Quite naturally, the learning is made through the family of generalized linear predictors $\mathcal{H} = \{\phi_{\theta} : x \to \phi( x \cdot \theta), \ \text{for }\, \theta\! \in \S \}$, built upon the link function (which is assumed known) and parametrized by the sphere.
\paragraph{Loss function.}
We assume that the input data is distributed according to a probability $\nu \in \mathcal{P}(\R^d)$. Equation~\eqref{eq:y_model} implies that the target function that produces the labels, $\phi_{\theta^*},$ lies in this parametric class. The overall loss classically corresponds to the average over all the data of the square penalisation $l(\theta, x) := (\phi_\theta(x) - \phi_{\theta^*}(x))^2$ so that the population loss writes 
\begin{align}
\label{eq:mainsetup}
    L(\theta) := \mathbb{E}_{\nu} \left[\big(\phi(x \cdot \theta) - \phi(x \cdot \theta^*)\big)^2\right] = \left\|\phi_{\theta} - \phi_{\theta^*}\right\|^2_{L^2_\nu},
\end{align}
where we used the notation $\|f\|^p_{L^p_\nu} = \E_\nu[|f|^p]$, valid for all $p \in \N^*$. Let us put emphasis on the fact that the loss $L$ is a non-convex function of the parameter $\theta$, hence it is not \textit{a priori} guaranteed that gradient-based method are able to retrieve the ground-truth $\theta^*$. This often requires a precise analysis of the \textit{loss landscape}, and where the high-dimensionality can play a role of paramount importance: we place ourselves in this high-dimensional setting for which the dimension is fixed but considered very large $d \gg 1$. Finally, we assume throughout the article that $\phi_\theta$ belongs to the weighted Sobolev space $W^{1,4}_\nu:=\{ \phi, \text{ such that } 
\sup_{\theta \in \S} \left[\|\phi_\theta\|_{L^4_\nu} + \|\phi_\theta'\|_{L^4_\nu}\right] <  \infty \}$.

\paragraph{Stochastic gradient descent.}
To recover the signal given by $\theta^* \in \S$, we run \textit{online stochastic gradient descent} (SGD) on the sphere $\S$. This corresponds to having at each iteration $t \in \mathbb{N}^*$ a \textit{fresh sample} $x_t$ drawn from $\nu$ and independent of the filtration $\mathcal{F}_t = \sigma(x_1, \dots, x_{t-1})$ and performing a spherical gradient step, with step-size $\delta>0$, with respect to $\theta \to l(\theta, x_t)$:
\begin{align}
    \label{eq:SGD}
    \hspace{3cm} \theta_{t+1} &= \frac{\theta_t - \delta \nabla_\theta^\mathcal{S}  l(\theta_t, x_t)}{\left|\theta_t - \delta \nabla_\theta^\mathcal{S}  l(\theta_t, x_t)\right|}, \qquad \text{ with initialization } \theta_{0} \sim \mathrm{Unif}(\S),
\end{align}
%
An important scalar function that enables to track the progress of the SGD iterates is the correlation with the signal $m_\theta:=\theta \cdot \theta^* \in [-1,1]$. We will drop the subscript in case there is no ambiguity. Note that, due to the high-dimensionality of the setup, we have the following lemma:
%
\begin{lemma}
\label{lem:initialization}
For all $a > 0$, we have $ \P_{\theta_0}(m_{\theta_0} \geq a/\sqrt{d}) \leq a^{-1}e^{-a^2/4}  $. Additionally, for any~$\delta>0$ such that $ \max\{a, \delta\} \leq \sqrt{d}/4$, we have the lower bound:
    $\P_{\theta_0}(m_{\theta_0} \geq a/\sqrt{d}) \geq \frac{\delta}{4} e^ {- (a + \delta)^2} $.
\end{lemma}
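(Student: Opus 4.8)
The plan is to reduce both bounds to a single estimate of the spherical‑cap integral, using that for $\theta_0\sim\mathrm{Unif}(\S)$ the correlation $m_{\theta_0}=\theta_0\cdot\theta^*$ is a real random variable with density $p(t)=c_d(1-t^2)^{(d-3)/2}$ on $[-1,1]$, where $c_d=\Gamma(d/2)/(\sqrt\pi\,\Gamma((d-1)/2))$ (equivalently $m_{\theta_0}\overset{d}{=}g_1/\|g\|$ for $g\sim\mathcal N(0,I_d)$), so that $\P(m_{\theta_0}\ge a/\sqrt d)=c_d\int_{a/\sqrt d}^1(1-t^2)^{(d-3)/2}\,dt$. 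A preliminary step I would dispatch first is to pin down $c_d$ via Wendel's inequality for Gamma ratios, $x^s\!\left(\tfrac{x}{x+s}\right)^{1-s}\!\le \Gamma(x+s)/\Gamma(x)\le x^s$ applied with $x=(d-1)/2$, $s=1/2$: this gives $\tfrac{d-1}{\sqrt{2\pi d}}\le c_d\le\sqrt{\tfrac{d-1}{2\pi}}$, in particular $c_d\le\sqrt{d/(2\pi)}$ and $c_d/\sqrt d\ge \tfrac14$ (the latter for $d\ge 3$, harmless in the high‑dimensional regime).

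For the upper bound, write $r=a/\sqrt d$ and use the pointwise inequality $(1-t^2)^{(d-3)/2}\le\tfrac{t}{r}(1-t^2)^{(d-3)/2}$ valid for $t\ge r$, which makes the integral explicit: $\int_r^1(1-t^2)^{(d-3)/2}\,dt\le\tfrac1r\int_r^1 t(1-t^2)^{(d-3)/2}\,dt=\tfrac{(1-r^2)^{(d-1)/2}}{r(d-1)}$. Substituting $r=a/\sqrt d$, bounding $c_d\le\sqrt{(d-1)/(2\pi)}$ and $1-a^2/d\le e^{-a^2/d}$ yields $\P(m_{\theta_0}\ge a/\sqrt d)\le\tfrac1a\sqrt{\tfrac{d}{2\pi(d-1)}}\,e^{-a^2(d-1)/(2d)}\le a^{-1}e^{-a^2/4}$, since $\sqrt{d/(2\pi(d-1))}\le 1/\sqrt\pi<1$ and $(d-1)/(2d)\ge\tfrac14$ for $d\ge 2$.

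For the lower bound I would discard all but a thin slab of the cap: $\P(m_{\theta_0}\ge a/\sqrt d)\ge c_d\int_{a/\sqrt d}^{(a+\delta)/\sqrt d}(1-t^2)^{(d-3)/2}\,dt\ge c_d\cdot\tfrac{\delta}{\sqrt d}\big(1-\tfrac{(a+\delta)^2}{d}\big)^{(d-3)/2}$, using that on the slab the integrand is at least its value at the right endpoint (for $d\ge 3$) and the slab has width $\delta/\sqrt d$. Setting $b=(a+\delta)^2$, the hypothesis $\max\{a,\delta\}\le\sqrt d/4$ forces $b/d\le\tfrac14$, so the elementary bound $\ln(1-x)\ge -x-x^2$ on $[0,\tfrac14]$ gives $\big(1-\tfrac bd\big)^{(d-3)/2}\ge\exp\!\big(-\tfrac{(d-3)b}{2d}-\tfrac{(d-3)b^2}{2d^2}\big)\ge e^{-b/2-b/8}\ge e^{-b}$; combined with $c_d/\sqrt d\ge\tfrac14$ this is exactly $\tfrac{\delta}{4}e^{-(a+\delta)^2}$.

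The argument is thus just two one‑line integral estimates, and the only genuinely delicate point is bookkeeping of constants uniformly in $d$: getting the Gamma‑ratio bounds for $c_d$ right (and accepting the inconsequential restriction $d\ge 3$), and checking that the crude exponent comparisons $(d-1)/(2d)\ge\tfrac14$ and $-5b/8\ge -b$ preserve precisely the target constants $\tfrac14$, $e^{-a^2/4}$ and $e^{-(a+\delta)^2}$.
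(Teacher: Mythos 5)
Your proof is correct and structurally identical to the paper's: same density for $m_{\theta_0}$, same strategy of bounding the Beta-type normalizing constant via a Gamma-ratio inequality, same tail-integral estimate for the upper bound, and same thin-slab restriction for the lower bound. The only differences are cosmetic choices of helper inequalities (Wendel vs.\ Gautschi for the Gamma ratio; evaluating $\int_r^1 t(1-t^2)^{(d-3)/2}\,dt$ in closed form rather than passing first to $e^{-(d-3)t^2/2}$; $\ln(1-x)\ge -x-x^2$ in place of $\log(1+y)\ge y/(1+y)$), all of which preserve the same constants and the same mild restriction on $d$.
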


%
This fact implies that, when running the algorithm in practice, it is initialized with high probability near the equator of $\S$, or at least in a band of typical size $1/\sqrt{d}$ (see Figure~\ref{fig:nice_tikz} for a schematic illustration of this fact). Finally, we use the notation $\nabla_\theta^\mathcal{S}$ to denote the spherical gradient, that is $ \nabla_\theta^\mathcal{S} l(\theta, x) = \nabla_\theta  l(\theta, x) - (\nabla_\theta  l(\theta, x) \cdot \theta) \theta$. As $\nabla_\theta^\mathcal{S}  l(\cdot, x_t)$ is an unbiased estimate of $\nabla_\theta^\mathcal{S}  L$, it is expected that the latter gradient field rules how the SGD iterates travel across the loss landscape.

\paragraph{Loss landscape in the Gaussian case.} As stressed in the introduction, this set-up has been studied by \cite{dudeja2018learning,arous2020online} in the case where $\nu$ is the standard Gaussian, noted as $\gamma$ here to avoid any confusion for later. Let us comment a bit this case to understand what can be the typical landscape of this single-index problem. 
Thanks to the spherical symmetry, the loss admits a scalar summary statistic, given precisely by the correlation $m_\theta$. Moreover, the loss admits an explicit
representation in terms of the Hermite decomposition of the link function $\phi$: if $\{h_j\}_j$ denotes the orthonormal basis of Hermite polynomials of $L^2_\gamma$, then $L(\theta) = 2\sum_j |\langle \phi, h_j\rangle|^2(1 - m^j) := \bar{\ell}(m)$. As a result, 
the gradient field projected along the signal is a (locally simple) \textit{positive function} of the correlation that behaves similarly to
\begin{align}
    \label{eq:loss_property_gaussian}
    - \nabla_\theta^\mathcal{S}  L(\theta) \cdot \theta^* \simeq C m^{s-1} (1 - m),
\end{align}
where $s \in \N^*$ is the index of the first non-zero of the Hermite coefficients $\{\langle \phi, h_j\rangle\}_j$.
This has at least three important consequences for the gradient flow: (i) if initialized positively, the correlation is an increasing function along the dynamics and there is no bad local minima in the loss landscape,  (ii) the parameter $s \in \N^*$ controls the flatness of the loss landscape near the origin and therefore controls the optimization speed of SGD in this region (iii) as soon as $m$ is large enough, the contractive term $1-m$ makes the dynamics converge exponentially fast.
\paragraph{Loss landscape in general cases.} Obviously for general distributions $\nu$, the calculation presented in Eq.\eqref{eq:loss_property_gaussian} is no-longer valid. However, the crux of the present paper is that properties (i)-(ii)-(iii) are robust to the change of distribution and can be shown to be preserved under small adaptations. More precisely, we have the following definition.
\begin{definition}[Local Polynomial Growth]
\label{def:growth}
We say that $L$ has the local polynomial growth of order $k \in \N^*$ and scale $b \geq 0$, if there exists $C > 0$ such that for all $m_\theta \geq b$, 
\begin{equation}
    - \nabla_\theta^\mathcal{S}  L(\theta) \cdot \theta^* \geq C (1 - m_\theta) \left(m_\theta - b\right)^{k-1}~.
\end{equation}
    In such a case we say that $L$ satisfies $\LPG(k,b)$.
\end{definition}
In this definition, and as showed later in specific examples given in Section~\ref{sec:population_landscape}, the scale parameter $b$ should be thought as a small parameter  proportional to $1/\sqrt{d}$. 
%
%
If $\nu$ is Gaussian, we can rewrite Eq.\eqref{eq:loss_property_gaussian} and show that $L$ verifies $\LPG(s,0)$ for $s \in \N^*$, referred to as the \textit{information exponent} of the problem in~\cite{arous2020online}.
An important consequence of satisfying $\LPG(k,b)$ is that the the population landscape is free of bad local minima outside the equatorial band $\Sigma_b:= \{ \theta \in \S \ , \   m_\theta \leq b\}$. Therefore, when $b$ is of scale $1/\sqrt{d}$, Lemma \ref{lem:initialization} indicates that one can efficiently produce initializations that avoid it.  
Hence, this property is the fundamental ingredient that enables the description of the path taken by SGD that we derive it in the next Section. Section~\ref{sec:population_landscape} is devoted to showcasing generic examples when this property is satisfied.
%
%

\section{Stochastic Gradient Descent under \texorpdfstring{$\LPG$}{LPG}}
\label{sec:sgd}

In this section, we derive the main results on the trajectory of the stochastic gradient descent. They state that the property $\LPG(s,b/\sqrt{d})$ is in fact \textit{sufficient} to recover the same quantitative guarantees as the one depicted in \cite{arous2020online}, despite the lack of Gaussianity of the distribution $\nu$. Recall that the recursion satisfied by the SGD iterates is given by Eq.\eqref{eq:SGD}. To describe their movement, let us introduce the following notations: for all $t \in \mathbb{N}^*$, we denote the normalization by $r_t := \left|\theta_t - \delta \nabla_\theta^\mathcal{S}  \ell(\theta_t, x_t)\right|$ and the martingale induced by the stochastic gradient descent as $M_t := l(\theta_t, x_t) - \E_\nu [ l(\theta_t, x)]$.

\paragraph{Moment growth assumptions.} To be able to analyse the SGD dynamics, we make the following assumptions on the moments of the martingale increments induced by the random sampling. To shorten notations, let us denote for all $\theta \in \R^d$, $x_\theta = x \cdot \theta \in \R$ and $C(u,v) = \phi'(u)  \phi(v)$, for $u,v \in \R$. 
\begin{assumption}[Moment Growth Assumption]
\label{ass:momentgrowth}
There exists a constant $K > 0$, independent of the dimension $d$, such that
\begin{align}
    \sup_{\theta \in \mathcal{S}_{d-1}}  \E_x \left[x_{\theta_*}^2 C^2(x_\theta, x_{\theta_*})  \right] \vee  \E_x \left[x_{\theta}^2\, C^2(x_\theta, x_{\theta_*})  \right] &\leq K, \hspace{2.125cm} \text{ and }, \\
    \sup_{\theta \in \mathcal{S}_{d-1}} \E_x \left[|x|^{2k} C^2(x_\theta, x_{\theta_*}) \right] &\leq Kd^k, \qquad \text{ for } k = 1,\,2.
\end{align}
\end{assumption}
%
A precise care is given to the dependency in the dimension in the upper bound to match the practical cases that we later discuss in Section~\ref{sec:population_landscape}. Note that these assumptions are {\color{black} typically true} for sub-gaussian random variables if $\phi$ belongs to a Sobolev regularity class. These assumptions are similar to the one given in Eqs.~(1.3)-(1.4) in \cite{arous2021online} for the Gaussian case. In all the remainder of the section we assume that Assumption \ref{ass:momentgrowth} is satisfied. 

\paragraph{Tracking the correlation.} Recall that the relevant signature of the dynamics is the one-dimensional correlation $m_{t} = \theta_t \cdot \theta^*$. For infinitesimal step-sizes $\delta \to 0$, it is expected that $\theta_t$ follow the spherical gradient flow $\dot{\theta}_t = - \nabla_\theta^\mathcal{S} L(\theta_t)$, that translates naturally on the summary statistics $m_t$ as the following time evolution
\begin{align}
\label{eq:idealized_ODE}
\dot{m_t} = - \nabla_\theta^\mathcal{S} L(\theta_t) \cdot \theta^*.
\end{align}
The main idea behind the result of this section is to show that, even if the energy landscape near $m = m_0$ is rough at scale $1/\sqrt{d}$, the noise induced by SGD does not prevent $m$ to \textit{grow as the idealized dynamics described by the ODE \eqref{eq:idealized_ODE}}. Let us write the iterative recursion followed by $(m_t)_{t \geq 0}$: with the notation recalled above, for $t \in \mathbb{N}^*$, we have 
\begin{align}
\label{eq:dynamics_main_m}
m_{t+1} =  \frac{1}{r_t}\left( m_t - \delta\nabla_\theta^S L(\theta_t) \cdot \theta^* - \delta\nabla_\theta^S M_t \cdot \theta^*  \right).
\end{align}
With this dynamics at hand, the proof consists in controlling both the discretization error through $r_t$ and the directional martingale induced by the term $\delta\nabla_\theta^S M_t \cdot \theta^*$.

\paragraph*{Weak recovery.} As it is the case for the gradient flow, most of the time spent by the SGD dynamics is near the equator, or more precisely in a band of the type $\Sigma_{b,c} = \{ \theta \in \S, \ b/\sqrt{d} \leq m_\theta \leq c/\sqrt{d} \}$, where $b<c$ are constants independent of the dimension. Hence, the real first step of the dynamics is to go out any of these bands. This is the reason why it is natural to define $S_a := \{\theta \in \S, \, m_\theta \geq a\}$, the spherical cap of level $a \in (0,1)$ as well as the hitting time
\begin{align}
    \tau^+_a := \inf\{ t \geq 0, \, m_{\theta_t} \geq a  \},
\end{align}
which corresponds to the first time $(\theta_t)_{t \geq 0}$ enters in $S_a$. We arbitrarily choose a numerical constant independent of dimension, say $a = 1/2$, and refer to the related hitting time $\tau^+_{1/2}$ as the \textit{weak recovery time} of the algorithm.
\begin{theorem}[Weak Recovery]
\label{thm:weak_recovery}
    Let $(\theta_t)_{t \geq 0}$ follow the SGD dynamics of Eq.\eqref{eq:SGD} and let $L$ satisfy $\LPG(s,b/\sqrt{d})$, with~$b >0 $ and $s \in \N^*$, then, conditionally on the fact that $m_0 \geq 5b/\sqrt{d} $, for any $ 0<\varepsilon \leq \varepsilon_*$, we have
    \begin{align}
       \quad \tau^+_{1/2} \leq \left\{
       \begin{array}{llll}
         d \cdot \mathsf{K}/\varepsilon & \text{ when $s = 1,\ \ \ $ and with the choice} & \delta = \varepsilon/d \\
         d \log^2(d) \cdot \mathsf{K}/\varepsilon & \text{ when $s = 2,\ \ \ $ and with the choice} & \delta = \varepsilon/(d \log d) \\
         d^{s-1} \cdot \mathsf{K}/\varepsilon & \text{ when $s \geq 3,\ \ \ $ and with the choice} & \delta = \varepsilon d^{-s/2}
    \end{array} 
    \right.
    \end{align}
    with probability at least $1 - \mathsf{K}\varepsilon$, for generic constants $\mathsf{K}, \varepsilon_* >0$ that depend solely on the link function $\phi$ and the distribution $\nu$.
\end{theorem}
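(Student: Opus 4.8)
The goal is to bound the hitting time $\tau_{1/2}^+$ for the correlation $m_t = \theta_t \cdot \theta^*$ to reach $1/2$, starting from $m_0 \geq 5b/\sqrt d$. Let me think about the structure of the argument, following the template of Ben Arous–Gheissari–Jagannath.

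The recursion is
$$m_{t+1} = \frac{1}{r_t}\left(m_t - \delta \nabla^{\mathcal S}_\theta L(\theta_t)\cdot\theta^* - \delta \nabla^{\mathcal S}_\theta M_t\cdot\theta^*\right),$$
so I should decompose the increment $m_{t+1} - m_t$ into (a) the drift $-\delta\nabla^{\mathcal S}_\theta L(\theta_t)\cdot\theta^*$, which by $\LPG(s,b/\sqrt d)$ is lower-bounded by $\delta C(1-m_t)(m_t - b/\sqrt d)^{s-1}$; (b) a martingale-difference term $-\delta \nabla^{\mathcal S}_\theta M_t\cdot\theta^*$ with variance controlled by Assumption (moment growth) — specifically $\mathbb E[(\nabla^{\mathcal S}_\theta M_t\cdot\theta^*)^2\mid\mathcal F_t]\lesssim K$ and higher moments $\lesssim K d^k$; and (c) the renormalization correction from $1/r_t$, where $r_t = |\theta_t - \delta\nabla^{\mathcal S}_\theta\ell(\theta_t,x_t)| = \sqrt{1 + \delta^2|\nabla^{\mathcal S}_\theta\ell|^2}$ (using orthogonality of the spherical gradient to $\theta_t$), so $1/r_t = 1 - \tfrac{\delta^2}{2}|\nabla^{\mathcal S}_\theta\ell|^2 + O(\delta^4\cdots)$; this produces a downward bias of order $\delta^2 \mathbb E|\nabla^{\mathcal S}\ell|^2 m_t$, which by the moment assumption with $|x|^{2k}$ scaling is $O(\delta^2 d\, m_t)$ and hence controlled provided $\delta^2 d \ll \delta$, i.e. $\delta \ll 1/d$ — matching the step-size choices in the three regimes.

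The core of the proof is a two-phase analysis split at the scale $m \sim c/\sqrt d$ (say $m$ from $5b/\sqrt d$ up to some fixed $\eta$, then $\eta$ to $1/2$). In the first phase, near the equator, the drift is tiny ($\sim \delta\, m^{s-1}$ with $m\sim 1/\sqrt d$, so $\sim \delta d^{-(s-1)/2}$), comparable to the martingale fluctuations, so I cannot argue the drift dominates pointwise. Instead I would run the standard argument: (i) show that with the chosen step-size the renormalization bias is negligible relative to the drift; (ii) control the martingale $\sum_{u\le t}\delta\nabla^{\mathcal S}_\theta M_u\cdot\theta^*$ via a maximal/Doob or Freedman-type inequality, using the conditional variance bound to get concentration on scale $\sqrt{\delta^2 K t}$; (iii) choose the horizon $T$ in each regime so that the accumulated drift $\sum_u \delta C(1-m_u)(m_u - b/\sqrt d)^{s-1}$ provably exceeds the fluctuation scale with probability $1 - \mathsf K\varepsilon$. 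For $s=1$ the drift is essentially constant, giving $T \asymp 1/\delta = d/\varepsilon$; for $s\ge 3$ the bottleneck is the slow escape from $m\sim d^{-1/2}$, and a careful summation (or comparison with the ODE $\dot m = Cm^{s-1}$, whose escape time from $d^{-1/2}$ to $\eta$ scales like $d^{(s-2)/2}$, times the additional $d^{s/2}/\varepsilon$ from the small step-size) yields $T \asymp d^{s-1}\mathsf K/\varepsilon$; $s=2$ is the borderline case producing the extra $\log^2 d$ from the $\int dm/m$ logarithm together with a $\log d$ in the step-size to kill the fluctuations. In the second phase $m \in [\eta, 1/2]$, the drift is $\Omega(\delta)$ uniformly (both $1-m$ and $(m-b/\sqrt d)^{s-1}$ are $\Omega(1)$), so a straightforward supermartingale/one-sided concentration argument shows $m$ crosses $1/2$ in $O(1/\delta)$ further steps, which is dominated by the phase-one count.

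The main obstacle is the first phase and making the probability bound $1 - \mathsf K\varepsilon$ come out cleanly with a horizon that is $O(d^{s-1}/\varepsilon)$ rather than, say, $O(d^{s-1}\log d /\varepsilon)$: this requires a delicate localization argument showing $m_t$ does not fall back below $b/\sqrt d$ (where $\LPG$ gives no control and the drift could vanish or reverse) before escaping — handled by a stopping-time argument on the event $\{m_t \ge b/\sqrt d \text{ for all } t\le \tau\}$ and a union bound over dyadic scales of $m$, controlling the martingale increments on each scale with Freedman's inequality using the exact variance proxy from Assumption (moment growth). The condition $m_0 \ge 5b/\sqrt d$ (rather than just $> b/\sqrt d$) provides the buffer needed for this not-falling-back estimate to close. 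I would organize the write-up as: Lemma A (renormalization estimate: $|1/r_t - 1 + \delta^2 B_t|$ bounds with $\mathbb E[B_t|\mathcal F_t]\lesssim Kd$), Lemma B (martingale concentration via Freedman), Lemma C (phase-one escape to level $\eta$), Lemma D (phase-two crossing to $1/2$), then assemble.
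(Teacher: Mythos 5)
Your plan captures the right skeleton — the recursion decomposition into drift (via $\LPG$), martingale noise, and renormalization bias; a one-sided bound on $1/r_t$; Freedman/Doob-type concentration; discrete Gr\"onwall for $s=2$ and Bihari--LaSalle for $s\geq 3$; and a stopping-time argument to guard against falling back below the $\LPG$ band (hence the buffer $m_0\geq 5b/\sqrt d$). These are all present in the paper's proof. But two of your technical choices diverge from the paper in ways worth flagging.

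First, the paper does \emph{not} split $[5b/\sqrt d,1/2]$ into two phases with a crossover at a fixed constant $\eta$. It works with the single stopping time $\tau^+_{1/2}\wedge\tau^-_{2b/\sqrt d}$ and applies the Gr\"onwall/Bihari--LaSalle comparison lemma to the summed inequality over the \emph{entire} range; the observation that for $m\in[2b/\sqrt d,1/2]$ one has $1-m\geq 1/2$ and $m-b/\sqrt d\geq m/2$ collapses your two phases into one comparison argument. Your two-phase plan would also work, but it is not what is done and arguably produces a less streamlined probability budget.

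Second, and more substantively, your statement that ``with the chosen step-size the renormalization bias is negligible relative to the drift'' is an oversimplification that would not carry the argument on its own, because $|\nabla_\theta l|^2$ has unbounded fluctuations even though $\mathbb E|\nabla_\theta l|^2\lesssim Kd$. The paper's key device, which you do not mention, is to introduce a truncation threshold $\MM$ (eventually set to $d^{3/2}$), split $\delta^2|m_t||\nabla_\theta l_t|^2$ into the parts below and above $\MM$, \emph{sacrifice half the drift} to form the quantity
$D_t := \tfrac C2(1-m_t)(m_t-b/\sqrt d)^{s-1} - \delta|m_t||\nabla_\theta l_t|^2\mathbf 1_{\{|\nabla_\theta l_t|^2\leq\MM\}}$,
show $D_t$ is a submartingale increment for $\delta\lesssim 1/d$, and apply Freedman's inequality to $\sum D_t$; the remaining tail with $|\nabla_\theta l_t|^2>\MM$ is then killed by Markov plus Cauchy--Schwarz using the fourth-moment bound $\mathbb E|\nabla_\theta l|^4\lesssim Kd^2$. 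This drift-absorption trick is precisely what makes the horizon come out as $d^{s-1}/\varepsilon$ without extra logarithms. Your proposal instead floats a dyadic union bound over scales of $m$ and explicitly worries about picking up spurious $\log d$ factors — that concern is real and is exactly the cost of the route you are proposing; the paper's submartingale construction avoids it entirely. If you pursue your own write-up you will need either to reproduce the $\MM$-truncation-plus-drift-sacrifice device or to find another way to control the second moment of $|\nabla_\theta l_t|^2$ that does not inflate the bound by $\log d$.
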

Let us comment on this result. It says that that the integer $s$ coming from the growth condition controls the hardness of exiting the equator of the sphere. Indeed, as can be seen in $\LPG(s,b/\sqrt{d})$, the larger the $s$, the smaller the gradient projection is and hence the less information the SGD dynamics has to move from the initialization. This result can be seen as an extension of \cite[Theorem 1.3]{arous2020online} valid only in the Gaussian case ($b = 0$). Furthermore, the Gaussian case shows that Theorem~\ref{thm:weak_recovery} is tight up to $\log(d)$ factors. Finally, note that the result is conditional to the fact that the initialization is larger that some constant factor of $1/\sqrt{d}$, which has at least constant probability to happen in virtue of Lemma~\ref{lem:initialization}. This probability can be lowered by any constant factor by sampling offline a constant factor (independent of $d$) of i.i.d. initializations and keeping the one that maximizes its correlation with $\theta^*$.
Finally, note that in all the cases covered by the analysis, it is possible to keep track of the constant $C$, and show that overall it depends only (i) on the property of $\nu$ w.r.t. the Gaussian on the one hand and (ii) on the Sobolev norm of the link function $\|\phi\|_{W_\nu^{1,4}}$ on the other hand. 

\paragraph*{Strong recovery.} We place ourselves \textit{after} the weak recovery time described in the previous question and want to understand if the $(\theta_t)_{t \geq 0}$ dynamics goes to $\theta^*$ and if yes, how fast it does so. This is what we refer to as \textit{the strong recovery} question, captured by the fact that the one-dimensional summary statistics  $m$ go towards $1$. Thanks to the Markovian property of the SGD dynamics, we have the equality between all time $s > 0$ marginal laws of
\begin{align*}
\left(\theta_{\tau^+_{\nicefrac{1}{2}} + s}\  \bigg| \  \tau^+_{1/2},\, \theta_{\tau^+_{1/2}}\right) \overset{\text{Law}}{=} \left(\theta_{s} \  \bigg| \  \theta_{s} = \theta_{\tau^+_{1/2}} \right),  
\end{align*}
and hence the strong recovery question is equivalent to study the dynamics with initialization that has already weakly recovered the signal, i.e. such that $m_\theta  = 1/2$. We show that this part of the loss landscape is very different that the equator band in which the dynamics spends most of its times: in all the cases, we can choose stepsizes independent of the dimension and show that the time to reach the vicinity of $\theta^*$ will be independent of $d$.
\begin{theorem}[Strong Recovery]
\label{thm:strong_recovery}
    Let $(\theta_t)_{t \geq 0}$ follow the SGD dynamics of Eq.\eqref{eq:SGD} and let $L$ satisfy $\LPG(s,b/\sqrt{d})$, with~$b >0 $ and $s \in \N^*$, then, for any $\varepsilon > 0$, taking $\delta = \varepsilon / d$,   we have that there exists a time $T > 0$, such that 
    \begin{align}
       |1 - m_T| \leq \varepsilon, \text{ and }\quad |T - \tau^+_{1/2}| \leq \mathsf{K} d \log(1/\varepsilon)\varepsilon^{-1} 
    \end{align}
    with probability at least $1 - \mathsf{K}\varepsilon$, for some generic $\mathsf{K} >0$ that depends solely of the link function $\phi$.
\end{theorem}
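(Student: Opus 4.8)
The plan is to follow the Markovian reduction spelled out just before the statement and study the chain started from $m_0 = 1/2$, tracking the single scalar $Z_t := 1 - m_t \ge 0$. Rewriting Eq.~\eqref{eq:dynamics_main_m} gives
\[
Z_{t+1} \;=\; Z_t \;-\; \Big(\tfrac{1}{r_t}\big(m_t + \delta g_t + \xi_t\big) - m_t\Big),
\]
where $g_t := -\nabla^{\mathcal S}_\theta L(\theta_t)\cdot\theta^*$ is $\mathcal F_t$-measurable and $\xi_t := -\delta\,\nabla^{\mathcal S}_\theta M_t\cdot\theta^*$ satisfies $\E[\xi_t\mid\mathcal F_t]=0$. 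The one structural fact that distinguishes this regime from the equatorial one is that, on $\{m_t\ge 1/4\}$ (equivalently $Z_t\le 3/4$) and for $d$ large enough that $b/\sqrt d \le 1/8$, the hypothesis $\LPG(s,b/\sqrt d)$ \emph{degenerates to a linear lower bound} $g_t \ge C(1-m_t)(1/8)^{s-1} =: C' Z_t$ with a dimension-free constant $C'$; in particular $s$ has dropped out of the rate, and on this event the deterministic part of the drift is already a contraction towards $m=1$.

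Next I would collect the one-step estimates. Since $\theta_t\perp\nabla^{\mathcal S}_\theta l(\theta_t,x_t)$ we have $r_t = \sqrt{1+\delta^2|\nabla^{\mathcal S}_\theta l(\theta_t,x_t)|^2}$, hence $1 - \tfrac{\delta^2}{2}|\nabla^{\mathcal S}_\theta l|^2 \le \tfrac{1}{r_t} \le 1$. Using Assumption~\ref{ass:momentgrowth} (with $k=1$) together with $\phi\in W^{1,4}_\nu$ to bound $\E[|\nabla^{\mathcal S}_\theta l(\theta_t,x_t)|^2\mid\mathcal F_t]\le C_1 d$, $\E[\xi_t^2\mid\mathcal F_t]\le C_1\delta^2$, and the few higher moments needed for the cross terms $g_t(\tfrac1{r_t}-1)$ and $\xi_t(\tfrac1{r_t}-1)$, a short computation yields, on $\{Z_t\in[0,3/4],\ m_t\ge b/\sqrt d\}$,
\[
\E[Z_{t+1}-Z_t\mid\mathcal F_t] \ \le\ -\,\delta C' Z_t \;+\; C_0\,\delta^2 d, \qquad \E\big[(Z_{t+1}-Z_t)^2\mid\mathcal F_t\big]\ \le\ C_0\,\delta^2,
\]
for a constant $C_0$ depending only on $\phi$ (through $\|\phi\|_{W^{1,4}_\nu}$), $\nu$ and $C'$. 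With the prescribed step $\delta = \varepsilon/d$ the bias is $C_0\delta^2 d = C_0\varepsilon\,\delta$, so setting $\hat\varepsilon := (4C_0/C')\varepsilon$ we obtain $\E[Z_{t+1}-Z_t\mid\mathcal F_t]\le -\tfrac12\delta C' Z_t \le 0$ as soon as $Z_t\in[\hat\varepsilon,3/4]$.

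The core is then a stopped-supermartingale estimate. Put $\tau_g := \inf\{t: Z_t\le\hat\varepsilon\}$, $\tau_b := \inf\{t: Z_t\ge 3/4\}$, $\tau := \tau_g\wedge\tau_b$, $c := C'/2$, and $T := \mathsf K\, d\log(1/\varepsilon)/\varepsilon$. On $\{t<\tau\}$ the previous display applies, so $N_t := (1-c\delta)^{-(t\wedge\tau)}Z_{t\wedge\tau}$ is a nonnegative supermartingale and $\E[N_T]\le N_0 = 1/2$. On $\{\tau>T\}$ we have $Z_T>\hat\varepsilon$ and $(1-c\delta)^{-T}\ge e^{c\delta T} = \varepsilon^{-c\mathsf K}$, so $N_T\ge \varepsilon^{-c\mathsf K}\hat\varepsilon\ge \varepsilon^{-1}$ once $\mathsf K$ is a large enough constant, and Markov's inequality gives $\P(\tau>T)\le \tfrac12\varepsilon$. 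To exclude the "bad" alternative, note that on $\{t<\tau\}$ the drift part of $Z_t - Z_0$ is nonpositive, so $\{\tau_b\le\tau_g\wedge T\}$ forces the martingale part $\sum_{s<t}\big(Z_{s+1}-Z_s - \E[Z_{s+1}-Z_s\mid\mathcal F_s]\big)$ to reach $1/4$ before time $T$; by Doob's $L^2$ maximal inequality this has probability at most $16\,C_0\delta^2 T = O\big(\varepsilon\log(1/\varepsilon)/d\big)$, which is $\le\varepsilon$ in the high-dimensional regime. Combining, with probability at least $1-\mathsf K\varepsilon$ we have $\tau_g\le T$ and $\tau_g<\tau_b$; on this event $\hat T := \tau_g$ satisfies $0\le Z_{\hat T}\le \hat\varepsilon + |Z_{\tau_g}-Z_{\tau_g-1}| = O(\varepsilon)$, i.e. $|1-m_{\hat T}|=O(\varepsilon)$, while $\hat T\le T$. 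Rescaling the internal target accuracy by a fixed constant turns $O(\varepsilon)$ into $\varepsilon$ and yields the stated bounds, with $\mathsf K$ depending only on $\phi$ (through $\|\phi\|_{W^{1,4}_\nu}$ and the $\LPG$ constant $C$).

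The main obstacle is the one-step drift estimate, and in particular controlling the bias created by the normalization $1/r_t$ and its correlation with the SGD noise $\xi_t$, \emph{uniformly over} $\theta\in\S$, from the moment bounds of Assumption~\ref{ass:momentgrowth} and the $W^{1,4}_\nu$-regularity alone: keeping this bias at order $\delta^2 d$, rather than $\delta^2 d^{3/2}$ or worse, is exactly what makes the step $\delta=\varepsilon/d$ admissible and is the source of the single factor of $d$ in the time budget. Everything downstream is a routine supermartingale-plus-Doob argument, and is markedly simpler than the weak-recovery analysis because, once $m\ge 1/4$, $\LPG(s,\cdot)$ collapses to the linear contraction $g_t\ge C' Z_t$ irrespective of $s$.
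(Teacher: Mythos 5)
Your proposal is correct and takes a genuinely different route from the paper. Both proofs share the same set-up: restart at $m_0=1/2$, track $Z_t=1-m_t$, and observe that once $m \gtrsim 1/3$ the hypothesis $\LPG(s,b/\sqrt d)$ collapses to a dimension-free linear contraction $g_t \geq C'Z_t$ in which the information exponent $s$ no longer appears. The paper then proceeds \emph{pathwise}: it decomposes the summed recursion for $u_t=1-m_t$ into an ODE term plus three explicit error terms (a martingale, a truncated discretization piece, a cubic remainder), bounds each error term separately with Doob/Markov/Freedman, conditions on the intersection of these high-probability events, and finishes with a deterministic discrete Gr\"onwall inequality. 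You instead make a \emph{one-step conditional} drift/variance estimate, $\E[Z_{t+1}-Z_t\mid\mathcal F_t]\le -\delta C'Z_t + C_0\delta^2 d$ and $\E[(Z_{t+1}-Z_t)^2\mid\mathcal F_t]\le C_0\delta^2$, package it into the multiplicative supermartingale $N_t=(1-c\delta)^{-(t\wedge\tau)}Z_{t\wedge\tau}$, and obtain the hitting-time bound from optional stopping plus Markov, handling the excursion below $m=1/4$ separately with Doob's $L^2$ maximal inequality. This is a legitimate and arguably cleaner alternative; the one point you flag as the main obstacle — controlling the bias from $1/r_t$ and its correlation with $\xi_t$ uniformly at scale $O(\delta^2 d)$ — does go through with the moment bounds of Assumption~\ref{ass:momentgrowth} via Lemma~\ref{lem:tech_bounds}: one has $\E[(1-1/r_t)^2\mid\mathcal F_t]=O(\delta^4 d^2)$ and $\E[\xi_t^2\mid\mathcal F_t]=O(\delta^2)$, so the Cauchy--Schwarz cross term is $O(\delta^3 d)\ll\delta^2 d$, and the dominant bias $m_t(1-\E[1/r_t\mid\mathcal F_t])\le Kd\delta^2$ gives exactly the floor $\hat\varepsilon\simeq\delta d$ you use. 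Your supermartingale route avoids having to union several high-probability events, at the cost of having to argue the ``bad alternative'' $\tau_b\le\tau_g\wedge T$ separately; the paper handles the analogous issue via the stopping at $\tau^-_{1/3}$, and the two Doob-type computations there and in your bad-set bound are essentially the same. Both arguments arrive at the same $T\simeq \mathsf K\, d\log(1/\varepsilon)/\varepsilon$ budget and the same probability $1-\mathsf K\varepsilon$.
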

As introduced above, the important messages conveyed by this theorem are that (i) there is no difference between the different parameters setups captured by the information exponent $s$, and (ii) the time it takes to reach an $\varepsilon$-vicinity of $\theta^*$ is always strictly smaller than the one needed to exit the \textit{weak recovery phase} (e.g. $d$ compared to $d^{s-1}$ when $s \geq 3$). This means that \textit{the dynamics spends most of its time escaping the mediocrity}. Remark that we decided to present Theorem~\ref{thm:strong_recovery} resetting the step-size $\delta$ to put emphasis on the intrinsic difference between the two phases. Yet, we could have kept the same stepsize as in the weak recovery case: this obviously would slow down unnecessarily the second phase.

\begin{figure}
\begin{center}
\begin{tikzpicture}
\draw (-4,0) arc (180:360:4 and 1);
\draw [dashed] (-4,0) arc (180:0:4 and 1);
\draw (-3.95,.5) arc (180:360:3.95 and 1);
\draw [dashed] (-3.95,.5) arc (180:0:3.95 and 1);
\draw [thick] (-4,0) arc (180:0:4 and 4);

\draw [teal, thick] plot [smooth, tension=0.5] coordinates {
(2,-1.25,-1.75) (2.0653304520112862, -1.0432818908769334, -1.7570889677693344) (1.653304520112862, -1.35432818908769334, -1.7570889677693344) (1.53304520112862, -1.15432818908769334, -1.7570889677693344) (1.653304520112862, -1.15432818908769334, -1.7570889677693344) (1.653304520112862, -1.15432818908769334, -1.7570889677693344) (1.53304520112862, -1.15432818908769334, -1.7570889677693344) (1.353304520112862, -1.5432818908769334, -1.7570889677693344) (1.253304520112862, -1.15432818908769334, -1.7570889677693344)  (-0.8643467206473194, -1.2741589636098296, -1.713346571687929) (-0.5854241939363898, -1.5603375176592302, -1.7026887285749814) (-3.10167004625995113, -1.0207262123095036, -1.7403753036061693) (-4.0015938188224307037, -1.076955164501522, -1.797513647919381) (0.03958282309186134, -1.4126458731857201, -1.8082986090450741) (0.4304065824253258, -1.5714797424606084, -1.7826566209500503) (-0.18290568047143796, -1.3714851280644934, -1.665058121198839) (-0.2141140616704278, -1.262736795657368, -1.7735105695338182) (-0.7359662339324988, -1.1933133412385735, -1.7500513117700844) (-1.080017844455942, -1.4440976104037713, -1.7202310698906578) (-0.10790460267119584, -1.36709352843529, -1.7694202444428277) (-0.5717959946699571, -1.6709912933384345, -1.7287576853165842) (-1.0810227730873538, -1.2678003390731818, -1.7070830602634035) (-1.0534246646049377, -1.1545710992352378, -1.7360956651453512) (-0.7765052753289565, -1.289319675143908, -1.7077561390856055) (-3.3548928604060262, -1.1867863636986526, -1.781372913021331) (-4.29110656221205145, -0.6034746092966507, -1.7284470803877188) (-2.8352886387760792, -1.4516559311541479, -1.7389164213779167) 
(-1.86715707903012986, -1.132604490560212, -1.7959886616293295)  (-2.36715707903012986, -0.132604490560212, -1.7959886616293295) (-2.36715707903012986, 1.132604490560212, -1.7959886616293295) (-1.96715707903012986, 2.132604490560212, -1.7959886616293295) (-0.96715707903012986, 3.132604490560212, -1.4959886616293295) (0, 4.0, 0)
};

\draw (-1.86715707903012986, -1.42604490560212, -1.7959886616293295) node[anchor=south] () {\large \color{black} \textbullet};

\draw (-1.46715707903012986, -0.802604490560212, -1.7959886616293295) node() {\large \color{black} $\theta_{\mathsf{w}}$};

\draw [thick, thick, ->] (0,0) -- ++(0,4);
\draw (0.25,4.35) node () {\large $\theta^*$};

\draw [thick, thick, ->] (0,0,0) -- ++(2,-1.25,-1.75);
\draw (2.2,-1.2,-2) node () {\large $\theta_0$};

\draw [black, thick, <->] (0,0) ++(172.5:4.25cm) arc (172.5:180:4.25cm) node [midway, left] () {$b/\sqrt{d}\ $};
\end{tikzpicture}
\caption{Sketch of the SGD dynamics. After a long time spent in a band of typical size $1/\sqrt{d}$, the dynamics escapes at \textit{weak recovery} point $\theta_{\mathsf{w}}$ and then goes rapidly to the ground-truth $\theta^*$.}
\label{fig:nice_tikz}
\end{center}
\end{figure}
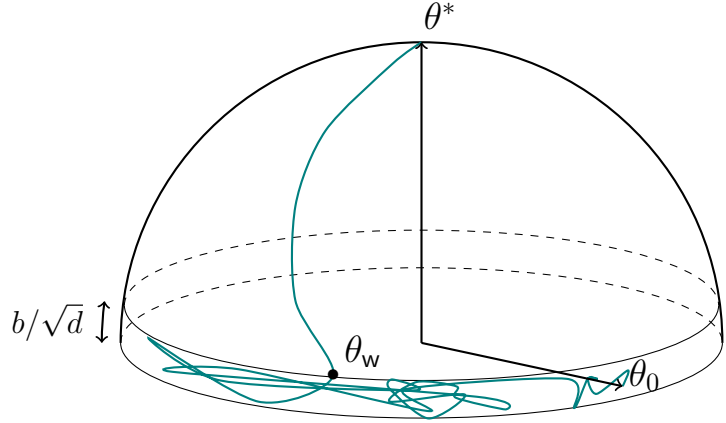

\section{Typical cases of loss landscape with \texorpdfstring{$\LPG$}{property} property}
\label{sec:population_landscape}

In this section, we showcase two prototypical cases where the $\LPG$ holds true: the section \ref{subsec:sym} deals with the spherically symmetric setting, whereas the section \ref{subsec:nonsym} describes a perturbative regime where the distribution is \textit{approximately} Gaussian is a quantitative sense.

\subsection{The symmetric case}
\label{subsec:sym}
We start our analysis with the spherically symmetric setting. We show that a spherical harmonic decomposition provides a valid extension of the Hermite decomposition in the Gaussian case, leading to essentially the same quantitative performance up to constant (in dimension) factors. 

\paragraph*{Spherical Harmonic Representation of the Population Loss.}
 We express the data distribution $\nu$ as a mixture of uniform measures $\nu = \int_0^\infty \tau_{r,d}\, \rho(dr) $, where $\tau_{r,d}=\mathrm{Unif}(r \S)$. 
Let $\tau_d = \tau_{1,d}$ and $u_d\in \mathcal{P}([-1,1])$ be the projection of $\tau_d$ onto one direction, with density given in close form by $u_d(dt) = Z^{-1}(1-t^2)^{(d-3)/2} \mathbf{1}(|t|\leq 1) dt $, where $Z$ is a normarlizing factor. 
Let $\{P_{j,d}\}_{j \in \mathbb{N}}$ be the orthogonal basis of Gegenbauer polynomials  of $L^2_{u_d}([-1,1])$, normalized such that $P_{j,d}(1) = 1$ for all~$j, d$. For each $r>0$, consider 
    $$l_r(\theta) := \langle\phi_\theta, \phi_{\theta^*}  \rangle_{\tau_{r,d}}  =   \langle \phi^{(r)}_\theta, \phi^{(r)}_{\theta^*}\rangle_{\tau_{d}}~,$$
where we define $\phi^{(r)}: [-1,1] \to \mathbb{R}$ such that $\phi^{(r)}(t):= \phi( r t)$. We write its decomposition in~$L^2_{u_d}([-1,1])$ as 
    $\phi^{(r)} = \sum_{j} \alpha_{j,r,d} P_{j,d}~,~\text{with }~\alpha_{j,r,d} = \frac{\langle \phi^{(r)}, P_{j,d} \rangle}{\| P_{j,d} \|^2}~.$
    Let $\Omega_{d}$ be the Lebesgue measure of $\S$, and $N(j,d) = \frac{2d+j-2}{d}\binom{d+j-3}{d-1}$ the so-called \textit{dimension} on the spherical harmonics of degree $j$ in dimension $d$. 
  From the Hecke-Funk representation formula \cite[Lemma 4.23]{frye2012spherical} and the chosen normalization $P_{j, d} (1) = 1$, we have $\| P_{j,d} \|_2 = \left(\frac{\Omega_{d-1}}{\Omega_{d-2} N(j,d)} \right)^{1/2}$ \cite[Proposition 4.15]{frye2012spherical} and obtain finally
  \begin{align}
      l_r(\theta) &=  
      \sum\nolimits_{j} \bar{\alpha}_{j,r,d}^2 P_{j,d}( \theta \cdot \theta^*)~,
  \end{align}
where we defined for convenience $\bar{\alpha}_{j,r,d} = \alpha_{j,r,d} / \sqrt{N(j,d)}$. As a result, it follows that the overall loss writes as solely the correlaton $m_\theta = \theta \cdot \theta^*$ as  
    \begin{align}
    \label{eq:lossmain}
        L(\theta) 
        = 2\|\phi\|^2_{L^2_\nu} -2 \sum\nolimits_{j} {\beta}_{j,d} P_{j,d}(m_\theta)~
        := \ell(m) ~,
    \end{align}
    where ${\beta}_{j,d} = \int_0^\infty \bar{\alpha}_{j,r,d}^2 \rho(dr) \geq 0$. 
Unsurprisingly, we observe that, thanks to the spherical symmetry and analogous to the Gaussian case, the loss still admits~$m=\theta \cdot \theta^*$ as a summary statistics. Yet, it is represented in terms of Gegenbauer polynomials, rather than monomials as in the Gaussian case. The monomial representation is a consequence of the stability of the Gaussian measure, as seen by the fact that 
$\langle \phi_\theta, \phi_{\tilde{\theta}} \rangle_{\gamma_d} = \langle \phi, \mathsf{A}_{\theta \cdot \tilde{\theta}} \phi \rangle_{\gamma}~,$
where $(\mathsf{A}_m f)(t) = \mathbb{E}_{z \sim \gamma} [f ( mt + \sqrt{1-m^2}z)]$ has a \emph{semi-group} structure (it is even known in fact as the Ornstein-Ulhenbeck semi-group). 

Let $\eta \in \mathcal{P}(\mathbb{R})$ be the marginal of $\nu$ along any direction. The following proposition gives a closed form formula of the coefficients $\beta_{j,d}$, represented as integrals over the radial distribution $\rho$ and projections of the link function $\phi$:
\begin{proposition}[Loss representation]
\label{prop:basicdec}
The $\beta_{j,d}$ defined in \eqref{eq:lossmain} have the integral representation
    \begin{equation}
        \beta_{j,d} = \langle \phi, \mathcal{K}_j \phi \rangle_{L^2_\eta} ~,
    \end{equation}
    where $\mathcal{K}_j$ is a positive semi-definite integral operator of $L^2_\eta$ that depends solely on $\rho$ and $\phi$.     
\end{proposition}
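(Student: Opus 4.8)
\emph{Proof plan.} The plan is to unpack the definition of $\beta_{j,d}$ into an explicit symmetric bilinear form of $\phi$ and then recognise that form as the quadratic form of a factorised --- hence positive semi-definite --- operator on $L^2_\eta$. Two structural facts will do most of the work. First, by construction the one-dimensional marginal $\eta$ is the scale mixture $\eta=\int_0^\infty(r\cdot u_d)\,\rho(dr)$, where $r\cdot u_d$ denotes the law of $rt$ with $t\sim u_d$; consequently, for every measurable $f\geq 0$, $\int_0^\infty\!\int_{-1}^1 f(rt)\,u_d(dt)\,\rho(dr)=\int_{\R}f\,d\eta$. Second, since $\eta$ is also the law of $x\cdot\theta$ under $\nu$ for every $\theta\in\S$, we have $\|\phi\|_{L^2_\eta}=\|\phi_\theta\|_{L^2_\nu}<\infty$ by the standing Sobolev assumption; in particular $\phi\in L^2_\eta$.

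First I would rewrite $\beta_{j,d}$. Inserting $\bar\alpha_{j,r,d}=\alpha_{j,r,d}/\sqrt{N(j,d)}$ and $\alpha_{j,r,d}=\langle\phi^{(r)},P_{j,d}\rangle_{u_d}/\|P_{j,d}\|_{u_d}^2$, and abbreviating $G_{j,r}(\psi):=\int_{-1}^1\psi(rt)\,P_{j,d}(t)\,u_d(dt)$ together with $c_{j,d}:=\bigl(N(j,d)\,\|P_{j,d}\|_{u_d}^4\bigr)^{-1}$ (an explicit function of $j,d$ via the already-quoted Hecke-Funk identity $\|P_{j,d}\|_{u_d}^2=\Omega_{d-1}/(\Omega_{d-2}N(j,d))$), one gets $\beta_{j,d}=c_{j,d}\int_0^\infty G_{j,r}(\phi)^2\,\rho(dr)$. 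Next I would introduce the linear map $\mathsf{T}_j\colon\phi\mapsto\bigl(r\mapsto G_{j,r}(\phi)\bigr)$ and check it is bounded from $L^2_\eta$ into $L^2_\rho:=L^2((0,\infty),\rho)$: Cauchy-Schwarz in $t$ gives $G_{j,r}(\phi)^2\leq\|P_{j,d}\|_{u_d}^2\int_{-1}^1\phi(rt)^2\,u_d(dt)$, and integrating against $\rho$ and using the mixture identity with $f=\phi^2$ yields $\|\mathsf{T}_j\phi\|_{L^2_\rho}^2\leq\|P_{j,d}\|_{u_d}^2\,\|\phi\|_{L^2_\eta}^2$; the same estimate (applied to $|\phi|$) legitimates the Fubini-Tonelli steps needed to expand the squares. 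Then $\beta_{j,d}=c_{j,d}\,\|\mathsf{T}_j\phi\|_{L^2_\rho}^2=\langle\phi,\mathcal{K}_j\phi\rangle_{L^2_\eta}$ with $\mathcal{K}_j:=c_{j,d}\,\mathsf{T}_j^{*}\mathsf{T}_j$, a bounded self-adjoint operator on $L^2_\eta$ depending only on $\rho$ and $d$ (through $u_d$, $N(j,d)$ and $\|P_{j,d}\|_{u_d}$, not on $\phi$); positive semi-definiteness is then immediate, since $\langle g,\mathcal{K}_j g\rangle_{L^2_\eta}=c_{j,d}\int_0^\infty G_{j,r}(g)^2\,\rho(dr)\geq 0$ for every $g\in L^2_\eta$.

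Finally, to exhibit $\mathcal{K}_j$ as a genuine integral operator $(\mathcal{K}_j g)(s)=\int k_j(s,s')\,g(s')\,\eta(ds')$ I would change variables $s=rt$ inside $G_{j,r}$: writing $m_r$ for the law of $rt$ with $t\sim u_d$, one has $G_{j,r}(\phi)=\int_{\R}\phi(s)\,P_{j,d}(s/r)\,\frac{dm_r}{d\eta}(s)\,\eta(ds)$ as soon as $m_r\ll\eta$, so that $\mathsf{T}_j$ has kernel $K_j(r,s)=P_{j,d}(s/r)\,\frac{dm_r}{d\eta}(s)$ and $k_j(s,s')=c_{j,d}\int_0^\infty K_j(r,s)\,K_j(r,s')\,\rho(dr)$. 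I expect this disintegration to be the only genuinely delicate point: $m_r\ll\eta$ does hold for $\rho$-a.e.\ $r$ (each $m_r$ is one of the mixands of $\eta$), but the density $\frac{dm_r}{d\eta}$ may diverge at the edge $|s|=r$ of $\mathrm{supp}(m_r)$ --- e.g.\ near the top of $\mathrm{supp}(\rho)$ --- and if $\rho$ charges the origin then $\eta$ carries an atom there, so $k_j$ need not be finite $\eta\otimes\eta$-a.e.\ without extra hypotheses. Under mild regularity on $\rho$ (no atom at $0$, and a density bounded below on compacts of its support, say) the kernel $k_j$ is well defined and $\mathcal{K}_j$ is a bona fide integral operator; in full generality one simply retains the operator identity $\beta_{j,d}=\langle\phi,c_{j,d}\,\mathsf{T}_j^{*}\mathsf{T}_j\,\phi\rangle_{L^2_\eta}$ from the previous step, which already provides the positive semi-definite representation and is all that the downstream $\LPG$ analysis uses.
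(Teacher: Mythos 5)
Your argument is correct and is, at heart, the same computation the paper performs: expand $\beta_{j,d}=\int \bar\alpha_{j,r,d}^2\,\rho(dr)$ using the Gegenbauer projection coefficients, change variables $s=rt$, and read off a quadratic form in $\phi$ over $L^2_\eta$. Where you differ is in the packaging of positive semi-definiteness: the paper writes out the kernel $\mathcal{K}_j(t,t')\propto \int P_j(r^{-1}t)P_j(r^{-1}t')\,\bar u_d(r^{-1}t)\,\bar u_d(r^{-1}t')\,\rho(dr)$ and observes it is an integral of rank-one PSD kernels, whereas you factorise $\mathcal{K}_j=c_{j,d}\,\mathsf{T}_j^{*}\mathsf{T}_j$ through the bounded map $\mathsf{T}_j\colon L^2_\eta\to L^2_\rho$, which makes PSD-ness and boundedness structural rather than read off a formula. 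This framing is a genuine improvement in rigor on one point you correctly flag: the paper writes the kernel $\mathcal{K}_j(t,t')$ as if it were automatically finite $\eta\otimes\eta$-a.e., but the conditional densities $\bar u_d(r^{-1}t)$ can blow up at $|t|=r$ or degenerate if $\rho$ has an atom at $0$; your operator identity $\beta_{j,d}=\langle\phi,c_{j,d}\mathsf{T}_j^{*}\mathsf{T}_j\phi\rangle$ sidesteps this entirely and is all that the downstream $\LPG$ analysis uses. You also correctly observe that $\mathcal{K}_j$ depends on $\rho$, $j$ and $d$ but \emph{not} on $\phi$, which repairs what appears to be a slip in the proposition's statement (``depends solely on $\rho$ and $\phi$''). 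The only small thing you omit relative to the appendix is the auxiliary energy identity $\mathbb{E}_\eta[\phi^2]=\frac{\Omega_{d-2}}{\Omega_{d-1}}\sum_j\beta_{j,d}$, which the paper proves in passing but which is not part of the proposition being asked about.
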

Note that a closed form expression of $\mathcal{K}_j$ can be found in Appendix \ref{secapp:basicdec}. The above proposition is in fact the stepping stone to calculate properly the \textit{information exponent} that plays a crucial role in the property $\LPG$. This is given through the link between the spectrum $\beta_{j,d}$ and the decomposition of $\phi$ in the $L^2_\eta$ orthogonal basis of polynomials, that we denote by $\{ q_j\}_j$. 
\begin{proposition}
\label{prop:infoexpo_sphere}
    Let $s = \inf\{ j; \beta_{j,d} > 0\}$ and $\tilde{s} = \inf\{ j; \langle \phi, q_j\rangle_{\eta} \neq 0\}$. Then $s \leq \tilde{s}$. 
\end{proposition}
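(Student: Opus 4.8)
The plan is to argue by contraposition, reducing the inequality $s \leq \tilde s$ to the statement that $\beta_{j,d} > 0$ for at least one index $j \leq \tilde s$. Indeed, since $s = \inf\{ j : \beta_{j,d} > 0\}$, exhibiting any $j \leq \tilde s$ with $\beta_{j,d} > 0$ immediately yields $s \leq j \leq \tilde s$. We may assume $\tilde s < \infty$, otherwise there is nothing to prove; then, by definition of the infimum, $\langle \phi, q_{\tilde s} \rangle_\eta \neq 0$.

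Suppose, for contradiction, that $\beta_{j,d} = 0$ for every $j \in \{0, 1, \dots, \tilde s\}$. Recall from the construction preceding \eqref{eq:lossmain} that $\beta_{j,d} = \int_0^\infty \bar\alpha_{j,r,d}^2 \, \rho(dr)$, an integral of a nonnegative function against $\rho$; hence $\beta_{j,d} = 0$ forces $\bar\alpha_{j,r,d} = 0$, equivalently $\langle \phi^{(r)}, P_{j,d} \rangle_{u_d} = 0$, for $\rho$-almost every $r$. Intersecting the finitely many $\rho$-negligible exceptional sets indexed by $j \leq \tilde s$, we obtain that for $\rho$-a.e. $r > 0$ the projected link $\phi^{(r)}(t) = \phi(rt)$ is orthogonal in $L^2_{u_d}([-1,1])$ to $P_{0,d}, \dots, P_{\tilde s, d}$, and therefore to every polynomial of degree at most $\tilde s$, since $\{P_{j,d}\}_{j=0}^{\tilde s}$ is a basis of $\mathbb{R}_{\leq \tilde s}[t]$.

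To reach the contradiction I use the radial description of the one-dimensional marginal: since $\nu = \int_0^\infty \tau_{r,d} \, \rho(dr)$ with $\tau_{r,d} = \mathrm{Unif}(r\S)$, projection onto a fixed direction sends $\tau_{r,d}$ to the pushforward of $u_d$ by $t \mapsto rt$, so that $\int_{\mathbb{R}} g \, d\eta = \int_0^\infty \int_{-1}^1 g(rt) \, u_d(dt) \, \rho(dr)$ for every test function $g$. Applying this to $g = \phi \, q_{\tilde s}$ (the exchange being legitimate by Tonelli, since $\int |\phi q_{\tilde s}| \, d\eta \leq \|\phi\|_{L^2_\eta} \|q_{\tilde s}\|_{L^2_\eta} < \infty$ by the standing assumption $\phi_\theta \in W^{1,4}_\nu$ and the normalization of $q_{\tilde s}$), we get $\langle \phi, q_{\tilde s} \rangle_\eta = \int_0^\infty \langle \phi^{(r)}, q_{\tilde s}(r \, \cdot) \rangle_{u_d} \, \rho(dr)$. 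For each fixed $r$ the map $t \mapsto q_{\tilde s}(rt)$ is a polynomial in $t$ of degree at most $\tilde s$, hence by the previous paragraph the inner integrand vanishes for $\rho$-a.e. $r$. Consequently $\langle \phi, q_{\tilde s} \rangle_\eta = 0$, contradicting $\langle \phi, q_{\tilde s} \rangle_\eta \neq 0$, which proves $s \leq \tilde s$.

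\textbf{Main obstacle.} The argument is essentially bookkeeping once the right reformulation is in place, so I do not expect a substantive obstacle. The two points requiring care are the null-set manipulation (intersecting the finitely many $\rho$-negligible sets, one per index $j \leq \tilde s$) and the identification of $\eta$ as the $\rho$-mixture of the dilated projected-uniform laws, which is precisely what allows a degree-$\tilde s$ polynomial in the one-dimensional variable to be recognized, fiberwise in $r$, as a degree-$\leq \tilde s$ polynomial in the Gegenbauer variable $t$ and hence absorbed into $\mathrm{span}\{P_{0,d},\dots,P_{\tilde s,d}\}$. (One could alternatively phrase the same computation directly through the operator $\mathcal{K}_j$ of Proposition~\ref{prop:basicdec}, using that $\beta_{j,d} = 0$ iff $\phi \in \ker \mathcal{K}_j$, but the integral form above is the most transparent.)
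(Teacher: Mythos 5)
Your proof is correct and follows essentially the same route as the paper's: both reduce to the observation that $\beta_{j,d} = \int_0^\infty \bar\alpha_{j,r,d}^2\,\rho(dr) = 0$ forces $\alpha_{j,r,d} = 0$ for $\rho$-a.e.\ $r$, and both then use the radial disintegration of $\eta$ together with the fact that for each fixed $r$ a degree-$j'$ polynomial $Q(rt)$ expands in $\{P_{0,d},\dots,P_{j',d}\}$ to conclude the corresponding moment of $\phi$ against $\eta$ vanishes. The only differences are cosmetic — you phrase the argument by contradiction where the paper argues directly, and you spell out the Tonelli and null-set manipulations the paper leaves implicit.
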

Thus, the number of vanishing moments of $\phi$ with respect to the data marginal $\eta$ provides an upper bound on the `effective' information exponent of the problem $s$, as we will see next. 

\paragraph{Local Polynomial Growth.} 
From (\ref{eq:lossmain}), and as
$\nabla_\theta^{\mathcal{S}} L(\theta) = \ell'(m) (\theta^* - m \theta)~,$ we directly obtain
\begin{equation}
-\nabla_\theta^{\mathcal{S}} L(\theta) \cdot \theta^* = -(1 - m^2) \ell'(m) =  2 (1 - m^2) \sum\nolimits_{j} \beta_{j,d} P'_{j,d}(m)~,    
\end{equation}
%
which is the quantity we want to understand to exhibit the property $\LPG$ in this case. Hence, we now turn into the question of obtaining sufficient guarantees on the coefficients $(\beta_{j,d})_j$ that ensure local polynomial growth.
Since the typical scale of initialization for $m$ is $\Theta(1/\sqrt{d})$, 
our goal is to characterize sufficient conditions of local polynomial growth with $b = O(1/\sqrt{d})$. 

For that purpose, let us define two key quantities of Gegenbauer polynomials:
\begin{align}
    \upsilon_{j,d} & := -\min_{t \in (0,1)} P_{j,d}(t)~, ~~&\text{(smallest value)}\\
    z_{j,d} & := \arg\max\left\{ t \in (0,1); P_{j,d}(t)=0 \right\}~.~~&\text{(largest root)} 
\end{align}
We have the following sufficient condition based on the spectrum $(\beta_{j,d})_j$:
\begin{restatable}[Spectral characterization of $\LPG$]{proposition}{sufficientcond}
\label{prop:sufficientcond}
    Suppose there exist constants $K,C>0$ and $s \in \mathbb{N}$ such that we both have $\beta_{s,d} \geq C$ and $\sum_{j > s} \beta_{j,d} j (j+d-2)  \upsilon_{j-1,d+2} \leq K d^{(3-s)/2}$~.
    Then, taking $s^*$ as the infimum of such $s$, $L$ has the property $\LPG(s^*-1,z_{s^*,d})$. In particular, whenever $s^*\ll d$, we have $z_{s^*,d} \leq 2\sqrt{s^*/d}$.     
\end{restatable}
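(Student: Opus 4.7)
The plan is to isolate the $j = s^*$ contribution in the spectral expansion
\begin{equation*}
-\nabla_\theta^{\mathcal{S}} L(\theta) \cdot \theta^* = 2(1-m^2) \sum_j \beta_{j, d}\, P'_{j, d}(m)
\end{equation*}
and show this main term dominates the tail $j > s^*$ on the interval $m \in [z_{s^*, d}, 1]$. The central tool is the Gegenbauer derivative identity $P'_{j, d}(t) = \tfrac{j(j+d-2)}{d-1} P_{j-1, d+2}(t)$, which recasts every derivative as a Gegenbauer polynomial in dimension $d+2$ and exactly matches the shape of the tail hypothesis.

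For the main term, the key observation is that $P_{s^*-1, d+2}$ is strictly positive on $[z_{s^*, d}, 1]$. Indeed, since $z_{s^*, d}$ is the largest root of $P_{s^*, d}$ and $P_{s^*, d}(1) = 1$, Rolle's theorem forces all $s^* - 1$ roots of $P'_{s^*, d}$ (equivalently of $P_{s^*-1, d+2}$) to lie strictly below $z_{s^*, d}$; positivity at $m = 1$ then fixes the sign throughout $[z_{s^*, d}, 1]$. A quantitative lower bound of the correct order is obtained by factoring $P_{s^*-1, d+2}$ through its roots $r_i < z_{s^*, d}$, using $|m - r_i| \geq (m - z_{s^*, d})$ for $m \geq z_{s^*, d}$, and normalizing by the leading coefficient; peeling off one factor via interlacing between consecutive Gegenbauer polynomials (which stays bounded below by a dimension-independent constant near $z_{s^*,d}$) yields a bound of the form $\beta_{s^*, d} P'_{s^*, d}(m) \gtrsim (m - z_{s^*, d})^{s^* - 2}$.

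For the tail, a positive/negative-part argument does the work: the contribution of $j > s^*$ can only lower the sum when $P_{j-1, d+2}(m) < 0$, in which case $|P_{j-1, d+2}(m)| \leq \upsilon_{j-1, d+2}$ by definition of $\upsilon$; positive-sign summands are discarded from the lower bound. Combined with the derivative identity, this gives
\begin{equation*}
\sum_{j > s^*} \beta_{j, d} P'_{j, d}(m) \geq -\frac{1}{d-1} \sum_{j > s^*} \beta_{j, d}\, j(j+d-2)\, \upsilon_{j-1, d+2} \geq -\frac{K d^{(3-s^*)/2}}{d-1},
\end{equation*}
which is negligible relative to the main term in the relevant regime. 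Adding the two contributions and using $1 - m^2 \geq 1 - m$ on $[0, 1]$ yields the claimed $\LPG(s^*-1, z_{s^*, d})$ property, with a constant depending only on the ratio of $C$ to $K$.

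Finally, the estimate $z_{s^*, d} \leq 2\sqrt{s^*/d}$ for $s^* \ll d$ is a classical bound on Gegenbauer roots reflecting that $u_d$ concentrates at scale $1/\sqrt{d}$ around the origin; it can be obtained either via a direct moment/Markov argument using $\int t^2\, u_d(dt) = 1/d$, or by invoking Szegő. The main obstacle throughout is the quantitative main-term bound: tracking the leading-coefficient scaling of $P_{s^*-1, d+2}$ together with its root separations carefully enough to match the polynomial order $(m - z_{s^*, d})^{s^* - 2}$, rather than only the weaker $(m - z_{s^*, d})^{s^* - 1}$ obtained from a naive factorization.
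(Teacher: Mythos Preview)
Your overall architecture matches the paper's: isolate the $j=s^*$ term, lower-bound $P'_{s^*,d}$ via the root structure of $P_{s^*-1,d+2}$, and control the tail by $\sum_{j>s^*}\beta_{j,d}\,\tfrac{j(j+d-2)}{d-1}\,\upsilon_{j-1,d+2}$. Two steps do not go through as written.

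First, the combination step. Your tail bound is a \emph{constant} negative number, $-\tfrac{1}{d-1}\sum_{j>s^*}\beta_{j,d}j(j+d-2)\upsilon_{j-1,d+2}\ge -K d^{(1-s^*)/2}$, whereas your main-term lower bound vanishes at $m=z_{s^*,d}$. So ``adding the two contributions'' does not yield a nonnegative quantity near $m=z_{s^*,d}$, and the $\LPG$ inequality fails there. The paper fixes this by absorbing the tail into a \emph{shift} of the threshold: starting from $P'_{s^*,d}(t)\geq \bar C\,(t-\bar\zeta)^{s^*-1}$ with $\bar\zeta=z_{s^*-1,d+2}$ (their Fact~\ref{fact:gegenwellbeh}), it sets $\zeta^*=\bar\zeta+\bigl(B/(\beta_{s^*,d}\bar C)\bigr)^{1/(s^*-1)}$, where $B$ is the tail sum, and verifies $\ell'(m)\ge c\,(m-\zeta^*)^{s^*-1}$ for $m\ge\zeta^*$; the hypothesis on the tail is exactly what forces $\zeta^*=O(1/\sqrt d)$. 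Without this shift (or an equivalent device), your argument has a genuine gap.

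Second, the ``peeling off one factor'' step is not justified. The gap between the largest root of $P_{s^*-1,d+2}$ and $z_{s^*,d}$ is itself $O(1/\sqrt d)$, not a dimension-free constant, so you cannot extract a uniformly positive factor from $|m-r_i|$ to drop the exponent from $s^*-1$ to $s^*-2$. In fact the paper's own lower bound (Fact~\ref{fact:gegenwellbeh}) gives exactly exponent $s^*-1$, i.e.\ $\LPG(s^*,\cdot)$, and its proof never produces the order $s^*-2$ you are aiming for; the ``$s^*-1$'' in the $\LPG$ label of the proposition is at odds with the proof and with the Gaussian comparison $\LPG(s,0)$. You should target exponent $s^*-1$ and not spend effort on the stronger $s^*-2$, which is neither needed nor established in the paper.
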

This proposition thus establishes that, modulo a mild regularity assumption expressed though the decay of the coefficients $\beta_{j,d}$, the spherically symmetric non-Gaussian setting has the same geometry as the Gaussian setting, for correlations \emph{slightly} above the equator. Crucially, the required amount of correlation to `feel' the local polynomial growth is a $O(\sqrt{s})$ factor from the typical initialization, and can be thus obtained with probability $\simeq e^{-s}$ over the initialization, according to Lemma~\ref{lem:initialization}, a lower bound which is \emph{independent of $d$}. 


The sufficient condition for $\beta_{j,d}$ appearing in Proposition \ref{prop:sufficientcond} involves the minimum values $\upsilon_{j,d}$ of Gegenbauer polynomials $P_{j,d}$, as well as sums of the form $\sum_j j^2 \beta_{j,d}$. 
In order to obtain a more user-friendly condition, we now provide an explicit control of $\upsilon_{j,d}$, and leverage mild regularity of $\phi$ to control $\sum_j j^2 \beta_{j,d}$. This motivates the following assumption on $\phi$ and $\nu$:
\begin{assumption}
\label{ass:phi_symmetric_assumptions}
    The link function $\phi$ satisfies $\phi \in L^2_\eta$ and $\phi' \in L^4_\eta$, and the radial distribution $\rho$ has finite fourth moment $\mathbb{E}_\rho [r^4] < \infty$. 
\end{assumption}
\begin{theorem}[$\LPG$ for symmetric distributions]
\label{coro:symmcase}
    Assume that $\phi$ and $\nu$ satisfy Assumption \ref{ass:phi_symmetric_assumptions}, and let $s^* = \inf\{j; \langle \phi, q_j \rangle_\eta \neq 0 \}$. Then $L$ has the property $\LPG(s^*-1,2\sqrt{s^*/d})$. 
\end{theorem}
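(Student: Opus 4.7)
The plan is to reduce the theorem to an application of Proposition~\ref{prop:sufficientcond}, applied with the specific choice $s = s^*$ defined through the $L^2_\eta$-decomposition of $\phi$. This requires verifying the two conditions of that proposition: a uniform-in-dimension lower bound $\beta_{s^*,d} \geq C$, and the tail estimate $\sum_{j > s^*} \beta_{j,d}\, j(j+d-2)\, \upsilon_{j-1,d+2} \leq K d^{(3-s^*)/2}$. Once these are established, Proposition~\ref{prop:sufficientcond} directly yields $\LPG(s^*-1, z_{s^*,d})$, and the bound $z_{s^*,d} \leq 2\sqrt{s^*/d}$ is also stated there.

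For the lower bound, I would rely on the integral representation of Proposition~\ref{prop:basicdec}: $\beta_{j,d} = \langle \phi, \mathcal{K}_j \phi \rangle_{L^2_\eta}$, where the closed form of $\mathcal{K}_j$ (given in Appendix~\ref{secapp:basicdec}) should decompose diagonally along the $L^2_\eta$ basis $\{q_k\}_k$. Expanding $\phi = \sum_k \langle \phi, q_k\rangle_\eta\, q_k$, the $s^*$-mode contribution factorizes as $|\langle \phi, q_{s^*}\rangle_\eta|^2$ times a positive scalar that is either exactly bounded below uniformly in $d$ or converges to a positive constant as $d \to \infty$. By the definition of $s^*$ in the theorem, $|\langle \phi, q_{s^*}\rangle_\eta|^2 > 0$, so combining with the control on $\mathcal{K}_{s^*}$'s diagonal entries gives $\beta_{s^*,d} \geq C > 0$ independent of $d$.

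For the tail estimate, the key tool is the Sobolev-type identity promised in the paragraph preceding the assumption. I would use Assumption~\ref{ass:phi_symmetric_assumptions} ($\phi' \in L^4_\eta$ and $\mathbb{E}_\rho[r^4] < \infty$) to control $\sum_j j(j+d-2)\, \beta_{j,d}$ in terms of $\|\phi'\|_{L^4_\eta}$ and $\mathbb{E}_\rho[r^4]$. This follows from differentiating the Gegenbauer expansion of $\phi^{(r)}$, using the spherical Laplacian identity $\Delta_{\mathcal{S}} P_{j,d}(\theta \cdot \theta^*) = -j(j+d-2) P_{j,d}(\theta \cdot \theta^*)$ on $\mathcal{S}_{d-1}$, and integrating against $\rho$ after a Cauchy–Schwarz. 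Next, I would invoke the classical bound on the extrema of normalized Gegenbauer polynomials, $\upsilon_{j,d} \leq C\, (d+2j)^{-1/2}$ for $j \geq 1$ (a standard consequence of the weighted $L^\infty$ bound of Jacobi-type polynomials). Combining a uniform bound $\upsilon_{j-1,d+2} \leq C d^{-1/2}$ for $j \geq s^*+1$ with the Sobolev control, and treating the finitely-many small-$j$ terms $j < s^*$ via the sharper decay $\upsilon_{j,d} \lesssim d^{-j/2}$, yields the required tail bound of order $d^{(3-s^*)/2}$.

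The main obstacle I anticipate is in quantifying the Gegenbauer extrema $\upsilon_{j,d}$ for intermediate values of $j$, where neither the small-$j$ regime nor the large-$j$ asymptotic is fully tight; a careful splitting argument based on the low-frequency part ($j \leq J_d$) versus the high-frequency tail ($j > J_d$) for a judicious threshold $J_d$ will likely be needed. The auxiliary fact $z_{s^*,d} \leq 2\sqrt{s^*/d}$ under $s^* \ll d$ follows directly from Proposition~\ref{prop:sufficientcond}, and together with the above completes the claim $\LPG(s^*-1, 2\sqrt{s^*/d})$.
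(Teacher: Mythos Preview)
Your overall strategy---reduce to Proposition~\ref{prop:sufficientcond}, verify the lower bound on $\beta_{s^*,d}$ and the tail estimate using a Sobolev-type control $\sum_j j^2 \beta_{j,d} \lesssim \|\phi'\|_{L^4_\eta}^2 \mathbb{E}_\rho[r^4]^{1/2}$ together with bounds on $\upsilon_{j,d}$---is exactly the paper's route. The splitting idea you anticipate is also what the paper does. However, two of the specific ingredients you propose do not close the argument.

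\textbf{The $\upsilon$ bound is too weak.} Your uniform estimate $\upsilon_{j-1,d+2} \leq C d^{-1/2}$, combined with the Sobolev bound $\sum_{j} j(j+d-2)\beta_{j,d} = O(1)$, yields a tail of order $d^{-1/2}$. This matches the target $d^{(3-s^*)/2}$ only for $s^* \leq 4$; for $s^* \geq 5$ it fails. The paper's fix (its Lemma controlling $\upsilon_{j,\lambda}$) is a genuinely sharper multi-regime estimate: for $j$ small relative to $d$ one has $\upsilon_{j,d} \lesssim \bigl(1 - (\lambda/(j+\lambda))^2\bigr)^{j/2} \asymp (j/d)^{j/2}$, which is \emph{much} smaller than $d^{-1/2}$ and is what drives the factor $d^{(3-s^*)/2}$. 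The paper then splits the sum at thresholds $J_1 \sim d^{4/(1+s^*)}$, $J_2 \sim d$, $J_3 \sim d^{3/2}$, applying different asymptotics of $\upsilon_{j,d}$ in each window (the large-$j$ regime is handled via a Cauchy-integral representation). Your sentence about ``the finitely-many small-$j$ terms $j < s^*$'' is also misplaced: the sum is over $j > s^*$, so the terms requiring the sharper $d^{-j/2}$ decay are precisely $j = s^*+1, s^*+2, \ldots$, not $j < s^*$.

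\textbf{The lower bound argument is unjustified.} Your claim that $\mathcal{K}_j$ ``should decompose diagonally along the $L^2_\eta$ basis $\{q_k\}$'' is not established anywhere and is not obviously true from the explicit kernel in Appendix~\ref{secapp:basicdec} (the conditional densities $\bar{u}_d$ mix modes). The paper sidesteps this entirely: it works with $s = \inf\{j : \beta_{j,d} > 0\}$ directly, for which $\beta_{s,d} > 0$ is tautological, and relates this to your $s^*$ via Proposition~\ref{prop:infoexpo_sphere}. You should do the same rather than attempt a spectral decomposition of $\mathcal{K}_j$.
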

The proof is provided in Appendix \ref{sec:proofthmsym}. At a technical level, the main challenge in proving Theorem \ref{coro:symmcase} is to achieve a uniform control of $\upsilon_{j,d}$ in $j$, a result which may be of independent interest. We address it by combining state-of-the-art bounds on the roots of the Gegenbauer polynomials, allowing us to cover the regime where $j$ is small or comparable to $d$, together with integral representations via the Cauchy integral formula, providing control in the regime of large $j$. On the other hand, we relate the sum $\sum_j j^2 \beta_{j,d}$ to a norm of $\phi'$ using a Cauchy-Schwartz argument, where we leverage the fourth moments from Assumption \ref{ass:phi_symmetric_assumptions}. 
\begin{remark}
Since we are in a setting where $\phi$ is known, an alternative to the original recovery problem from Eq (\ref{eq:mainsetup}) is to consider a pure Gegenbauer `student' link function of the form $\tilde{\phi} = P_{s,d}$, where $s$ is the information exponent from Proposition \ref{prop:infoexpo_sphere}. Indeed, the resulting population loss 
$\tilde{L}(\theta) = \mathbb{E} [ (\tilde{\phi}(x \cdot \theta) - \tilde{\phi}( x \cdot \theta^*) )^2]$ satisfies the $\LPG$ property, as easily shown in Fact \ref{fact:gegenwellbeh}.
\end{remark}


For the sake of completeness, we describe more precisely two concrete case studies below.


\begin{example}[Uniform Measure on the Sphere]
When $\nu = \text{Unif}(\sqrt{d} \S)$, we have $\rho = \delta_{\sqrt{d}}$, and therefore $\beta_{j,d} = \bar{\alpha}^2_{j, \sqrt{d},d}$. 
In that case, the orthogonal polynomial basis $\{q_j(t)\}_j$ of $L^2_\eta$ coincides with the rescaled Gegenbauer polynomials, $q_j(t) = P_{j,d}(t/\sqrt{d}) $.
Consider now a link function 
$\phi$ with $s-1$ vanishing moments with respect to $L^2_\eta$, i.e. such that 
$\bar{\alpha}_{j,d}=\langle \phi, q_j \rangle_\eta = 0$ for $j < s$ and $\bar{\alpha}_{s,d}=\langle \phi, q_s \rangle_\eta \neq 0$; and with sufficient decay in the higher harmonics as to satisfy the bound on the sum presented in Proposition~\ref{prop:sufficientcond} (for example, $\phi(t) = q_s(t)$ trivially satisfies this condition). 
Then Proposition \ref{prop:sufficientcond} applies and we conclude that the resulting population landscape satisfies $\LPG(s-1, O(\sqrt{s/d}))$.
\end{example}


In \cite{yehudai2020learning,wu2022learning} it is shown that monotonically increasing link functions\footnote{or link functions where their monotonic behavior dominates; see \cite{wu2022learning}.} lead to a benign population landscape, provided the data distribution $\nu$ satisfies mild anti-concentration properties. 
We verify that in our framework.
\begin{example}[Non-decreasing $\phi$]
    Indeed, Proposition \ref{prop:sufficientcond} is verified with $s=1$, provided $\phi' \in L^2_\eta$. Indeed, if $\phi\neq 0$ is monotonic, then we have
$\beta_1 = \langle \phi, \mathcal{K}_1 \phi \rangle_\eta = C_d \left(\mathbb{E}_{\eta} [t \phi(t)] \right)^2 \neq 0~,$
since we can assume without loss of generality that $\phi(t)\geq 0$ for $t\geq 0$ and $\phi(t) \leq 0$ for $t \leq 0$. 
\end{example}
We emphasize that the results of \cite{yehudai2020learning,wu2022learning} extend beyond the spherically symmetric setting, which is precisely the focus of next section.

\subsection{Non-Spherically Symmetric Case}
\label{subsec:nonsym}
We now turn to the setting where $\nu$ is no longer assumed to have spherical symmetry. 
By making further regularity assumptions on $\phi$, our main insight is that distributions that are \emph{approximately} symmetric (defined in an appropriate sense) still benefit from a well-behaved optimization landscape. 

\paragraph*{Two-dimensional Wasserstein Distance.}
When $\nu$ is not spherically symmetric, the machinery of spherical harmonics does not apply, and we thus need to rely on another structural property. Consider a centered and  isometric data distribution $\nu \in \mathcal{P}_2(\mathbb{R}^d)$, i.e. such that $\mathbb{E}_\nu x=0$ and $\Sigma_\nu=\mathbb{E}_\nu [x x^\top] = I_d$. 
We consider the two-dimensional $1$-Wassertein distance \cite[Definition 1]{niles2022estimation} --see also \cite{paty2019subspace}-- between a pair of distributions $\nu_a, \nu_b \in \mathbb{P}(\mathbb{R}^d)$, defined as 
    \begin{equation}
        \widetilde{W}_{1,2}(\nu_a, \nu_b) := \sup_{P \in \mathrm{Gr}(2, d)} W_1( P_\# \nu_a, P_\# \nu_b)~,
    \end{equation}
    where the supremum runs for any two-dimensional subspace $P \in \mathrm{Gr}(2, d)$, and $P_\# \nu \in \mathcal{P}(\mathbb{R}^2)$ is the projection (or marginal) of $\nu$ onto the span of $P$. 
$\widetilde{W}_{1,2}$ is a distance (\cite[Proposition 1]{paty2019subspace}) and measures the largest $1$-Wasserstein distance between any two-dimensional marginals. 

We are in particular interested in the setting where $\nu_a = \nu$ is our data distribution, and $\nu_b$ is a reference symmetric measure --  for instance the standard Gaussian measure $\gamma_d$. Consider the fluctuations 
\begin{align}
\Delta_L(\theta) & := |L(\theta) - \bar{\ell}(m_\theta)| \text{ and }~,\\
\Delta_{\nabla L}(\theta) &:= | \nabla^{\mathcal{S}}_\theta L(\theta) \cdot \theta^* - \bar{\ell}'(m_\theta) (1 - m_\theta^2) |~,
\end{align}
where $\bar{\ell}(m)$ is the Gaussian loss defined in Section \ref{sec:preliminaries} and $L(\theta) = \mathbb{E}_\nu [ | \phi(  x \cdot \theta ) - \phi(  x \cdot \theta^* )|^2]$.
$\Delta_L$ and $\Delta_{\nabla L}$ thus measure respectively the fluctuations of the population loss and the relevant (spherical) gradient direction. 
By making additional mild regularity assumptions on the link function $\phi$, we can obtain a uniform control of the population loss geometry using the dual representation of the $1$-Wasserstein distance. 

\begin{restatable}[Regularity of link function]{assumption}{asslipreg}
\label{ass:lipreg}
    We assume that $\phi, \phi'$ are both $B$-Lipschitz, 
    and that $\phi''(t) = O(1/t)$. 
\end{restatable}
\begin{restatable}[Subgaussianity]{assumption}{asssubgaussian}
\label{ass:subgaussian}
    The data distribution $\nu$ is $M$-subgaussian: for any~$v \in \S$, we have $\|  x\cdot v  \|_{\psi_2} \leq M $, where $\|z\|_{\psi_2}:= \inf\{ t>0; \, \mathbb{E}[\exp(z^2/t^2)\leq 2\}$ is the Orlitz-2 norm.        
\end{restatable}

\begin{restatable}[Uniform gradient approximation]{proposition}{gradwassers}
    \label{prop:gradwassers}
    Under Assumptions \ref{ass:lipreg} and \ref{ass:subgaussian}, for all~$\theta \in \S$, 
\begin{equation}
     \Delta_{\nabla L}(\theta) = (1-m^2) O\left( \widetilde{W}_{1,2}(\nu, \gamma) \log ( \widetilde{W}_{1,2}(\nu, \gamma)^{-1} ) \right)
\end{equation}
where the $O(\cdot)$ notation only hides constants appearing in Assumptions \ref{ass:lipreg} and \ref{ass:subgaussian}.   
\end{restatable}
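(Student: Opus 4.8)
The plan is to express the quantity $\Delta_{\nabla L}(\theta)$ as a difference of expectations under $\nu$ and $\gamma$ of a function that depends on $x$ only through its projection onto the two-dimensional subspace $P = \mathrm{span}(\theta, \theta^*)$, and then apply the dual (Kantorovich--Rubinstein) representation of $W_1$ to that two-dimensional marginal. Concretely, writing $u = x\cdot\theta$ and $v = x\cdot\theta^*$, the identity $\nabla^{\mathcal S}_\theta L(\theta)\cdot\theta^* = -\mathbb{E}_\nu[ 2(\phi(u)-\phi(v))\phi'(u) (v - m u)]$ (after projecting the Euclidean gradient onto the sphere and using $\|\theta\|=\|\theta^*\|=1$) shows that $\nabla^{\mathcal S}_\theta L(\theta)\cdot\theta^* = \mathbb{E}_\nu[ G_\theta(u,v)]$ for an explicit $G_\theta : \mathbb{R}^2 \to \mathbb{R}$, and the Gaussian quantity $\bar\ell'(m)(1-m^2)$ is exactly $\mathbb{E}_\gamma[G_\theta(u,v)]$ because under $\gamma$ the pair $(u,v)$ is jointly Gaussian with covariance determined by $m$. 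Hence $\Delta_{\nabla L}(\theta) = |\mathbb{E}_{\nu}[G_\theta] - \mathbb{E}_{\gamma}[G_\theta]| \le \mathrm{Lip}(G_\theta \circ P^{-1}) \cdot W_1(P_\#\nu, P_\#\gamma) \le \mathrm{Lip}(G_\theta|_P)\, \widetilde W_{1,2}(\nu,\gamma)$, where I only need $G_\theta$ restricted to the relevant $2$D plane.

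The main technical work is therefore to bound the Lipschitz constant of $(u,v)\mapsto G_\theta(u,v) = -2(\phi(u)-\phi(v))\phi'(u)(v - mu)$ and to see where the logarithmic factor comes from. Differentiating, one gets terms of the form $\phi'(u)^2(v-mu)$, $(\phi(u)-\phi(v))\phi''(u)(v-mu)$, $(\phi(u)-\phi(v))\phi'(u)\cdot m$, and $(\phi(u)-\phi(v))\phi'(u)$. Under Assumption~\ref{ass:lipreg}, $\phi$ and $\phi'$ are $B$-Lipschitz, so $|\phi(u)-\phi(v)|\le B|u-v|$ and $|\phi'(u)|$ grows at most linearly; thus each of these terms is controlled by a polynomial in $|u|,|v|$ of degree at most two — i.e. $G_\theta$ is \emph{not} globally Lipschitz, only locally Lipschitz with a Lipschitz constant that grows like $\max(|u|,|v|)^2$ on a ball. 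The standard fix, which is where the $\log$ enters, is a truncation argument: split the plane into a ball of radius $R$ and its complement; on the ball use the local Lipschitz bound $\lesssim R^2$, incurring an error $R^2 \widetilde W_{1,2}(\nu,\gamma)$; outside the ball bound the contributions of the two expectations separately using the subgaussian tail from Assumption~\ref{ass:subgaussian} (for $\nu$) and the Gaussian tail (for $\gamma$), which gives an error $\lesssim \mathrm{poly}(R) e^{-cR^2}$. Optimizing over $R$ with $R^2 \asymp \log(\widetilde W_{1,2}(\nu,\gamma)^{-1})$ balances the two and produces exactly the stated $\widetilde W_{1,2}(\nu,\gamma)\log(\widetilde W_{1,2}(\nu,\gamma)^{-1})$ rate. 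The term $\phi''(u) = O(1/u)$ in Assumption~\ref{ass:lipreg} is precisely what keeps the $(\phi(u)-\phi(v))\phi''(u)(v-mu)$ term at most quadratic rather than worse near the origin, since $(\phi(u)-\phi(v))(v-mu)$ vanishes quadratically as $(u,v)\to 0$ along $u=v$ while $\phi''(u)$ blows up only like $1/u$.

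For the truncation step I would be slightly careful about one subtlety: the tail bound "outside the ball" must be applied to $\mathbb{E}[|G_\theta|\mathbf 1_{\{\max(|u|,|v|)>R\}}]$, and since $|G_\theta(u,v)| \lesssim (1+|u|+|v|)^3$ (a cubic, from the product of a linear, a linear, and a linear factor after using the Lipschitz bounds), I need $\mathbb{E}_\nu[(1+|u|+|v|)^3 \mathbf 1_{\{\cdots>R\}}]$ and its Gaussian analog to be $\lesssim \mathrm{poly}(R)e^{-cR^2}$, which follows from Cauchy--Schwarz together with subgaussian moment bounds uniform in $\theta$ — here the uniformity over $v\in\S$ in Assumption~\ref{ass:subgaussian} is exactly what lets the constants be independent of $\theta$. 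Finally, the overall $(1-m^2)$ prefactor in the statement comes out cleanly because the spherical gradient carries the factor $(\theta^* - m\theta)$, whose relevant component is $(1-m^2)$; one should factor this out at the very beginning so that the remaining quantity to be bounded is $\widetilde W_{1,2}$-small uniformly. The main obstacle, then, is not any single estimate but organizing the Lipschitz-constant bookkeeping cleanly enough that the polynomial degree is as small as claimed and the truncation optimization yields exactly one logarithmic factor and no extra powers; this is routine but error-prone, and I would do it by first reducing to bounding $\nabla_{(u,v)} G_\theta$ pointwise by an explicit cubic polynomial and then quoting a generic "Wasserstein comparison with locally Lipschitz test functions and subgaussian tails" lemma.
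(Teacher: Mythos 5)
Your plan is essentially the paper's proof: write $\nabla^{\mathcal S}_\theta L(\theta)\cdot\theta^* = \mathbb{E}_\nu[g_{\theta,\theta^*}]$ with $g_{\theta,\theta^*}(x)=2\phi'_\theta(\phi_\theta-\phi_{\theta^*})(x\cdot v)$ a function of the $2$D projection onto $\mathrm{span}(\theta,\theta^*)$, identify $\mathbb{E}_\gamma[g_{\theta,\theta^*}]=\bar\ell'(m)(1-m^2)$, and split the difference $\mathbb{E}_\nu[g]-\mathbb{E}_\gamma[g]$ across $A_R=\{\|P_{\theta,\theta^*}x\|\le R\}$ and its complement, controlling the first piece by $\mathrm{Lip}(g|_{A_R})\cdot\widetilde W_{1,2}(\nu,\gamma)$ and the second by tails, then tuning $R$. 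That is exactly what the paper does.

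A few small points are worth noting, partly because they sharpen your argument and partly because they highlight where you were more careful than the paper. First, $\phi$ being $B$-Lipschitz already forces $|\phi'|\le B$ to be \emph{bounded}, not merely of linear growth, so on $A_R$ you have $|g|\lesssim R^2\|v\|$ rather than $R^3$, and the locally Lipschitz constant is $\lesssim R^2\|v\|$ as you say, the quadratic term coming only from $\phi''_\theta(\phi_\theta-\phi_{\theta^*})(x\cdot v)$. (The paper asserts a bound $\lesssim R\|v\|$ for this derivative, which appears to undercount the degree of the $\phi''$ term.) Second, and more substantively, your way of controlling the tail is cleaner than what the paper writes: the truncation event is the \emph{geometric} event $\{x\notin A_R\}$ with $\|P_{\theta,\theta^*}x\|>R$, whose probability under both $\nu$ and $\gamma$ is $\lesssim e^{-cR^2}$ by Assumption~\ref{ass:subgaussian}; a Cauchy--Schwarz against the finite second moment of $g$ then gives $T_b\lesssim e^{-cR^2/2}$. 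The paper instead quotes a sub-\emph{exponential} tail for $|g|$ and treats $T_b$ as if the complement of $A_R$ coincided with $\{|g|\ge R\}$, which is not the case. With your quadratic Lipschitz bound $T_a\lesssim R^2\|v\|\widetilde W_{1,2}$ and your Gaussian tail $T_b\lesssim e^{-cR^2/2}$, setting $R^2\asymp\log(\widetilde W_{1,2}^{-1})$ balances the two and gives a single logarithm, exactly as claimed; the bookkeeping is internally consistent in your version. One minor omission: to recover the stated $(1-m^2)$ prefactor rather than $\sqrt{1-m^2}$, you need to keep track of how $\|v\|=\sqrt{1-m^2}$ enters both the Lipschitz constant and the tail width, and choose $R$ proportional to $\|v\|$ (as the paper does); with your parametrization the extra $\|v\|$ comes out of that step. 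Also, your expression for $\nabla^{\mathcal S}_\theta L(\theta)\cdot\theta^*$ has an extraneous minus sign, which is immaterial since only the absolute value is bounded.
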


In words, the population gradient under $\nu$ is viewed as a perturbation of the population
gradient under $\gamma$, which has the well-behaved geometry already described in Section~\ref{sec:preliminaries}. These perturbations 
can be uniformly controlled by the projected 1-Wasserstein distance, thanks to the subgaussian tails of $\nu$. 

Our focus will be in situations where $\widetilde{W}_{1,2}(\nu, \gamma)=O(1/\sqrt{d})$. This happens to be the `natural' optimistic scale for this metric in the class of isotropic distributions $\Sigma_\nu = I_d$, as can be seen for instance when $\nu = \mathrm{Unif}(\sqrt{d} \S)$. Under such conditions, it turns out that link functions with information exponent $s\leq 2$ can be recovered with simple gradient-based methods, by paying an additional polynomial (in $d$) cost in time complexity. 

\begin{assumption}
\label{ass:smallinfoexponent}
    The Gaussian information exponent of $\phi$, $s:=\arg\min\{ j; \langle \phi, H_j \rangle  \neq 0\}$ satisfies $s \leq 2$. 
\end{assumption}
\begin{assumption}
\label{ass:wassersteinsmall} The projected Wasserstein distance satisfies
    $\widetilde{W}_{1,2}(\nu, \gamma) \leq M'/\sqrt{d}$.
\end{assumption}
\begin{restatable}[$\LPG$, non-symmetric setting]{proposition}{propgrad_expo2}
\label{prop:grad_expo2}
    Under Assumptions \ref{ass:lipreg}, \ref{ass:subgaussian}, \ref{ass:smallinfoexponent} and \ref{ass:wassersteinsmall},~$L$ verifies $\LPG\left(1,O\left(\sqrt{\frac{(\log d^\kappa)}{d}}\right)\right)$, where $\kappa$ depends only on $B,M,M'$. 
\end{restatable}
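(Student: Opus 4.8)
The plan is to combine the uniform gradient approximation from Proposition~\ref{prop:gradwassers} with the known Gaussian landscape for information exponent $s\leq 2$. Concretely, writing $g(\theta) := -\nabla^{\mathcal{S}}_\theta L(\theta)\cdot\theta^*$ and $\bar g(m) := -\bar\ell'(m)(1-m^2)$, Proposition~\ref{prop:gradwassers} gives
\begin{equation}
\label{eq:pert_bound}
|g(\theta) - \bar g(m_\theta)| \;=\; (1-m_\theta^2)\,O\!\left(\widetilde{W}_{1,2}(\nu,\gamma)\,\log\!\big(\widetilde{W}_{1,2}(\nu,\gamma)^{-1}\big)\right),
\end{equation}
and Assumption~\ref{ass:wassersteinsmall} bounds $\widetilde{W}_{1,2}(\nu,\gamma)\leq M'/\sqrt d$, so the right-hand side of \eqref{eq:pert_bound} is $(1-m_\theta^2)\cdot O\!\big(\sqrt{\log(d^\kappa)/d}\big)$ for an appropriate $\kappa=\kappa(B,M,M')$ absorbing the $\log$ and the constants. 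So the whole argument reduces to showing that $\bar g(m)$ itself dominates a term of the form $c\,(1-m)(m-b)^{0}= c(1-m)$ (recall $\LPG(1,b)$ asks for exponent $k-1=0$) on the region $m\geq b$, with enough room to absorb the perturbation.

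First I would record the explicit form of the Gaussian reference gradient. From $\bar\ell(m) = 2\sum_j c_j^2(1-m^j)$ with $c_j=\langle\phi,H_j\rangle$ and $s=\arg\min\{j:c_j\neq0\}\leq 2$, one has $\bar g(m) = 2(1-m^2)\sum_{j\geq s} j\,c_j^2\, m^{j-1}$. In the case $s=1$ this is $\geq 2c_1^2(1-m^2)\geq 2c_1^2(1-m)$ for $m\in[0,1]$ after discarding the nonnegative higher-order terms (using $m\geq 0$ so all $m^{j-1}\geq 0$); in the case $s=2$ it is $\geq 4c_2^2(1-m^2)m = 4c_2^2(1-m)(1+m)m$, which for $m\geq b$ is $\geq 4c_2^2 b(1-m)$. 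Thus $\bar g(m)\geq c_0\,\min\{1,b\}\,(1-m)$ on $[b,1]$ for $c_0$ depending only on $\phi$ (through $c_1$ or $c_2$).

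Next I would choose the $\LPG$ scale $b=b_d$. Since the $s=2$ lower bound degrades like $b$ while the perturbation in \eqref{eq:pert_bound} is of order $\sqrt{\log(d^\kappa)/d}$, I need $b_d$ comparable to but safely above that perturbation scale — taking $b_d = A\sqrt{\log(d^\kappa)/d}$ for a large enough absolute constant $A$ (and, if necessary, enlarging $\kappa$). Then for $m\geq b_d$,
\begin{equation}
g(\theta)\;\geq\;\bar g(m) - (1-m^2)\,O\!\big(\sqrt{\tfrac{\log d^\kappa}{d}}\big)\;\geq\;c_0\,b_d(1-m) - C'(1-m)\sqrt{\tfrac{\log d^\kappa}{d}}\;\geq\;\tfrac{c_0}{2}\,b_d\,(1-m),
\end{equation}
where the last step uses $1-m^2\leq 2(1-m)$ and the choice of $A$ to make $c_0 b_d \geq 2C'\sqrt{\log d^\kappa/d}$. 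Renaming the constant, this is exactly the statement $\LPG\big(1, O(\sqrt{\log(d^\kappa)/d})\big)$ with $k=2$, $b = b_d$; the constant $C$ in Definition~\ref{def:growth} can be taken $\asymp c_0 b_d$, or — if one prefers a dimension-free $C$ — one folds the extra $b_d$ factor into a slightly larger $\LPG$ scale, which is harmless since $b_d$ is already $\tilde O(1/\sqrt d)$.

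The main obstacle is bookkeeping rather than conceptual: one must make sure the single $\kappa$ simultaneously absorbs (a) the $\log\widetilde{W}_{1,2}^{-1}$ factor in \eqref{eq:pert_bound}, which under Assumption~\ref{ass:wassersteinsmall} is $O(\log d)$, (b) the $O(\cdot)$ constants from Assumptions~\ref{ass:lipreg}–\ref{ass:subgaussian} (i.e. $B,M$) and from $M'$, and (c) the extra slack needed so that the perturbation is strictly below half the Gaussian lower bound at scale $b_d$. A secondary subtlety is the $s=2$ case, where the Gaussian drift vanishes linearly at the equator: here it is essential that the $\LPG$ scale $b_d$ is chosen \emph{above} the perturbation scale, so that the genuine quadratic-type drift $\asymp b_d(1-m)$ is not drowned out; this is why one cannot hope to push $b_d$ all the way down to the pure $1/\sqrt d$ scale and instead pays the extra $\sqrt{\log d^\kappa}$ factor stated in the proposition. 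The full details are deferred to the appendix.
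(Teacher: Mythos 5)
Your high-level plan -- apply Proposition~\ref{prop:gradwassers} to treat $L$ as a perturbation of the Gaussian landscape, then pick a scale $b_d \asymp \sqrt{\log(d^\kappa)/d}$ large enough that the perturbation is dominated -- is exactly the paper's route. The problem is in how you package the $s=2$ case. You throw away useful structure by lower-bounding $\bar g(m) \geq 4c_2^2\, b_d\,(1-m)$ on $[b_d,1]$, and then conclude $g(\theta)\geq \tfrac{c_0}{2} b_d (1-m)$. That reads as $\LPG(1,b_d)$ but with a $\emph{dimension-dependent}$ constant $C\asymp b_d \asymp \tilde{O}(1/\sqrt d)$. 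The constant $C$ in Definition~\ref{def:growth} must be $d$-free for Theorem~\ref{thm:weak_recovery} to deliver the advertised sample complexities (the final constant $\mathsf{K}$ there depends only on $\phi$ and $\nu$). Your fallback -- ``fold the extra $b_d$ factor into a slightly larger $\LPG$ scale'' -- is not a valid manipulation: a small multiplicative prefactor on $(1-m)$ and a small additive shift in $m$ are not interchangeable, and no choice of larger $b'$ will convert $C b_d (1-m)$ into $C'(1-m)$ for $m\geq b'$ with $C'$ $d$-free. The parenthetical ``$\LPG(1,\dots)$ with $k=2$'' in your last paragraph is itself a sign that the bookkeeping went off track.

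The fix is to not discard the $m$-factor. Keep $\bar g(m) \geq 4c_2^2\, m\,(1-m^2)$ and subtract the perturbation $(1-m^2)\,O(\sqrt{\log d^\kappa/d})$ to get
$g(\theta) \geq 4c_2^2 (1-m^2)\bigl(m - b_d\bigr) \geq 4c_2^2 (1-m)\bigl(m-b_d\bigr)$
for $m\geq b_d$, with a $d$-free constant. That is the $\LPG(k,b)$ form with $k=2$, exponent $k-1=1$, and scale $b_d=O(\sqrt{\log(d^\kappa)/d})$. This is in fact what the paper's own proof of the proposition establishes; the displayed inequalities there land exactly on $\alpha_2^2(1-m^2)(m-\log d^{C'}/\sqrt d)$, i.e.\ the $k=2$ growth shape, which then feeds into the $s=2$ branch of Theorem~\ref{thm:weak_recovery} to give $\tau^+_{1/2} = \tilde O(d)$. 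In short: same strategy, but you need to track the quadratic-type growth rather than forcing a linear-type statement, or you end up with a constant that secretly depends on~$d$.
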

This proposition illustrates the cost of breaking spherical symmetry in two aspects: (i) it requires additional regularity on $\phi$, and notably restricts its (Gaussian) information exponent to $s=2$, and (ii) the scale to reach $\LPG$ is now no longer dimension-free, but has a polynomial dependency on dimension, since from Lemma \ref{lem:initialization}, picking $\delta$ any positive constant we have 
$$\P_{\theta_0}\left(m_{\theta_0} \geq \sqrt{\log d^\kappa}/\sqrt{d}\right) \geq \frac{\delta}{4}  e^ {- (\sqrt{\log d^\kappa} + \delta)^2} \geq \Omega\left( d^{-(1+o(1))\kappa}\right)~. $$
At present, we are not able to rule this out as a limitation of our proof;  establishing whether this polynomial dependency on dimension is an inherent cost of the symmetry breaking is an interesting question for future work. 

While assumptions \ref{ass:lipreg}, \ref{ass:subgaussian} and \ref{ass:smallinfoexponent} are transparent and impose only mild conditions on the link function and tails of $\nu$, the `real' assumption of Proposition \ref{prop:grad_expo2} is the concentration of $\widetilde{W}_{1,2}(\nu, \gamma)$ (Assumption \ref{ass:wassersteinsmall}). The ball $\{ \nu; \widetilde{W}_{1,2}(\nu, \gamma) = O(1/\sqrt{d}) \}$ contains many non-symmetric measures, for instance empirical measures sampled from $\gamma$ with $n = \omega(d^2)$ \cite[Proposition 8]{niles2022estimation}, and we suspect it contains many other examples, such as convolutions of the form $\nu \ast \gamma_{\sigma}$ arising for instance in diffusion models. That said, one should \emph{not} expect the distance $\widetilde{W}_{1,2}(\nu, \gamma)$ to be of order $1/\sqrt{d}$ for generic `nice' distributions $\nu$; for instance, log-concave distributions are expected to satisfy $W_1( P_\# \nu, \gamma) \simeq 1/\sqrt{d}$ for \emph{most} subspaces $P$, as captured in Klartag's CLT for convex bodies \cite{klartag2007central}. In summary, many situations of interest fall outside this regime, which motivates us to relax the uniform Wasserstein criterion. 


\paragraph{Localized Growth via Stein's method.}
To illustrate the mileage of the previous techniques beyond this `quasi-symmetric' case, we consider now an idealised setting where the data is drawn from a product measure $\nu = \eta^{\otimes d}$, with $\eta \in \mathcal{P}(\mathbb{R})$ and $W_1( \eta, \gamma_1) = \Theta(1)$. In other words, $x=(x_1, \ldots, x_d) \sim \nu$ if $x_i \sim \eta$ are i.i.d. In this setting, the distances $W_1( P_\# \nu, \gamma)$ reflect a CLT phenomena, which requires the subspace $P$ to `mix' across independent variables. Consequently, one may expect the expression of the hidden direction $\theta^*$ in the canonical basis to play a certain role.
For that purpose, we make the following additional regularity assumption on the tails of $\phi$ to simplify the quantitative bounds: 
\begin{restatable}[Additional Regularity in third derivatives] {assumption}{assthird}
\label{ass:extralip}
    $\phi$ admits four derivatives bounded by $L$, with $|\phi^{(3)}(t)| = O(1/t)$ and $|\phi^{(4)}(t)| = O(1/t^2)$. Moreover, the third moment of the data distribution is finite:
    $\tau_3=\mathbb{E}_{t \sim \eta}[t^3] < \infty$.    
\end{restatable}
Stein's method provides a powerful control on $\Delta_L(\theta)$ and 
$\Delta_{\nabla L}(\theta)$, as shown by the following result:
\begin{restatable}[Stein's method for product measure]{proposition}{propstein}
\label{prop:stein}
    Let $\chi(\theta, \theta^*):= \| \theta\|_4^2 + \| \theta^*\|_4^2$. 
    Under Assumptions \ref{ass:lipreg}, \ref{ass:subgaussian} and \ref{ass:extralip}, there exists a universal constant $C=C(M, B, \tau_3)$ such that 
        \begin{equation}
        \label{eq:loss_conc}
        \Delta_L(\theta) \leq C \chi(\theta, \theta^*)~,\text{and }~\Delta_{\nabla L}(\theta) \leq C \sqrt{1-m^2} \chi(\theta, \theta^*)~.
    \end{equation}    
\end{restatable}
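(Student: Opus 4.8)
The plan is to apply Stein's method directional-ly, exploiting the fact that we only need to compare one-dimensional (for $\Delta_L$) and essentially two-dimensional (for $\Delta_{\nabla L}$) marginals of $\nu=\eta^{\otimes d}$ against the corresponding Gaussian marginals. First I would rewrite the relevant quantities as expectations of a smooth test function of two correlated linear statistics: setting $u=x\cdot\theta$ and $v=x\cdot\theta^*$, we have $L(\theta)=\mathbb{E}_\nu[(\phi(u)-\phi(v))^2]$ and, since $\bar\ell(m)$ is exactly this expectation under $\nu=\gamma_d$ (where $(u,v)$ is a centered Gaussian with covariance $\bigl(\begin{smallmatrix}1&m\\ m&1\end{smallmatrix}\bigr)$ because $\theta,\theta^*\in\S$), $\Delta_L(\theta)$ is precisely the gap between $\mathbb{E}_\nu[g(u,v)]$ and $\mathbb{E}_{\gamma_2}[g(u,v)]$ for $g(u,v)=(\phi(u)-\phi(v))^2$. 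Note $u=\sum_i\theta_i x_i$ and $v=\sum_i\theta_i^* x_i$ are sums of independent (but not identically distributed) scalar contributions, so a Lindeberg/Stein swapping argument applies: the third-moment error term in the Berry–Esseen-type bound for a smooth test function $g$ is controlled by $\sum_i (|\theta_i|^3+|\theta_i^*|^3)\,\mathbb{E}_\eta|t|^3 \lesssim \|\theta\|_3^3+\|\theta^*\|_3^3 \le \|\theta\|_4^2\|\theta\|_\infty + \ldots \le \chi(\theta,\theta^*)$ up to constants, using $\|\theta\|_3^3\le\|\theta\|_4^2\|\theta\|_2 = \|\theta\|_4^2$ and $\tau_3<\infty$ from Assumption \ref{ass:extralip}. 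The smoothness of $g$ needed (bounded third derivatives) follows from the Lipschitz and higher-derivative bounds on $\phi$ in Assumptions \ref{ass:lipreg} and \ref{ass:extralip}; the subgaussianity in Assumption \ref{ass:subgaussian} is what lets us integrate the $\phi^{(3)}(t)=O(1/t)$ and $\phi^{(4)}(t)=O(1/t^2)$ tails against the Gaussian and $\eta$ densities without blow-up.

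For the second bound, I would write $\nabla^{\mathcal S}_\theta L(\theta)\cdot\theta^* = (1-m^2)\,\mathbb{E}_\nu[\phi'(u)\phi(v)] \cdot(\text{factors}) - (\ldots)$; more carefully, differentiating $L(\theta)=\mathbb{E}_\nu[(\phi(u)-\phi(v))^2]$ in $\theta$ and projecting onto $\theta^*$ gives an expression of the form $(1-m)\mathbb{E}_\nu[C(u,v)] + (\text{lower-order})$ with $C(u,v)=\phi'(u)\phi(v)$ as defined before the statement, and the Gaussian counterpart $\bar\ell'(m)(1-m^2)$ is the same expression with $\mathbb{E}_\nu$ replaced by $\mathbb{E}_{\gamma_2}$. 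So $\Delta_{\nabla L}(\theta)$ is again a difference of expectations of a smooth function of $(u,v)$, and the same Stein swap gives a bound proportional to $\chi(\theta,\theta^*)$; the extra factor $\sqrt{1-m^2}$ comes from the geometric prefactor $(\theta^*-m\theta)$ appearing in the spherical gradient, whose norm is exactly $\sqrt{1-m^2}$, so one should track that factor through rather than absorbing it. I would keep the dependence on $m$ explicit when comparing $\mathbb{E}_\nu[C(u,v)]$ to $\mathbb{E}_{\gamma_2}[C(u,v)]$ so that no spurious $1/\sqrt{1-m^2}$ singularity is introduced — the test function $C$ and its derivatives are bounded uniformly in $m$, so this is safe.

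I expect the main obstacle to be the non-integrability at the origin: the derivative bounds $\phi^{(3)}(t)=O(1/t)$, $\phi^{(4)}(t)=O(1/t^2)$ mean the natural third-order Taylor remainder in the Lindeberg exchange involves $\int |\phi^{(3)}|$ or $\int|\phi^{(4)}|$ type integrals that are only conditionally convergent, so one cannot just bound $\|g'''\|_\infty$. The fix is to use the \emph{integrated} form of Stein's method — writing the error via the Stein equation solution $f$ solving $f'(t)-tf(t)=g(t)-\mathbb{E}_{\gamma}g$, whose derivatives inherit \emph{one extra order of smoothing} from $\phi$, so that $\phi^{(4)}=O(1/t^2)$ translates into an $f'''$ that is genuinely integrable against the relevant densities — and to split the $t$-integral near $0$ where the $O(1/t)$ and $O(1/t^2)$ singularities are tamed by the Lipschitz control (the $O(1/t)$ on $\phi''$ from Assumption \ref{ass:lipreg} means $\phi''$ is in fact bounded near zero in the Cesàro sense, etc.). Carrying this smoothing bookkeeping through both the one-dimensional ($\Delta_L$) and two-dimensional ($\Delta_{\nabla L}$, where $u$ and $v$ are correlated and one must use a bivariate Stein equation or condition on one coordinate) cases is the technical heart of the argument; everything else is a routine Lindeberg telescoping sum over the $d$ coordinates, with the per-coordinate cost summing to $\chi(\theta,\theta^*)$ thanks to the $\ell_4$–$\ell_2$ interpolation and the finite third moment of $\eta$.
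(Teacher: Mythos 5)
Your high-level idea is the right one: compare $\mathbb{E}_\nu$ and $\mathbb{E}_\gamma$ via a Stein/Lindeberg exchange across the $d$ independent coordinates, pay a per-coordinate cost of third moments, and collect the sum $\sum_i(|\theta_i|^3+|\theta_i^*|^3)\lesssim\|\theta\|_4^2+\|\theta^*\|_4^2$ by Cauchy--Schwartz. But there are two divergences from the paper worth noting, one cosmetic and one substantive.

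Cosmetically, the paper does not reduce to the two-dimensional marginal of $(u,v)=(x\cdot\theta,x\cdot\theta^*)$. It keeps the problem in $\mathbb{R}^d$ and applies R\"ollin's multivariate Stein coupling for independent coordinates directly to the two $d$-variate functions $h_{\theta,\theta^*}(x)=\phi_\theta^2-2\phi_\theta\phi_{\theta^*}$ and $g_{\theta,\theta^*}(x)=2\phi'_\theta(\phi_\theta-\phi_{\theta^*})(x\cdot v)$. That theorem reads
\begin{equation*}
|\mathbb{E}h(X)-\mathbb{E}h(Z)|\le\frac{5}{6}\sum_{i=1}^d\tau_i^3\|\partial_{x_i}^3 h\|_\infty~,
\end{equation*}
and each $\partial_{x_i}^3$ pulls out a monomial $\theta_i^3,\theta_i^2\theta_i^*,\theta_i(\theta_i^*)^2,(\theta_i^*)^3$ times a bounded coefficient, so the claimed $\ell_4$ bound drops out immediately. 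This entirely sidesteps the bivariate Stein equation / conditioning machinery you flag as ``the technical heart.'' Your route is viable, but it trades a one-line application of a known theorem for the correlation bookkeeping between $u$ and $v$.

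The substantive issue is that you have misread Assumption \ref{ass:extralip}. The condition ``$\phi$ admits four derivatives bounded by $L$, with $|\phi^{(3)}(t)|=O(1/t)$ and $|\phi^{(4)}(t)|=O(1/t^2)$'' does \emph{not} introduce a singularity at the origin --- the derivatives are explicitly bounded there by $L$. The $O(1/t)$, $O(1/t^2)$ clauses are \emph{decay conditions at infinity}, and their purpose is to compensate the linear growth of $\phi$ (Lipschitz, hence $\phi(t)=O(t)$) and of $x\cdot v$ that appear as multiplicative partners in $\partial_{x_i}^3 g$ and $\partial_{x_i}^3 h$, so that products such as $\phi^{(3)}(u)\phi(v)(x\cdot v)$ stay uniformly bounded. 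Consequently $\|\partial_{x_i}^3 h\|_\infty$ and $\|\partial_{x_i}^3 g\|_\infty$ are finite without any extra work, and the integrated-Stein/smoothing apparatus you propose to ``tame the singularity'' is not needed; the worry about conditional convergence, Ces\`aro bounds near zero, etc., addresses a problem that the assumption, read correctly, does not create.
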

The proof is based on the Stein's method for multivariate variables \cite[Theorem 3.1]{rollin2013stein} with independent entries, which provides a quantitative CLT bound. Contrary to the quasi-symmetric case, here the concentration is not uniform over the sphere, but crucially depends on the sparsity of \emph{both} $\theta$ and $\theta^*$, measured via the $\ell_4$ norms $\| \theta \|_4$, $\| \theta^* \|_4$: for incoherent, non-sparse directions, we have $\|\theta \|_4^2 \simeq 1/\sqrt{d}$, recovering the concentration rate that led to Proposition \ref{prop:grad_expo2},  
while for sparse directions we have $\| \theta\|_4^2 = \Theta(1)$, indicating an absence of concentration to the Gaussian landscape.

Therefore, the natural conclusion is to assume a planted model where $\theta^*$ is \emph{incoherent} with the data distribution, i.e. $\| \theta^* \|_4 = O(d^{1/4})$. While the $\LPG$ property does not directly apply in this setting, we outline an argument that suggests that the single-index model can still be efficiently solved using gradient-based methods. 
For that purpose, we assume that $\theta^*$ is drawn uniformly in $\S$, which implies that its squared-$L^4$ norm $\| \theta^*\|_4^2$ is of order $d^{-1/2}$ with high probability:
\begin{fact}
    Assume $\theta^* \sim \mathrm{Unif}(\S)$. Then $\mathbb{P}( \| \theta^*\|_4^2 \leq C /\sqrt{d}) \geq 1 - C'\exp(-C)$. 
\end{fact}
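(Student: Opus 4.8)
The statement to prove is the \textbf{Fact} that for $\theta^* \sim \mathrm{Unif}(\S)$, one has $\mathbb{P}(\|\theta^*\|_4^2 \leq C/\sqrt{d}) \geq 1 - C'\exp(-C)$, for appropriate universal constants.

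\medskip

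The plan is to exploit the standard representation $\theta^* \overset{\text{Law}}{=} g/\|g\|_2$ where $g = (g_1, \ldots, g_d)$ has i.i.d.\ standard Gaussian entries. Then $\|\theta^*\|_4^4 = \frac{\sum_i g_i^4}{(\sum_i g_i^2)^2}$, so controlling $\|\theta^*\|_4^2$ reduces to two concentration statements: an upper tail bound on the numerator $\sum_i g_i^4$ and a lower tail bound on the denominator $\sum_i g_i^2$. First I would note $\mathbb{E}[g_i^4] = 3$, so $\mathbb{E}[\sum_i g_i^4] = 3d$; since $g_i^4$ is sub-exponential (more precisely, $g_i^2$ is sub-exponential with a universal Orlicz-$\psi_1$ norm, hence $g_i^4 = (g_i^2)^2$ has sub-Weibull tails of order $1/2$), Bernstein-type concentration for sums of such variables gives, for $t$ in the relevant range, $\mathbb{P}\big(\sum_i g_i^4 \geq 3d + t\big) \leq \exp(-c \min(t^2/d, \sqrt{t}))$ for a universal $c>0$; choosing $t = \Theta(d)$ (say $t = 3d$ so the sum is at most $6d$) yields an upper-tail probability bounded by $\exp(-c' \sqrt{d})$, which is certainly $\leq \exp(-C)$ once $d$ is large relative to $C$ — and for the finitely many small $d$ the statement is trivially adjusted via the constant $C'$. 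Second, for the denominator, $\sum_i g_i^2 \sim \chi^2_d$ concentrates around $d$; the classical Laurent–Massart bound gives $\mathbb{P}(\sum_i g_i^2 \leq d - 2\sqrt{dx}) \leq e^{-x}$, so with $x = \Theta(d)$ we get $\sum_i g_i^2 \geq d/2$ except with probability $\leq e^{-c''d}$.

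\medskip

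Combining the two events via a union bound: on the intersection, $\|\theta^*\|_4^4 \leq \frac{6d}{(d/2)^2} = \frac{24}{d}$, hence $\|\theta^*\|_4^2 \leq \sqrt{24}/\sqrt{d}$, and the complement has probability at most $\exp(-c'\sqrt{d}) + \exp(-c''d) \leq 2\exp(-c'\sqrt{d})$. To match the precise form $\mathbb{P}(\|\theta^*\|_4^2 \leq C/\sqrt{d}) \geq 1 - C'\exp(-C)$ with the \emph{same} $C$ appearing on both sides, I would instead parametrize the deviation more carefully: rather than fixing $t = 3d$, take the numerator bound at level $t$ so that $\sum_i g_i^4 \leq 3d + t$ with failure probability $\exp(-c\min(t^2/d,\sqrt{t}))$, and the denominator at a matching level. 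Setting $C$ to be of order $\sqrt{t}$ (i.e.\ choosing $t \asymp C^2$, which is valid as long as $C^2 \lesssim d$ so we stay in the sub-exponential regime), the numerator is $\leq 3d + O(C^2) \leq 4d$ for $C \leq c_0\sqrt{d}$, the denominator is $\geq d/2$, so $\|\theta^*\|_4^2 \leq 3/\sqrt{d}$ on this event — and since a constant can always be absorbed, we can conclude $\|\theta^*\|_4^2 \leq C'' C/\sqrt{d}$, or simply restate with the constant relabelled, and the failure probability is $\leq \exp(-cC) = C' \exp(-C)$ for suitable universal $C'$. For $C$ beyond the range $c_0\sqrt{d}$, the bound $\|\theta^*\|_4^2 \leq 1 \leq C/\sqrt{d}$ holds deterministically once $C \geq \sqrt{d}$, so there is nothing to prove.

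\medskip

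The only genuinely delicate point is getting the tail exponent on $\sum_i g_i^4$ right in the correct range of deviations, since $g_i^4$ is heavier than sub-exponential (it is sub-Weibull$(1/2)$), so the Bernstein regime "$\exp(-ct^2/d)$" is only valid for moderate $t$ and transitions to "$\exp(-c\sqrt{t})$" for large $t$; one must make sure the chosen deviation level sits in whichever regime gives the clean bound. This is entirely standard — one can invoke, e.g., the general Bernstein inequality for sub-Weibull variables, or just note that $\sum_i g_i^4 \leq (\sum_i g_i^2)^2 = (\chi^2_d)^2$ and bootstrap from the $\chi^2$ concentration directly, which sidesteps the sub-Weibull bookkeeping: on the event $\sum_i g_i^2 \leq 2d$ (probability $\geq 1 - e^{-cd}$ by Laurent–Massart), automatically $\sum_i g_i^4 \leq 4d^2$, but that is too lossy — it would give $\|\theta^*\|_4^4 \leq 4d^2/(d/2)^2 = 16$, only a constant bound. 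So the crude bound does \emph{not} suffice and one really does need the $\mathbb{E}[\sum g_i^4] = 3d$ concentration; hence the sub-Weibull Bernstein estimate (or a direct moment/Markov computation with $\mathbb{E}[(\sum_i g_i^4)^p]$ expanded and bounded) is the technical heart. I expect this step — establishing $\sum_i g_i^4 = O(d)$ with failure probability $\exp(-\Omega(C))$ for $C$ up to $\Theta(\sqrt{d})$ — to be the main (modest) obstacle; everything else is union bounds and algebra.
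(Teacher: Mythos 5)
The paper gives no proof of this Fact --- it is asserted in Section~4.2 without justification, and no corresponding argument appears in the appendix --- so there is nothing to compare your proposal against, and I assess it on its own.

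Your approach is correct. The Gaussian representation $\theta^* \overset{\mathcal{L}}{=} g/\|g\|_2$, the generalized Bernstein bound for the sub-Weibull$(1/2)$ sum $\sum_i g_i^4$, the Laurent--Massart lower-tail bound for $\sum_i g_i^2$, and the union bound are exactly the right ingredients, and your observation that the crude domination $\sum_i g_i^4 \leq \bigl(\sum_i g_i^2\bigr)^2$ only yields $\|\theta^*\|_4^2 = O(1)$ and therefore cannot suffice shows you have identified where the real content lies. The one step left implicit is the final bookkeeping that turns the combined failure probability into the precise form $C'\exp(-C)$ uniformly over $C$: with $t \asymp C^2$, the sub-Weibull tail $\exp\bigl(-c\min(t^2/d,\sqrt{t})\bigr)$ gives the linear-in-$C$ exponent only when $C \gtrsim d^{1/3}$; for smaller $C$ one should instead fix $t \asymp d$ (yielding failure probability $\exp(-c\sqrt{d})$, which beats $\exp(-cC)$ since $C \leq \sqrt{d}$ on the non-trivial range), and the universal constant $c$ from the Bernstein bound will generically differ from $1$, so matching the literal form of the Fact requires relabelling $C \mapsto C/c$. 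These are cosmetic rather than substantive gaps: what you actually establish --- that $\sqrt{d}\,\|\theta^*\|_4^2$ has sub-exponential tails uniformly in $d$ --- is the correct content of the Fact, and the proof is sound.
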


Because $\theta_0$ is also drawn uniformly on the sphere, the typical value of $\chi(\theta, \theta^*)$ is of order $d^{-1/2}$. For Gaussian information exponent $s=2$, the population gradient under Gaussian data satisfies $-\nabla_\theta^{\S}L_\gamma(\theta) \cdot \theta^* \geq C m_\theta$. As a consequence, by Proposition \ref{prop:stein}, whenever $|\theta_0 \cdot \theta^*| > c \sqrt{d}$ (which happens with constant probability lower bounded by $e^{-c}$), we enter a `local' $\LPG$ region where $-\nabla_\theta^{\S}L(\theta) \cdot \theta^* \geq C (m - c/\sqrt{d}) > 0$. 
While this condition is sufficient in the quasi-symmetric setting to start accumulating correlation (Theorem \ref{thm:weak_recovery}), now this event is conditional on $\theta$ being dense, ie so that $\chi(\theta, \theta^*) = O(1/\sqrt{d})$. 

Since the typical value of $\chi(\theta, \theta^*)$ is of scale $1/\sqrt{d}$, one would expect that SGD will rarely visit sparse points where $\chi(\theta, \theta^*)\gg O(1/\sqrt{d})$, and thus that the local $\LPG$ property will be valid for \emph{most} times during the entropic phase of weak recovery --- 
and therefore that the correlation $m_\theta$ will pile-up as in the quasi-symmetric setting. 

We summarise this property in the following conjecture:
\begin{conjecture}[SGD avoids sparse points]
\label{conj:SGDparse}
    Assume $\theta^*, \theta_0$ are drawn from the uniform measure, and let $\theta_t$ be the $t$-th iterate of SGD with $\delta \simeq 1/(d \log d)$.  
    There exists a universal constant $C$ such that for any $\xi>0$, we have 
    \begin{equation}
        \mathbb{P}\left( \sup_{t \leq T} \| \theta_t\|_4^2 \geq \sqrt{\frac{{\xi} \log T }{{d}}} \right) \leq C \exp(-\xi^2 d)~.
    \end{equation}
\end{conjecture}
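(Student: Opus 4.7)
The plan is to track the scalar process $Y_t := \|\theta_t\|_4^4$ along the SGD trajectory and to show that $Y_t - Y_0$ stays below $\xi \log T / d$ uniformly over $t \leq T$; squaring then gives the claimed $\ell_4$ bound. The initial value is controlled by Lévy's isoperimetric concentration: since $\theta \mapsto \|\theta\|_4^2$ is $1$-Lipschitz on $\S$ (from $\|\theta\|_4 \leq \|\theta\|_2 = 1$), both $Y_0$ and $\|\theta^*\|_4^4$ are bounded by $C/d$ with probability at least $1 - e^{-\Omega(d)}$. This already matches the conjectured sub-Gaussian rate at $t = 0$, so it suffices to show that SGD does not substantially worsen it over polynomially many steps.

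Second, I would perform a second-order Taylor expansion of the spherical update. Writing $g_t = \nabla_\theta^{\mathcal{S}} l(\theta_t, x_t)$ and using $g_t \perp \theta_t$, so that $r_t^2 = 1 + \delta^2 \|g_t\|_2^2$, one obtains
\begin{equation*}
Y_{t+1} - Y_t \;=\; -\frac{4\delta}{r_t^4} \sum_i \theta_t(i)^3 g_t(i) \;+\; \frac{6\delta^2}{r_t^4} \sum_i \theta_t(i)^2 g_t(i)^2 \;+\; R_t,
\end{equation*}
with remainder $R_t = O(\delta^2 Y_t \|g_t\|_2^2 + \delta^3 \|g_t\|_2^3)$. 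Using the explicit form $g_t(i) = 2\phi_\Delta \, \phi'(x_t\cdot\theta_t) \,(x_t(i) - (x_t\cdot\theta_t)\theta_t(i))$, with $\phi_\Delta := \phi(x_t\cdot\theta_t) - \phi(x_t\cdot\theta^*)$ bounded via Assumption \ref{ass:lipreg}, I would split the increment into a predictable drift $D_t := \mathbb{E}[Y_{t+1} - Y_t \mid \mathcal{F}_t]$ and a martingale difference $\Delta M_t$.

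For the martingale part, the subgaussian tails of $x_t$ (Assumption \ref{ass:subgaussian}) together with $\|\theta_t\|_6^6 \leq \|\theta_t\|_\infty^2 \, Y_t \leq Y_t$ give the conditional variance bound $\mathbb{E}[(\Delta M_t)^2 \mid \mathcal{F}_t] \lesssim \delta^2 Y_t$ and a subgaussian increment bound $\|\Delta M_t\|_{\psi_2} \lesssim \delta \sqrt{Y_t}$. Introducing the stopping time $\tau := \inf\{t : Y_t \geq 2 Y_0\}$ and applying Freedman's inequality to the stopped martingale yields
\begin{equation*}
\P\!\left( \sup_{t \leq T \wedge \tau} \Big| \sum_{s < t} \Delta M_s \Big| \geq u \right) \;\leq\; 2 \exp\!\left( -\frac{u^2}{C(\delta^2 T Y_0 + \delta u)} \right) .
\end{equation*}
Combined with a bound on the accumulated drift $\sum_{s < T} D_s$, choosing $u \asymp \xi \log T / d$ with $\delta \asymp 1/(d \log d)$ and $T$ polynomial in $d$ forces $\tau > T$ and delivers the desired uniform control on $Y_t$.

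The main obstacle is closing the drift estimate while simultaneously matching the sub-Gaussian rate $\exp(-\xi^2 d)$. The leading drift contribution is $-4\delta\,\mathbb{E}[\phi_\Delta\phi'(x\cdot\theta_t)\sum_i \theta_t(i)^3 x_t(i) \mid \mathcal{F}_t]$; under a Gaussian measure a Stein integration by parts forces such cubic moments to vanish by sphericality, but for a product measure $\eta^{\otimes d}$ with nontrivial third moment $\tau_3$, a residue of order $\tau_3 \|\theta_t\|_3^3$ survives, coupling $Y_t$ to higher $\ell_p$ norms and threatening a runaway self-bootstrap over $T = \mathrm{poly}(d)$ steps. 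The delicate step would be to invoke Proposition \ref{prop:stein} with a well-chosen cubic test function to show that this residue is in fact $O(\chi(\theta_t, \theta^*)^{3/2})$, and to combine this with a tensorization or logarithmic-Sobolev argument that propagates the sphere's isoperimetric rate across the SGD trajectory rather than the weaker martingale rate $\exp(-\Omega(\log^2 d))$ produced by Freedman alone. This self-consistent drift-and-concentration control is exactly what currently places the bound at the level of a conjecture.
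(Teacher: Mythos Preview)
The statement is labeled a \emph{conjecture} in the paper and is not proved there. The paper offers only a one-line heuristic in the paragraph following the conjecture: if one pretends the iterates $\theta_t$ were independent uniform draws on $\S$ and models $\|\theta\|_4^4$ as approximately Gaussian with mean $d^{-1}$ and variance $d^{-3}$, the bound follows from standard concentration and a union bound over $T$ steps. The paper then explicitly names ``the dependencies across iterates, as well as the spherical projection steps'' as the unresolved obstacle.

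Your proposal goes further than the paper's heuristic: you set up a concrete drift--martingale decomposition for $Y_t=\|\theta_t\|_4^4$ along the actual SGD recursion and invoke Freedman's inequality on the stopped martingale. You also correctly isolate the two reasons the argument does not close, and these match what the paper leaves open. First, Freedman on the martingale part with quadratic variation $\lesssim \delta^2 T Y_0$ delivers at best a $\exp(-\mathrm{polylog}\,d)$ tail, not the conjectured $\exp(-\xi^2 d)$; getting the latter would require an isoperimetric input that the SGD recursion does not obviously preserve. Second, for a product measure with $\tau_3\neq 0$ the leading drift term $-4\delta\,\mathbb{E}[\phi_\Delta\phi'\sum_i\theta_t(i)^3 x_t(i)\mid\mathcal{F}_t]$ does \emph{not} vanish and couples $Y_t$ to $\|\theta_t\|_3^3$ and higher norms, so a self-consistent bootstrap is needed. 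This is precisely the ``dependencies across iterates'' issue the paper flags, and neither you nor the paper resolves it.

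One minor inaccuracy: $\theta\mapsto\|\theta\|_4^2$ is $2$-Lipschitz on $\S$, not $1$-Lipschitz, and L\'evy concentration for this function at threshold $\sqrt{\xi\log T/d}$ does not by itself give the $e^{-\xi^2 d}$ rate you assert for $t=0$. What actually drives the sharp initial concentration is the finer fact that $\|\theta\|_4^4$ has variance of order $d^{-3}$ under the uniform measure, which is exactly the input the paper's heuristic uses.
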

Since the time to escape mediocrity in the case $s=2$ is $T \simeq d \log(d)^2$, this conjecture would imply that SGD does not effectively `see' any sparse points, and thus escapes mediocrity. If one assumed that in this phase the dynamics is purely noisy, now pretending that $\theta_i$ were drawn independently from the uniform measure, and that $\|\theta\|_4^4$ is approximately Gaussian with mean $d^{-1}$ and variance $d^{-3}$, the result follows by simple concentration. The challenging aspect of Conjecture \ref{conj:SGDparse} is precisely to handle the dependencies across iterates, as well as the spherical projection steps.






\section{Experiments}

In order to validate our theory, and inspect the degree to which our bounds may be pessimistic, we consider empirical evaluation of the training process in our two primary settings.  Specifically, we consider random initialization on the half-sphere (with the sign chosen to induce positive correlation as in \cite{arous2021online}), and investigate how often strong recovery occurs relative to the information exponent of the link function.

\begin{figure*}[ht]
\centering

\begin{subfigure}{.45\textwidth}
  \centering
  \includegraphics[width=1.\linewidth]{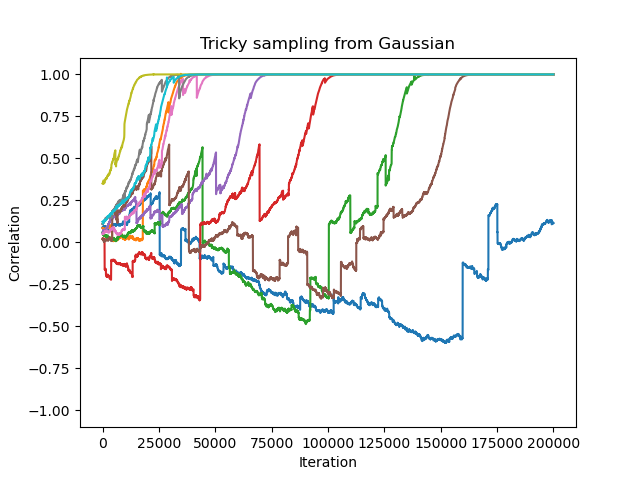}
\end{subfigure}%
\begin{subfigure}{.45\textwidth}
  \centering
  \includegraphics[width=1.\linewidth]{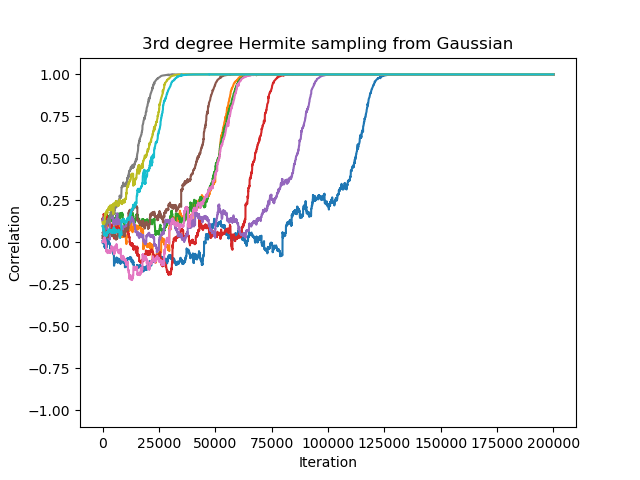}
\end{subfigure}%

\begin{subfigure}{.45\textwidth}
  \centering
  \includegraphics[width=1.\linewidth]{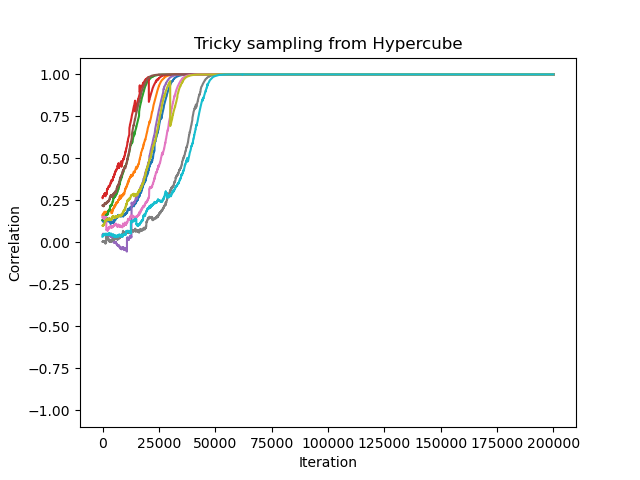}
\end{subfigure}%
\begin{subfigure}{.45\textwidth}
  \centering
  \includegraphics[width=1.\linewidth]{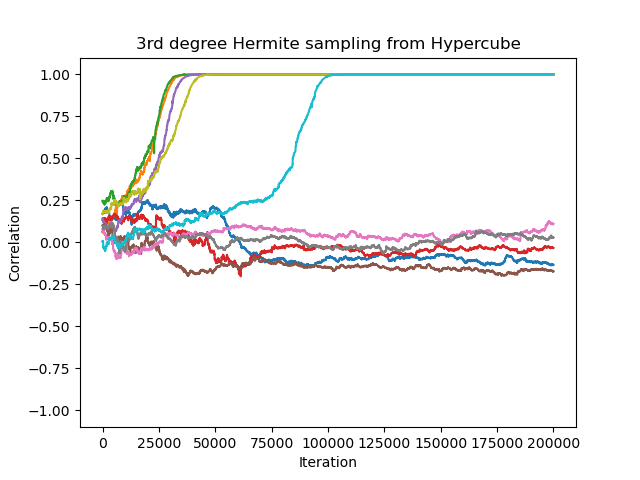}
\end{subfigure}%

\begin{subfigure}{.45\textwidth}
  \centering
  \includegraphics[width=1.\linewidth]{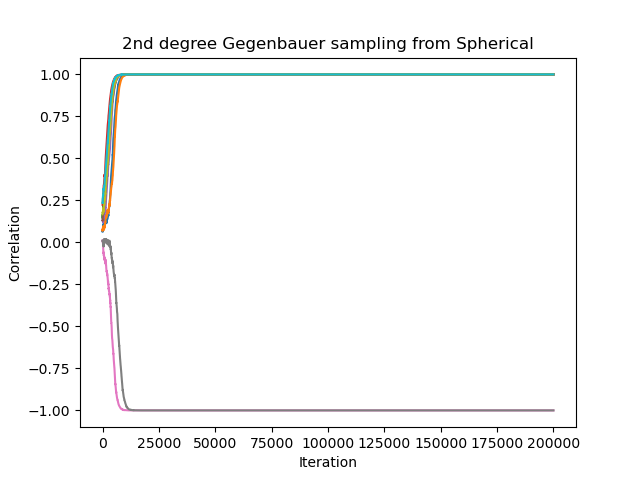}
\end{subfigure}%
\begin{subfigure}{.45\textwidth}
  \centering
  \includegraphics[width=1.\linewidth]{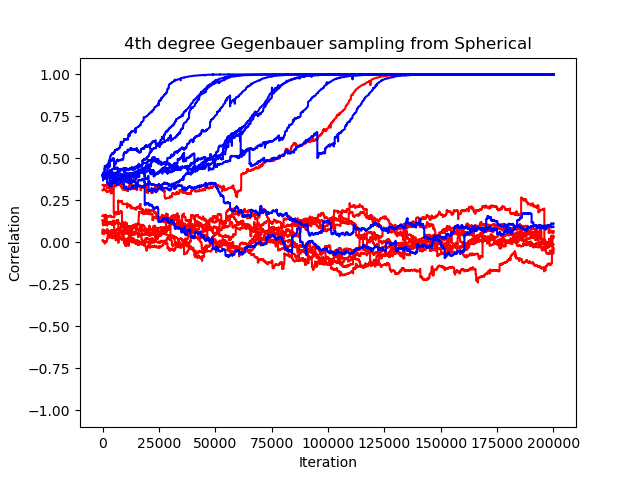}
\end{subfigure}%

\caption{Correlation with the true signal throughout training, under different choices of link function and input distribution.}
\vspace{-0.5cm}
\label{fig:runs}
\end{figure*}

\begin{figure*}[ht]
\centering

\begin{subfigure}{.4\textwidth}
  \centering
  \includegraphics[width=1.\linewidth]{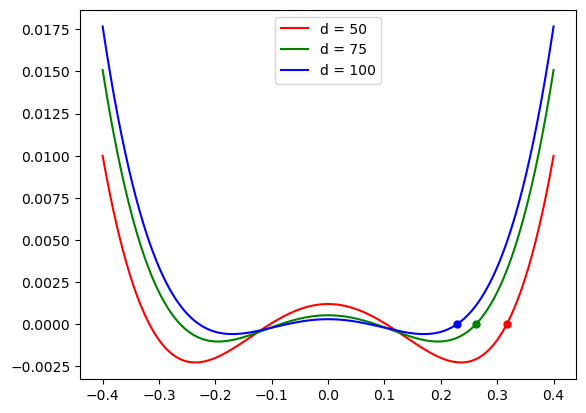}
\end{subfigure}%

\caption{The 4th degree Gegenbauer polynomial with dimension 50, 75, 100.  Each is equivalent (up to rescaling) to the loss landscape of learning the 4th degree Gegenbauer in the appropriate dimension.  Points indicate largest zeros.}
\label{fig:gegen}
\end{figure*}

\paragraph*{Symmetric Case}

For the spherically symmetric setting, we experiment with the input distribution that is uniform on the sphere.  We are primarily interested in verifying that, unlike the Gaussian case, strong recovery depends on whether the initial correlation is sufficiently high to avoid local minima and benefit from the $\LPG$ guarantee.  This is not evident in the 2nd degree Gegenbauer case, which is monotonic and quickly reaches strong recovery, but it is clear from the 4th degree Gegenbauer link function.

In the infinite sample setting, Figure~\ref{fig:gegen} exactly characterizes the loss landscape when learning the 4th degree Gegenbauer under inputs uniform on $\S$ for different values of $d$.  Note that the largest zero for $d = 50$ occurs at $\approx \pm 0.31$, and the loss is monotonic for $m$ values initialized outside that region.  This phenomenon persists for higher dimensions, and one may observe that $d$ increases, the critical points become smaller in magnitude, according to the scaling $\simeq \sqrt{1/d}$.

The bottom right subplot in Figure~\ref{fig:runs} indicates training runs in this setting, where red lines are initialized uniformly on the sphere, and blue lines are initialized uniformly conditioned on $m = 0.4$, which is slightly past the last zero of the polynomial.  We observe that random initialization infrequently exceeds the threshold necessary for strong recovery, but planting the initialization above this threshold gives a high probability of recovery.

\paragraph*{Non-Symmetric Case}

For the non-spherically symmetric setting, we compare the performance of Gaussian inputs with inputs that are approximately Gaussian under a two-dimensional projection.  For simplicity, we loosen our assumptions slightly, and consider the input distribution as the $d$ dimensional product distribution of uniform random variables (rescaled to have unit variance), and allow for a non-Lipschitz link function.  Here, we are primarily interested in whether Assumption~\ref{ass:smallinfoexponent} is tight and $s \leq 2$ is necessary for recovery, as well as whether Conjecture~\ref{conj:SGDparse} holds in practice.

To evaluate, we compare strong recovery rates when training on a "tricky" function with $s = 2$ (chosen to be $\frac{1}{2} \left(h_2 - h_3 - h_4 + h_5 \right)$) versus a function with $s = 3$ (simply the degree three hermite polynomial $h_3$).  We make this choice for the $s = 2$ function in order to produce a function which is not monotonic, for which learning is easy under many distributions, as discussed in~\cite{yehudai2020learning}.

In Figure~\ref{fig:runs} we observe that strong recovery reliably occurs for both the Gaussian and hypercube input distributions when $s = 2$.  There is more variance in the Gaussian runs, likely because the magnitude of the gradients will be larger due to the inclusion of high degree terms.  But for the $s = 3$ case, the Gaussian distribution converges quickly while the hypercube distribution frequently cannot escape the equator.

\section{Conclusion and Perspectives}

In this work, we have asked whether the remarkable properties of high-dimensional Gaussian SGD regression of single-index models are preserved as one loses some key aspects that make Gaussian distributions so special (and so appealing for theorists). 
Our results are mostly positive, indicating a robustness of the Gaussian theory, especially within the class of spherically symmetric distributions, where a rich spherical harmonic structure is still available.  As one loses spherical symmetry, the situation becomes more dire, motivating a perturbative analysis that we have shown is effective via projected Wasserstein and Stein couplings. 

That said, there are several open and relevant avenues that our work has barely touched upon, such as  understanding whether the robustness can be transferred to other algorithms beyond SGD, or addressing the semi-parametric problem when the link function is unknown, along the lines of \cite{biettilearning2022,abbe2023sgd,damian2022neural,berthier2023learning}. A particularly interesting direction of future work is to extend the analysis of product measures to `weakly dependent' distributions, motivated by natural images where locality in pixels captures most (but not all) of the statistical dependencies. Stein's method appears to be a powerful framework that can accommodate such weak dependencies, and deserves future investigation.

\paragraph{Acknowledgements} This work was partially supported by NSF DMS 2134216, NSF CAREER CIF 1845360, NSF IIS 1901091 and the Alfred P Sloan Foundation. 

\bibliographystyle{apalike}
\bibliography{references}

\begin{thebibliography}{}

\bibitem[Abbe et~al., 2022]{abbe2022merged}
Abbe, E., Boix-Adsera, E., and Misiakiewicz, T. (2022).
\newblock The merged-staircase property: a necessary and nearly sufficient
  condition for sgd learning of sparse functions on two-layer neural networks.
\newblock {\em arXiv preprint arXiv:2202.08658}.

\bibitem[Abbe et~al., 2023]{abbe2023sgd}
Abbe, E., Boix-Adsera, E., and Misiakiewicz, T. (2023).
\newblock Sgd learning on neural networks: leap complexity and saddle-to-saddle
  dynamics.
\newblock {\em arXiv preprint arXiv:2302.11055}.

\bibitem[Area et~al., 2004]{Area2004ZerosOG}
Area, I., Dimitrov, D.~K., Godoy, E., and Ronveaux, A. (2004).
\newblock Zeros of gegenbauer and hermite polynomials and connection
  coefficients.
\newblock {\em Math. Comput.}, 73:1937--1951.

\bibitem[Arnaboldi et~al., 2023]{arnaboldi2023high}
Arnaboldi, L., Stephan, L., Krzakala, F., and Loureiro, B. (2023).
\newblock From high-dimensional \& mean-field dynamics to dimensionless odes: A
  unifying approach to sgd in two-layers networks.
\newblock {\em arXiv preprint arXiv:2302.05882}.

\bibitem[Arous et~al., 2021]{arous2021online}
Arous, G.~B., Gheissari, R., and Jagannath, A. (2021).
\newblock Online stochastic gradient descent on non-convex losses from
  high-dimensional inference.
\newblock {\em The Journal of Machine Learning Research}, 22(1):4788--4838.

\bibitem[Bach, 2017]{bach2017breaking}
Bach, F. (2017).
\newblock Breaking the curse of dimensionality with convex neural networks.
\newblock {\em The Journal of Machine Learning Research}, 18(1):629--681.

\bibitem[Barak et~al., 2022]{barak2022hidden}
Barak, B., Edelman, B., Goel, S., Kakade, S., Malach, E., and Zhang, C. (2022).
\newblock Hidden progress in deep learning: Sgd learns parities near the
  computational limit.
\newblock {\em Advances in Neural Information Processing Systems},
  35:21750--21764.

\bibitem[Barron, 1993]{barron1993universal}
Barron, A.~R. (1993).
\newblock Universal approximation bounds for superpositions of a sigmoidal
  function.
\newblock {\em IEEE Transactions on Information theory}, 39(3):930--945.

\bibitem[Ben~Arous et~al., 2021]{arous2020online}
Ben~Arous, G., Gheissari, R., and Jagannath, A. (2021).
\newblock Online stochastic gradient descent on non-convex losses from
  high-dimensional inference.
\newblock {\em Journal of Machine Learning Research (JMLR)}, 22:106--1.

\bibitem[Ben~Arous et~al., 2022]{arous2022high}
Ben~Arous, G., Gheissari, R., and Jagannath, A. (2022).
\newblock High-dimensional limit theorems for sgd: Effective dynamics and
  critical scaling.
\newblock {\em arXiv preprint arXiv:2206.04030}.

\bibitem[Berthier et~al., 2023]{berthier2023learning}
Berthier, R., Montanari, A., and Zhou, K. (2023).
\newblock Learning time-scales in two-layers neural networks.
\newblock {\em arXiv preprint arXiv:2303.00055}.

\bibitem[Bietti et~al., 2022]{biettilearning2022}
Bietti, A., Bruna, J., Sanford, C., and Song, M.~J. (2022).
\newblock Learning single-index models with shallow neural networks.
\newblock In {\em Advances in Neural Information Processing Systems}.

\bibitem[Chen et~al., 2023]{chen2023learning}
Chen, S., Dou, Z., Goel, S., Klivans, A.~R., and Meka, R. (2023).
\newblock Learning narrow one-hidden-layer relu networks.
\newblock {\em arXiv preprint arXiv:2304.10524}.

\bibitem[Damian et~al., 2022]{damian2022neural}
Damian, A., Lee, J., and Soltanolkotabi, M. (2022).
\newblock Neural networks can learn representations with gradient descent.
\newblock In {\em Conference on Learning Theory}.

\bibitem[De~Carli, 2008]{de2008local}
De~Carli, L. (2008).
\newblock Local lp inequalities for gegenbauer polynomials.
\newblock In {\em Topics in classical analysis and applications in honor of
  Daniel Waterman}, pages 73--87. World Scientific.

\bibitem[Diakonikolas et~al., 2017]{diakonikolas2017statistical}
Diakonikolas, I., Kane, D.~M., and Stewart, A. (2017).
\newblock Statistical query lower bounds for robust estimation of
  high-dimensional gaussians and gaussian mixtures.
\newblock In {\em 2017 IEEE 58th Annual Symposium on Foundations of Computer
  Science (FOCS)}, pages 73--84. IEEE.

\bibitem[Dimitrov and Nikolov, 2010]{DIMITROV20101793}
Dimitrov, D.~K. and Nikolov, G.~P. (2010).
\newblock Sharp bounds for the extreme zeros of classical orthogonal
  polynomials.
\newblock {\em Journal of Approximation Theory}, 162(10):1793--1804.
\newblock Special Issue dedicated to the memory of Borislav Bojanov.

\bibitem[{\relax DLMF}, 2022]{NIST:DLMF}
{\relax DLMF} (2022).
\newblock {\it NIST Digital Library of Mathematical Functions}.
\newblock \url{https://dlmf.nist.gov/}, Release 1.1.10 of 2023-06-15.
\newblock F.~W.~J. Olver, A.~B. {Olde Daalhuis}, D.~W. Lozier, B.~I. Schneider,
  R.~F. Boisvert, C.~W. Clark, B.~R. Miller, B.~V. Saunders, H.~S. Cohl, and
  M.~A. McClain, eds.

\bibitem[Driver and Jordaan, 2012]{DRIVER20121200}
Driver, K. and Jordaan, K. (2012).
\newblock Bounds for extreme zeros of some classical orthogonal polynomials.
\newblock {\em Journal of Approximation Theory}, 164(9):1200--1204.

\bibitem[Dudeja and Hsu, 2018]{dudeja2018learning}
Dudeja, R. and Hsu, D. (2018).
\newblock Learning single-index models in gaussian space.
\newblock In Bubeck, S., Perchet, V., and Rigollet, P., editors, {\em
  Proceedings of the 31st Conference On Learning Theory}, volume~75 of {\em
  Proceedings of Machine Learning Research}, pages 1887--1930. PMLR.

\bibitem[Freedman, 1975]{freedman1975tail}
Freedman, D.~A. (1975).
\newblock On tail probabilities for martingales.
\newblock {\em the Annals of Probability}, pages 100--118.

\bibitem[Frei et~al., 2020]{frei2020agnostic}
Frei, S., Cao, Y., and Gu, Q. (2020).
\newblock Agnostic learning of a single neuron with gradient descent.
\newblock {\em Advances in Neural Information Processing Systems},
  33:5417--5428.

\bibitem[Frye and Efthimiou, 2012]{frye2012spherical}
Frye, C. and Efthimiou, C.~J. (2012).
\newblock Spherical harmonics in p dimensions.
\newblock {\em arXiv preprint arXiv:1205.3548}.

\bibitem[Gautschi, 1959]{gautschi1959some}
Gautschi, W. (1959).
\newblock Some elementary inequalities relating to the gamma and incomplete
  gamma function.
\newblock {\em J. Math. Phys}, 38(1):77--81.

\bibitem[Goel et~al., 2020]{goel2020superpolynomial}
Goel, S., Gollakota, A., Jin, Z., Karmalkar, S., and Klivans, A. (2020).
\newblock Superpolynomial lower bounds for learning one-layer neural networks
  using gradient descent.
\newblock In {\em International Conference on Machine Learning}, pages
  3587--3596. PMLR.

\bibitem[Kakade et~al., 2011]{kakade2011efficient}
Kakade, S.~M., Kanade, V., Shamir, O., and Kalai, A. (2011).
\newblock Efficient learning of generalized linear and single index models with
  isotonic regression.
\newblock {\em Advances in Neural Information Processing Systems}, 24.

\bibitem[Kalai and Sastry, 2009]{kalai2009isotron}
Kalai, A.~T. and Sastry, R. (2009).
\newblock The isotron algorithm: High-dimensional isotonic regression.
\newblock In {\em COLT}.

\bibitem[Klartag, 2007]{klartag2007central}
Klartag, B. (2007).
\newblock A central limit theorem for convex sets.
\newblock {\em Inventiones mathematicae}, 168(1):91--131.

\bibitem[Laforgia and Natalini, 2013]{laforgia2013some}
Laforgia, A. and Natalini, P. (2013).
\newblock On some inequalities for the gamma function.
\newblock {\em Advances in Dynamical Systems and Applications}, 8(2):261--267.

\bibitem[Niles-Weed and Rigollet, 2022]{niles2022estimation}
Niles-Weed, J. and Rigollet, P. (2022).
\newblock Estimation of wasserstein distances in the spiked transport model.
\newblock {\em Bernoulli}, 28(4):2663--2688.

\bibitem[Paty and Cuturi, 2019]{paty2019subspace}
Paty, F.-P. and Cuturi, M. (2019).
\newblock Subspace robust wasserstein distances.
\newblock In {\em International conference on machine learning}, pages
  5072--5081. PMLR.

\bibitem[R{\"o}llin, 2013]{rollin2013stein}
R{\"o}llin, A. (2013).
\newblock Stein's method in high dimensions with applications.
\newblock In {\em Annales de l'IHP Probabilit{\'e}s et statistiques},
  volume~49, pages 529--549.

\bibitem[Shalev-Shwartz et~al., 2010]{shalev2010learning}
Shalev-Shwartz, S., Shamir, O., and Sridharan, K. (2010).
\newblock Learning kernel-based halfspaces with the zero-one loss.
\newblock {\em arXiv preprint arXiv:1005.3681}.

\bibitem[Shamir, 2018]{shamir2018distribution}
Shamir, O. (2018).
\newblock Distribution-specific hardness of learning neural networks.
\newblock {\em The Journal of Machine Learning Research}, 19(1):1135--1163.

\bibitem[Soltanolkotabi, 2017]{soltanolkotabi2017learning}
Soltanolkotabi, M. (2017).
\newblock Learning relus via gradient descent.
\newblock {\em Advances in neural information processing systems}, 30.

\bibitem[Song et~al., 2021]{song2021cryptographic}
Song, M.~J., Zadik, I., and Bruna, J. (2021).
\newblock On the cryptographic hardness of learning single periodic neurons.
\newblock {\em Advances in Neural Processing Systems (NeurIPS)}.

\bibitem[Stam, 1982]{stam1982limit}
Stam, A.~J. (1982).
\newblock Limit theorems for uniform distributions on spheres in
  high-dimensional euclidean spaces.
\newblock {\em Journal of Applied probability}, 19(1):221--228.

\bibitem[Szego, 1939]{szego1939orthogonal}
Szego, G. (1939).
\newblock {\em Orthogonal polynomials}, volume~23.
\newblock American Mathematical Soc.

\bibitem[Ursell, 2007]{ursell2007integrals}
Ursell, F. (2007).
\newblock Integrals with nearly coincident branch points: Gegenbauer
  polynomials of large degree.
\newblock {\em Proceedings of the Royal Society A: Mathematical, Physical and
  Engineering Sciences}, 463(2079):697--710.

\bibitem[Watson, 1922]{watson1922treatise}
Watson, G.~N. (1922).
\newblock {\em A treatise on the theory of Bessel functions}, volume~3.
\newblock The University Press.

\bibitem[Wu, 2022]{wu2022learning}
Wu, L. (2022).
\newblock Learning a single neuron for non-monotonic activation functions.
\newblock In {\em International Conference on Artificial Intelligence and
  Statistics}, pages 4178--4197. PMLR.

\bibitem[Yehudai and Shamir, 2020]{yehudai2020learning}
Yehudai, G. and Shamir, O. (2020).
\newblock Learning a single neuron with gradient methods.
\newblock In {\em Conference on Learning Theory}, pages 3756--3786. PMLR.

\end{thebibliography}

\clearpage

\appendix

$ $

{\Huge \textsc{Appendix}}

\vspace{0.5cm}

We gather in the appendix the proofs of the theorems, propositions and lemmas stated in the main text. In Section~\ref{sec:proof_lemma_1}, the reader will find a short proof of Lemma \ref{lem:initialization}. In Section~\ref{sec:sgdapp}, we prove Theorems~\ref{thm:weak_recovery} and \ref{thm:strong_recovery} on the SGD dynamics. Sections~\ref{sec:symapp} and~\ref{sec:nonsymapp} are respectively devoted to prove that the $\LPG$ property holds in some spherical symmetric case and under some perturbative regime.






\section{Proof of Lemma \ref{lem:initialization}}
\label{sec:proof_lemma_1}

Let us first recall the Lemma before writing a proof of it.

\begin{lemma*}
    For all $a > 0$, we have $ \P_{\theta_0}(m_{\theta_0} \geq a/\sqrt{d}) \leq a^{-1}e^{-a^2/4}  $. Additionally, for any $\delta>0$ such that $ \max\{a, \delta\} \leq \sqrt{d}/4$, we have the lower bound:
    $\P_{\theta_0}(m_{\theta_0} \geq a/\sqrt{d}) \geq \frac{\delta}{4} e^ {- (a + \delta)^2} $.
\end{lemma*}
\begin{proof}
By rotation invariance of the uniform distribution of the sphere, $m_{\theta_0}$ is distributed according to~$\theta_0[1]$, the first coordinate of the vector $\theta_0 \in \S$. By a particular case of Stam’s formula \cite[relation (3)]{stam1982limit}, we know that for $d \geq 3$, both are distributed according to the probability of density, $\forall  t \in \R$,
\begin{align*}
    \tau(t):=\frac{\Gamma(d/2)}{\sqrt{\pi} \Gamma((d-1)/2)} \left(1 - t^2\right)^{(d-3)/2} \mathds{1}_{[-1, 1]}.
\end{align*}
First, note that we can upper and lower bound the constant by the following:
\begin{align*}
   \sqrt{\frac{d}{3}} \leq \frac{\Gamma(d/2)}{\Gamma((d-1)/2)} \leq \sqrt{\frac{d}{2}},
\end{align*}
for $d \geq 6$ by \cite[equality 3.2]{laforgia2013some}, which was already proved in \cite{gautschi1959some}. 

Hence, in terms of the upper bound, we have:
\begin{align*}
    \P_{\theta_0}(m_{\theta_0} \geq a/\sqrt{d}) &\leq \sqrt{\frac{d}{2\pi}} \int_{a/\sqrt{d}}^1  \left(1 - t^2\right)^{(d-3)/2} dt \\
    &\leq \sqrt{\frac{d}{2 \pi}} \int_{a/\sqrt{d}}^1  e^{-\frac{d-3}{2} t^2} dt \\
    &\leq \frac{1}{\sqrt{2\pi}}\frac{d}{a} \int_{a/\sqrt{d}}^1 t  e^{-\frac{d-3}{2} t^2} dt \\
    &\leq \frac{1}{2a}   e^{-a^2/4}~,
\end{align*}
which concludes the first part of the result.

Second, let $a \leq \sqrt{d}/4$ and take any $0<\delta< \sqrt{d}/4$. We have, 
\begin{align*}
    \P_{\theta_0}(m_{\theta_0} \geq a/\sqrt{d}) &\geq \sqrt{\frac{d}{3\pi}} \int_{a/\sqrt{d}}^1  \left(1 - t^2\right)^{(d-3)/2} dt \\
    &\geq  \sqrt{\frac{d}{3\pi}} \int_{a/\sqrt{d}}^{(a + \delta)/\sqrt{d}}  \left(1 - t^2\right)^{(d-3)/2} dt \\
    &\geq \sqrt{\frac{d}{3\pi}} \frac{\delta}{\sqrt{d}} \left(1 - \frac{(a + \delta)^2}{d} \right)^{(d-3)/2},
\end{align*}   
where the last inequality simply comes from the fact that $t \to (1-t^2)^{(d-3)/2}$ is non-increasing. Going further, if we lower bound the term with the negative $-3/2$ power by $1$, we have
\begin{align*}
    \P_{\theta_0}(m_{\theta_0} \geq a/\sqrt{d}) &\geq \frac{\delta}{4} \exp\left( \frac{d}{2} \log\left(1 - \frac{(a + \delta)^2}{d} \right)\right) \\
    &\geq  \frac{\delta}{4} \exp\left( - \frac{(a + \delta)^2}{2 \left( 1 - (a + \delta)^2/d \right)} \right)~,
\end{align*}
where the last inequality come from the classical bound $\log (1 + x) \geq x/(1+x)$, for $x > - 1$. Furthermore, as, $(a+\delta)^2 \leq d/2$, we have finally
\begin{align*}
    \P_{\theta_0}(m_{\theta_0} \geq a/\sqrt{d}) &\geq  \frac{\delta}{4} e^ {- (a + \delta)^2},
\end{align*}
which finalizes the proof of the Lemma.
\end{proof}

\section{Proofs on the SGD dynamics: Section \ref{sec:sgd}}
\label{sec:sgdapp}

We first recall the notations useful to fully describe the dynamics. In Section~\ref{subsec:weak_recovery}, we prove Theorem~\ref{thm:weak_recovery} about weak recovery. In Section~\ref{subsec:strong_recovery}, we prove Theorem~\ref{thm:strong_recovery} about strong recovery. Finally, 

\subsection{Recalling the dynamics}

For the sake of clarity, let us recall the notations and facts developed in the main text. The overall loss classically corresponds to the average over all the \textit{data} of a square penalisation $l(\theta, x) = (\phi_\theta(x) - \phi_{\theta^*}(x))^2$ so that 
$$L(\theta) = \mathbb{E}_{\nu} [(\phi_\theta(x) - \phi_{\theta^*}(x))^2].$$
To recover the signal given by $\theta^*$, we run \textit{online stochastic gradient descent} on the sphere $\S$. This corresponds to have at each iteration $t \in \mathbb{N}^*$ a \textit{fresh sample} $x_t$ independent of the filtration $\mathcal{F}_t = \sigma(x_1, \dots, x_{t-1})$ and perform a spherical gradient step, with step-size $\delta>0$, with respect to $\theta \to l(\theta, x_t)$:
\begin{align}
    \label{eq:SGD_app}
    \theta_{t+1} &= \frac{\theta_t - \delta \nabla_\theta^\mathcal{S}  l(\theta_t, x_t)}{\left|\theta_t - \delta \nabla_\theta^\mathcal{S}  l(\theta_t, x_t)\right|},
\end{align}
initialized at $\theta_0$ uniformly on the sphere: $\theta_{0} \sim \mathrm{Unif}(\S)$. Recall that we use the notation $\nabla_\theta^\mathcal{S}$ to denote the spherical gradient, that is $$ \nabla_\theta^\mathcal{S} l(\theta, x) =  \nabla_\theta  l(\theta, x) - (\nabla_\theta  l(\theta, x) \cdot \theta) \theta.$$
Let us introduce the following frequently used notations: for all $t \in \mathbb{N}^*$, we denote the normalization by $r_t := r(\theta_t, x_t) = \left|\theta_t - \delta \nabla_\theta^\mathcal{S}  l(\theta_t, x_t)\right|$ and the martingale induced by the stochastic gradient descent as $M_t = M(\theta_t, x_t) = l(\theta_t, x_t) - \E_\nu [ l(\theta_t, x)]$.

\subsection{Tracking the correlation.} Recall that the relevant signature of the dynamics is the one-dimensional correlation: $m_t = \theta_t \cdot \theta^*$. Let us re-write the iterative recursion followed by $(m_t)_{t \geq 0}$, with the notation recalled above, for $t \in \mathbb{N}^*$, 
\begin{align}
\label{eq:dynamics_main_m_app}
m_{t+1} = \frac{1}{r_t}\left( m_t - \delta \nabla^S l(\theta_t, x_t) \cdot \theta^*  \right) =  \frac{1}{r_t}\left( m_t - \delta\nabla^S L(\theta_t) \cdot \theta^* - \delta\nabla^S M_t \cdot \theta^*  \right).
\end{align}
We want to lower bound the right hand side of~\eqref{eq:dynamics_main_m_app}. We begin by a lower bound on~$1/r_t$. 

\begin{lemma}[Bound on $r_t$]
    For all $t \in \N^*$, we have $1/r_t \geq 1 - \delta^2 \left|\nabla_\theta l(\theta_t, x_t)\right|^2$.    
\end{lemma}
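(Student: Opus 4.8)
The plan is to bound $r_t$ from above and then invert. Recall that $r_t = |\theta_t - \delta \nabla_\theta^{\mathcal S} l(\theta_t,x_t)|$, where $\theta_t \in \mathcal S_{d-1}$ is a unit vector and the spherical gradient $\nabla_\theta^{\mathcal S} l(\theta_t,x_t)$ is orthogonal to $\theta_t$ by construction. Hence the vector $\theta_t - \delta \nabla_\theta^{\mathcal S} l(\theta_t,x_t)$ decomposes into two orthogonal pieces, and Pythagoras gives
\begin{equation}
    r_t^2 = 1 + \delta^2 \bigl|\nabla_\theta^{\mathcal S} l(\theta_t,x_t)\bigr|^2 .
\end{equation}
In particular $r_t \geq 1$, so $1/r_t = 1/\sqrt{1+\delta^2 |\nabla_\theta^{\mathcal S} l(\theta_t,x_t)|^2}$.

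Next I would use the elementary inequality $1/\sqrt{1+u} \geq 1-u$ valid for all $u \geq 0$ (indeed $1-u \leq 1/\sqrt{1+u}$ follows since for $u \in [0,1]$ one has $(1-u)^2(1+u) = 1 - u - u^2 + u^3 \leq 1$, and for $u > 1$ the left side is $\leq 0$). Applying this with $u = \delta^2 |\nabla_\theta^{\mathcal S} l(\theta_t,x_t)|^2$ yields
\begin{equation}
    \frac{1}{r_t} \geq 1 - \delta^2 \bigl|\nabla_\theta^{\mathcal S} l(\theta_t,x_t)\bigr|^2 .
\end{equation}
Finally, since $\nabla_\theta^{\mathcal S} l = \nabla_\theta l - (\nabla_\theta l \cdot \theta_t)\theta_t$ is the orthogonal projection of $\nabla_\theta l$ onto the tangent space, its norm is at most that of the full gradient, $|\nabla_\theta^{\mathcal S} l(\theta_t,x_t)| \leq |\nabla_\theta l(\theta_t,x_t)|$. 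Substituting this into the previous display gives the claimed bound $1/r_t \geq 1 - \delta^2 |\nabla_\theta l(\theta_t,x_t)|^2$.

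There is no real obstacle here; the only point requiring a small amount of care is the direction of the elementary inequality (one wants a lower bound on $1/r_t$, hence needs $1/\sqrt{1+u} \geq 1-u$ rather than the more familiar $1/\sqrt{1+u} \leq 1-u/2+\cdots$), and the observation that projecting onto the tangent space only decreases the norm — both are routine. The lemma will then feed into the recursion \eqref{eq:dynamics_main_m_app} for $m_{t+1}$, where the term $-\delta^2 |\nabla_\theta l|^2 m_t$ produced by this bound is controlled in expectation via Assumption \ref{ass:momentgrowth}.
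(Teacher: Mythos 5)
Your proof is correct and follows essentially the same route as the paper: orthogonality gives $r_t^2 = 1 + \delta^2 |\nabla_\theta^{\mathcal S} l|^2$, the projection bound gives $|\nabla_\theta^{\mathcal S} l| \leq |\nabla_\theta l|$, and the elementary inequality $(1+u)^{-1/2} \geq 1-u$ for $u\geq 0$ finishes it. The only cosmetic difference is that you apply the elementary inequality before the projection bound while the paper does it in the opposite order; you also spell out the verification of the inequality, which the paper takes for granted.
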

\begin{proof}
    For all $t \in \N^*$, we have, by orthogonality of as $\theta_t$ and $\nabla_\theta^\mathcal{S}  l(\theta_t, x_t)$, that
    \begin{align*}
        r_t^2 = \left|\theta_t - \delta \nabla_\theta^\mathcal{S}  l(\theta_t, x_t)\right|^2  = 1 + \delta^2 \left|\nabla_\theta^\mathcal{S} l(\theta_t, x_t)\right|^2 \leq 1 + \delta^2 \left|\nabla_\theta l(\theta_t, x_t)\right|^2.
    \end{align*}
    Hence, from the inequality $ (1 + u)^{-1/2} \geq 1 - u $ for all $ u > 0$, we conclude the proof.
\end{proof}
Thanks the fact that $L$ satisfies $\LPG(s,b/\sqrt{d})$, ie $-\nabla^\mathcal{S}  L(\theta) \cdot \theta^* \geq C (1 - m)(m - b/\sqrt{d})^{s-1}$, we have that the dynamics satisfies the following inequality between iterates:
\begin{align}
\label{eq:dynamics_lower_m}
    m_{t+1} &\geq m_t + C \delta \left(1 - m_t\right) \left(m_t - \frac{b}{\sqrt{d}}\right)^{s-1} - \delta\nabla^S M_t \cdot \theta^* - \delta^2  |m_t| \left|\nabla_\theta l(\theta_t, x_t)\right|^2 -  \delta^3 \xi_t, 
\end{align}
where $\xi_t = \left|\nabla_\theta l(\theta_t, x_t)\right|^2 |\nabla^S l(\theta_t, x_t) \cdot \theta^* |$. All the terms of the inequality have a natural origin: the second term is the ideal term coming from the gradient flow and the growth condition, the third term corresponds to the martingale increments coming form the noise induced by SGD and the two final terms are simply discretization errors coming from discrete nature of the procedure and the projection step. 

However, to have a tight dependency with respect to the dimension, we need to be extra careful. This is why, following \cite{arous2021online}, we decompose this term introducing a threshold $\MM > 0$, to be fixed later, such that: 
\begin{align*}
    |m_t| \left|\nabla_\theta l(\theta_t, x_t)\right|^2 & =|m_t| \left|\nabla_\theta l(\theta_t, x_t)\right|^2 \mathds{1}_{\{\left|\nabla_\theta l(\theta_t, x_t)\right|^2 \leq \MM\}} + |m_t| \left|\nabla_\theta l(\theta_t, x_t)\right|^2\mathds{1}_{\{\left|\nabla_\theta l(\theta_t, x_t)\right|^2 > \MM\}}
\end{align*}
With the same notations and summing all these terms until time $T \in \N^*$, we can write 
\begin{align*}
    m_{T} \geq m_0 &+ C \delta \sum_{t = 0}^{T-1} (1 - m_t) (m_t - b/\sqrt{d})^{s-1} - \delta \sum_{t = 0}^{T-1} \nabla^S M_t \cdot \theta^* - \delta^2 \sum_{t = 0}^{T-1} |m_t| \left|\nabla_\theta l_t\right|^2 \mathds{1}_{\{\left|\nabla_\theta l_t\right|^2 \leq \MM\}} \\
    &- \delta^2 \sum_{t = 0}^{T-1} |m_t| \left|\nabla_\theta l_t\right|^2 \mathds{1}_{\{\left|\nabla_\theta l_t\right|^2 > \MM\}}  - \delta^3 \sum_{t = 0}^{T-1} \xi_t, \nonumber
\end{align*}
where we use, for the sake of compactness, the shortcut notation $l_t = l(x_t, \theta_t)$. The strategy of the proof is the following: the first term is the drift term that makes the correlation grow, the second term is simply a martingale term that we deal with via standard martingale inequality, and the forth and fifth term are discretization error that we will bound loosely. The difficulty comes from the third term: the proof is based on the fact that we use a ``part'' of the drift term (say half) to control it. This is why we decide to rewrite finally our inequality as,    
\begin{align}
\label{eq:dynamics_lowersum_m}
    m_{T} \geq m_0 &+ \delta \frac{C}{2} \sum_{t = 0}^{T-1} (1 - m_t) (m_t - b/\sqrt{d})^{s-1} - \delta \sum_{t = 0}^{T-1} \nabla^S M_t \cdot \theta^* - \delta \sum_{t = 0}^{T-1} D_t \\
    &- \delta^2 \sum_{t = 0}^{T-1} |m_t| \left|\nabla_\theta l_t\right|^2 \mathds{1}_{\{\left|\nabla_\theta l_t\right|^2 > \MM\}}  - \delta^3 \sum_{t = 0}^{T-1} \xi_t, \nonumber
\end{align}
where we have defined $D_t := \frac{C}{2} (1 - m_t) (m_t - b/\sqrt{d})^{s-1} - \delta |m_t| \left|\nabla_\theta l_t\right|^2 \mathds{1}_{\{\left|\nabla_\theta l_t\right|^2 \leq \MM\}}$.
The following section show how to control these five terms in a quantitative way.

\subsection{Weak recovery}
\label{subsec:weak_recovery}

\paragraph{Good initialization.}\textit{During all this section, we condition on the event $\{m_0 \geq 5 b /\sqrt{d}\}$}.

Before stating these lemmas, let us introduce some new notations. As already introduce, we recall that we denote $S_\eta := \{\theta \in \S, \, m_\theta \geq \eta\}$, the spherical cap of level $\eta \in (0,1)$. Moreover for $\alpha \in (-1, 1) $, similarly to what is done in~\cite{arous2020online}, we define the following stopping times $\tau^+_\alpha := \inf\{ t \geq 0, \, m_{\theta_t} \geq \alpha  \}$ and $\tau^-_\alpha := \inf\{ t \geq 0, \, m_{\theta_t} \leq \alpha  \}$ reciprocally as the first time when $(\theta_t)_{t \geq 0}$ enters in $S_\alpha$ or leaves $S_\alpha$.

\subsubsection{Proof of Theorem~\ref{thm:weak_recovery}}

Thanks to Lemmas~\ref{lem:ODE_term}, \ref{lem:first_artingale_term}, \ref{lem:submartingale_term}, \ref{lem:first_discretization_term} and \ref{lem:second_discretization_term}, that serve bounding all the terms in the $m_T$ inequality, there exists a constant $K$ that depend solely on the model such that  we have the following lower bound: for all $\lambda > 0$, conditionally to the event on the events $\{T \leq \tau_{1/2}^+ \, \wedge \tau^-_{2b/\sqrt{d}}\ \}\,$,
\begin{align*}
    m_{T} &\geq m_0 + \frac{C}{2^{s+1}} \delta \sum_{t = 0}^{T-1} m_t^{s-1} - 4 \lambda,
\end{align*}
with probability larger that $\displaystyle 1 - \left(\frac{K T \delta^2}{\lambda^2} + \exp\left( -\frac{\lambda^2}{2 K^2 \delta^2 T + \lambda \delta (C + \delta \MM)}\right) +\frac{K T d^2 \delta^2}{\lambda \MM} +\frac{K T d \delta^3}{\lambda} \right)$. Now we choose $\lambda = b /\sqrt{d}$ and $\MM = d^{3/2}$ so that
\begin{align*}
    m_{T} &\geq \frac{b}{\sqrt{d}} + \frac{C}{2^{s+1}} \delta \sum_{t = 0}^{T-1} m_t^{s-1},
\end{align*}
with probability at least $1 - p_{\delta, M}(T)$, where we defined naturally
$$p_{\delta, M}(T) := \left(\frac{K T d \delta^2}{b^2} + \exp\left( -\frac{b^2}{2 K^2 d \delta^2 T + b \sqrt{d} \delta (C + \delta \MM)}\right) +\frac{K T d^{5/2} \delta^2}{b \MM} +\frac{K T d^{3/2} \delta^3}{b}  
 \right).$$ 
Let us upper bound the probability $p_{\delta, M}(T)$. Let us set $\varepsilon > 0$ a small constant. First, in the exponential term, the term $b \sqrt{d} \delta (C + \delta d^{3/2})$ is negligible in virtue of the fact that in any of the cases of Theorem~\ref{thm:weak_recovery}, we have $\delta \leq \varepsilon/d$. Moreover, for the sake of clarity, we gather all constant $K, C, b$ as one constant generic $\mathsf{K}$, as these depend only on the data distribution and the link function. Hence, for $d$ large enough,
\begin{align*}
p_{\delta, M}(T) &\leq \mathsf{K} \left(d T \delta^2 + \exp\left( -\frac{1}{d  T \delta^2 } \right) + d T \delta^2 + d^{3/2} T \delta^3 \right),
\end{align*}
and as $d^{3/2} T \delta^3 \lesssim d T \delta^2 $ for the range of $\delta$ we choose, we have $p_{\delta, M}(T)  \leq \mathsf{K} \left(d T \delta^2 + \exp\left( -\frac{1}{d  T \delta^2 } \right) \right)$, and considering that we will take in any case $d T \delta^2 \leq 1$, as we have the inequality $\exp\left( -\frac{1}{d  T \delta^2 }\right) \leq d T \delta^2$, so that finally
\begin{align*}
p_{\delta, M}(T)  \leq \mathsf{K} d T \delta^2
\end{align*}
%
We divide the proof into the three cases $s = 1,\, s=  2, \, s \geq 3$.

{\bfseries Case} $s = 1$, $\delta = \varepsilon/d$. In this case, we have that with probability $1 - p_{\delta, M}(T)$, 
\begin{align*}
    m_{T} &\geq \frac{b}{\sqrt{d}} + \frac{C \delta}{2^{s}} T. 
\end{align*}
The right and side is larger than $1/2$ as soon as $\delta T \geq 2^{s}/C$. From this we have that with probability at least $1 - p_{\delta, M}(T)$, the hitting time is upper bounded by
$$ \tau^+_{1/2} \leq \frac{2^s}{ C \delta}.$$
Now, taking $\delta = \varepsilon d^{-1}$, we can check that for $\varepsilon$  small enough, $d T \delta^2 \leq 2^s \varepsilon/C =\varepsilon \mathcal{O}(1) $ so that we have that with probability at least $\displaystyle 1 - \mathsf{K} \varepsilon$, we have 
$$ \tau^+_{1/2} \leq \frac{\mathsf{K}}{ \varepsilon } d.$$

{\bfseries Case} $s = 2$, $\delta = \varepsilon/(d\log d)$. Now by a discrete version of Grönwall inequality, recalled in Lemma~\ref{lem:gron_bihari}, we have with probability at least $1 - p_{\delta, M}(T)$,
\begin{align*}
    m_{T} - \frac{b}{\sqrt{d}} &\geq \frac{b}{\sqrt{d}} \left(  1 + \delta \frac{C}{2}\right)^{T} \geq \frac{b}{\sqrt{d}} e^{C \delta T}, 
\end{align*}
for $d$ large enough. And as the right hand side is larger than $1/2 + b / \sqrt{d}$ whenever,  $$\delta T \geq \frac{1}{C} \log (\sqrt{d}/4b),$$
for $d$ large enough compared to $b$. Then taking such a $T$, with probability at least $1 - p_{\delta, M}(T)$ , the hitting time is upper bounded by 
$$ \tau^+_{1/2} \leq \frac{2}{ C \delta} \log \left(d\right).$$
Now, taking $\delta = \varepsilon d^{-1}(\log d)^{-1}$, we can check that for $\varepsilon$  small enough, $d T \delta^2 \leq \frac{ \varepsilon }{C } =\varepsilon \mathcal{O}(1)$ so that we have that with probability at least $\displaystyle 1 - \mathsf{K} \varepsilon$, we have
$$ \tau^+_{1/2} \leq \frac{\mathsf{K}}{ \varepsilon } d \log(d)^2 .$$

{\bfseries Case} $s \geq 3$, $\delta = \varepsilon d^{-s/2}$. Now by the discrete version of Bihari-LaSalle inequality, recalled in Lemma~\ref{lem:gron_bihari}, we have with probability at least $1 - p_{\delta, M}(T)$, 
\begin{align*}
    m_{T} - \frac{b}{\sqrt{d}} &\geq \frac{b}{\sqrt{d}} \left(  1 - \delta \frac{C (s-2)}{2} \left(\frac{b}{\sqrt{d}}\right)^{ s - 2} T \right)^{-\frac{1}{s-2}}. 
\end{align*}
And as the right hand side is larger than $1/2 + b / \sqrt{d}$ whenever,  $$\delta T \geq \frac{d^{(s-2)/2}}{C (s - 2) b^{s-2}},$$
for $d$ large enough compare to $b$. Then taking such a $T$, with probability at least $1 - p_{\delta, M}(T)$, the hitting time is upper bounded by 
$$ \tau^+_{1/2} \leq \frac{1}{ C b^{s-2}} \frac{d^{\frac{s-2}{2}}}{\delta}.$$
Now, taking $\delta = \varepsilon d^{-s/2}$, we can check that for $\varepsilon$  small enough, $d T \delta^2 \leq \frac{ \varepsilon }{C (s - 2) b^{s-2} } =\varepsilon \mathcal{O}(1)$ so that we have that with probability at least $\displaystyle 1 - \mathsf{K} \varepsilon$, we have
$$ \tau^+_{1/2} \leq \frac{\mathsf{K}}{ \varepsilon} d^{s-1} .$$

\subsubsection{Technical intermediate result to lower bound each term of Eq.~\texorpdfstring{\eqref{eq:dynamics_lowersum_m}}{lol}}

\begin{lemma}[ODE term]
\label{lem:ODE_term}
   Conditioned to the event  $\{T \leq \tau_{1/2}^+ \, \wedge \tau^-_{2b/\sqrt{d}}\ \}\,$, we have the inequality 
    \begin{align*}
    \sum_{t = 0}^{T-1} (1 - m_t) (m_t - b/\sqrt{d})^{s-1} \geq \frac{1}{2^s} \sum_{t = 0}^{T-1} m_t^{s-1}.
    \end{align*}
\end{lemma}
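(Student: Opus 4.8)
The plan is to unpack the two stopping-time constraints that define the conditioning event and then bound the summand on the left-hand side term by term. On the event $\{T \leq \tau_{1/2}^+ \wedge \tau^-_{2b/\sqrt{d}}\}$, every iterate $m_t$ with $0 \leq t \leq T-1$ satisfies the sandwich $2b/\sqrt{d} < m_t < 1/2$: the upper bound holds because the cap $S_{1/2}$ has not yet been reached (so $t < \tau_{1/2}^+$), and the lower bound holds because the process has not yet dropped to level $2b/\sqrt{d}$ (so $t < \tau^-_{2b/\sqrt{d}}$, recalling the convention that $\tau^-_\alpha$ is the first time $m_{\theta_t}\le\alpha$). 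From $m_t < 1/2$ we get $1 - m_t > 1/2$. From $m_t > 2b/\sqrt{d}$ we get $m_t - b/\sqrt{d} > m_t/2 > 0$, and hence $(m_t - b/\sqrt{d})^{s-1} \geq (m_t/2)^{s-1} = 2^{-(s-1)} m_t^{s-1}$; note this is also valid (trivially) when $s=1$, both sides being $1$.

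Multiplying the two estimates gives, for each $t$ in the relevant range,
\[
(1 - m_t)\Bigl(m_t - \tfrac{b}{\sqrt{d}}\Bigr)^{s-1} \;\geq\; \tfrac{1}{2}\cdot 2^{-(s-1)}\, m_t^{s-1} \;=\; 2^{-s}\, m_t^{s-1}.
\]
Summing over $t = 0, \dots, T-1$ then yields $\sum_{t=0}^{T-1}(1-m_t)(m_t-b/\sqrt{d})^{s-1} \geq 2^{-s}\sum_{t=0}^{T-1} m_t^{s-1}$, which is exactly the claim.

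I do not expect a genuine obstacle here; the argument is essentially an elementary per-term inequality. The only points meriting a little care are (a) verifying that the strict inequalities coming from the definitions of $\tau^+_{1/2}$ and $\tau^-_{2b/\sqrt{d}}$ are available for \emph{all} indices $t \leq T-1$ (in particular that $T \leq \tau^-_{2b/\sqrt{d}}$ forces $m_t > 2b/\sqrt{d}$, not merely $\geq$, for $t \le T-1$), which is consistent with the standing assumption $m_0 \geq 5b/\sqrt{d}$, and (b) handling the degenerate exponent $s=1$, where $m_t^{s-1}=1$ and the inequality reduces to $1-m_t \geq 1/2$. Both are immediate once the stopping-time conventions of the previous subsection are recalled.
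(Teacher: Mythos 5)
Your proof is correct and follows essentially the same route as the paper's: on the conditioning event, every $m_t$ with $t\le T-1$ lies in $(2b/\sqrt{d},1/2)$, which gives $1-m_t\ge 1/2$ and $m_t-b/\sqrt{d}\ge m_t/2$, and multiplying and summing yields the claimed bound. (Your write-up is in fact slightly more careful than the paper's one-line proof, which states the set inclusion with the roles of $t$ and $T$ reversed; your version has it the right way around.)
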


\begin{proof}
    This simply results from the fact that for all $t \leq T - 1$, we have $\{t \leq \tau_{1/2}^+ \, \wedge \tau^-_{2b/\sqrt{d}}\ \}\, \subset \{T \leq \tau_{1/2}^+ \, \wedge \tau^-_{2b/\sqrt{d}}\ \}\,$, so that we can use the inequalities $1 - m \geq 1/2$ and $m - b/\sqrt{d} \geq m/2$. Summing these terms until $T - 1$ gives the proof of the lemma.
\end{proof}

\begin{lemma}[First martingale term]
\label{lem:first_artingale_term}
    For all $\lambda > 0$, we have that 
    \begin{align}
        \P \left( \sup_{t \leq T}\, \delta \left|\sum_{k = 0}^{t-1} \nabla^S M_k \cdot \theta^*\right|  \geq \lambda  \right) \leq \frac{K T \delta^2}{\lambda^2}, 
    \end{align}
    where $K > 0$, that depends solely on the model through $f, \nu$.
\end{lemma}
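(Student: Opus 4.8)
The plan is to identify the partial sums
\[
N_t \;:=\; \sum_{k=0}^{t-1} \nabla^{\mathcal{S}} M_k \cdot \theta^*
\]
as a martingale with respect to $(\mathcal{F}_t)_t$ and to combine Doob's $L^2$ maximal inequality with a uniform (in $k$ and in $d$) bound on the second moments of its increments. For the martingale structure: since $M_k = l(\theta_k,x_k) - \E_\nu[l(\theta_k,x)]$, its spherical-gradient increment satisfies $\nabla^{\mathcal{S}} M_k = \nabla^{\mathcal{S}}_\theta l(\theta_k,x_k) - \nabla^{\mathcal{S}}_\theta L(\theta_k)$, and because $\theta_k$ is $\mathcal{F}_k$-measurable while the fresh sample $x_k$ is independent of $\mathcal{F}_k$ with $\nabla^{\mathcal{S}}_\theta L(\theta_k) = \E_x[\nabla^{\mathcal{S}}_\theta l(\theta_k,x)]$, we get $\E[\nabla^{\mathcal{S}} M_k\cdot\theta^* \mid \mathcal{F}_k] = 0$. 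Thus $N_0=0$, $N_t$ is an $(\mathcal{F}_t)$-martingale, and its increments are orthogonal in $L^2$. Applying Doob's $L^2$ maximal inequality and then Chebyshev's inequality gives
\[
\P\!\left( \sup_{t\le T}\, \delta\,|N_t| \ge \lambda \right)
\;\le\; \frac{\delta^2}{\lambda^2}\, \E\!\left[ \sup_{t\le T} |N_t|^2 \right]
\;\le\; \frac{4\delta^2}{\lambda^2}\, \E\big[|N_T|^2\big]
\;=\; \frac{4\delta^2}{\lambda^2} \sum_{k=0}^{T-1} \E\big[(\nabla^{\mathcal{S}} M_k\cdot\theta^*)^2\big],
\]
so everything reduces to a uniform bound on the increment second moments.

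For the increment bound, I would first note that the conditional variance is dominated by the conditional second moment, so $\E[(\nabla^{\mathcal{S}} M_k\cdot\theta^*)^2] \le \sup_{\theta\in\mathcal{S}_{d-1}} \E_x[(\nabla^{\mathcal{S}}_\theta l(\theta,x)\cdot\theta^*)^2]$. Using the explicit formula $\nabla^{\mathcal{S}}_\theta l(\theta,x)\cdot\theta^* = 2\,(\phi(x_\theta)-\phi(x_{\theta^*}))\,\phi'(x_\theta)\,(x_{\theta^*} - m_\theta\, x_\theta)$, together with $(a-b)^2\le 2a^2+2b^2$, $m_\theta^2\le 1$, and the notation $C(u,v)=\phi'(u)\phi(v)$, the square is bounded by a universal constant times $x_{\theta^*}^2\,C^2(x_\theta,x_{\theta^*}) + x_\theta^2\,C^2(x_\theta,x_{\theta^*})$ plus the analogous terms obtained by replacing $\phi(x_{\theta^*})$ by $\phi(x_\theta)$ inside $C$. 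The first line of Assumption \ref{ass:momentgrowth} (applied with the roles of $\theta$ and $\theta^*$ interchanged where needed) bounds each of these expectations by $K$, uniformly in $\theta$ and in $d$; hence $\E[(\nabla^{\mathcal{S}} M_k\cdot\theta^*)^2] \le K'$ with $K'$ depending only on $\phi$ and $\nu$.

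Plugging this into the displayed inequality yields the claim with the constant $4K'$, which we relabel $K$. The proof is essentially routine; the only point that deserves care is the uniformity in the dimension in the last step — one must check that the cross term $m_\theta x_\theta$ introduces no hidden $d$-dependence (it does not, since $|m_\theta|\le1$) and that all the resulting moments are precisely those postulated (with the right powers of $d$, here $d^0$) in Assumption \ref{ass:momentgrowth}.
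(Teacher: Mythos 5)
Your argument is correct and follows the same route as the paper's: identify $N_t$ (there called $H_{t-1}$) as an $(\mathcal{F}_t)$-martingale, apply a Doob-type maximal inequality, and bound the increment second moments uniformly via Assumption~\ref{ass:momentgrowth}. The only cosmetic difference is that you pass through Doob's $L^2$ maximal inequality plus Chebyshev (picking up a factor~4), whereas the paper directly invokes the weak-type (Kolmogorov) form of Doob's inequality; and the paper outsources the uniform second-moment bound to Lemma~\ref{lem:tech_bounds} rather than recomputing it inline as you do.
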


\begin{proof}
    This is a consequence of Doob's maximal inequality for (sub)martingale. Indeed, for $t \leq T $, let $H_{t-1} = \sum_{k = 0}^{t-1} \nabla^S M_k \cdot \theta^*$. We have that $H_t$ is a $\mathcal{F}_t$-adapted martingale and we have the following upper bounded on its variance:
\begin{align*}
     \E[H_{t-1}^2] &= \E\left[\left(\sum_{k = 0}^{t-1} \nabla^S M_k \cdot \theta^* \right)^2\right] \\
     &= \E\left[\sum_{k = 0}^{t-1} \left(\nabla^S M_k \cdot \theta^* \right)^2\right]  \\
     &\leq  t \sup_\theta \E_x\left[ \left(\nabla^S M_k \cdot \theta^* \right)^2\right] \\
     &\leq K t,
\end{align*}
where the last inequality comes from the Lemma~\ref{lem:tech_bounds}. Now, thanks to Doob's maximal inequality, we have for all $\lambda > 0$,
\begin{align*}
    \P \left( \sup_{t \leq T}\, \delta |H_{t-1}|  \geq \lambda  \right) \leq \frac{ \E[H_{T-1}^2] \delta^2}{\lambda^2} \leq \frac{K T \delta^2}{\lambda^2},
\end{align*}
and this concludes the proof of the lemma.
\end{proof}

\begin{lemma}[Submartingale term]
\label{lem:submartingale_term}
    For all $\lambda > 0$, if for all $t \leq T$, $m_t \in [2b/\sqrt{d}, 1/2]$, and $\delta$ is such that $\delta \leq \varepsilon / d$, with a small enough constant $\varepsilon>0$, we have that 
    \begin{align}
         \P\left( \delta\sum_{t=0}^{T-1} D_t \leq - \lambda \right)  \leq \exp \left( - \frac{\lambda^2}{ 2 K^2 \delta^2 T + \lambda \delta (C + \delta \MM)}\right)
    \end{align}
    where $K > 0$, that depends solely on the model through $f, \nu$.
\end{lemma}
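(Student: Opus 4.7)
The bound has the form of Freedman's inequality applied to a supermartingale: the denominator contains a predictable quadratic variation contribution ($2K^{2}\delta^{2}T$) and a uniform increment bound times the deviation threshold ($\lambda\,\delta(C+\delta\MM)$). My plan is therefore to view $\delta\sum_{t}D_{t}$ as a submartingale (i.e.\ with non-negative predictable drift), apply a Freedman-type concentration inequality to its martingale part, and verify that the conditional variance and the increment bound match the advertised constants.

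The first step is to identify the drift and martingale parts and show that the drift is non-negative. Since the first summand in $D_{t}$ is deterministic given $\mathcal{F}_{t}$, only the truncated discretization term contributes to the conditional expectation, and by Assumption~\ref{ass:momentgrowth} with $k=1$ one has
\begin{equation*}
    \E\bigl[\,|\nabla_{\theta}l(\theta_{t},x_{t})|^{2}\mathds{1}_{\{|\nabla_{\theta}l_{t}|^{2}\le\MM\}}\,\big|\,\mathcal{F}_{t}\bigr]\le\E\bigl[|\nabla_{\theta}l_{t}|^{2}\,\big|\,\theta_{t}\bigr]\le Kd.
\end{equation*}
Combined with the conditioning $m_{t}\in[2b/\sqrt{d},1/2]$, which forces $(1-m_{t})(m_{t}-b/\sqrt{d})^{s-1}\ge\tfrac{1}{2}(b/\sqrt{d})^{s-1}$, this yields
\begin{equation*}
\E[D_{t}\mid\mathcal{F}_{t}]\ge\tfrac{C}{4}(b/\sqrt{d})^{s-1}-\tfrac{1}{2}\delta Kd\ge 0
\end{equation*}
provided $\delta\le\varepsilon/d$ with $\varepsilon$ small enough. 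Consequently $\delta\sum_{t}D_{t}$ dominates $M_{T}:=\delta\sum_{t=0}^{T-1}\bigl(D_{t}-\E[D_{t}\mid\mathcal{F}_{t}]\bigr)$, so that $\bigl\{\delta\sum_{t}D_{t}\le-\lambda\bigr\}\subset\{M_{T}\le-\lambda\}$.

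Next, I control the two ingredients of Freedman's inequality for $M_{T}$. The conditional variance is inherited from the random discretization part only, and Assumption~\ref{ass:momentgrowth} with $k=2$ gives
\begin{equation*}
    \mathrm{Var}(\delta D_{t}\mid\mathcal{F}_{t})\le\delta^{4}m_{t}^{2}\,\E\bigl[|\nabla_{\theta}l_{t}|^{4}\,\big|\,\theta_{t}\bigr]\le\tfrac{1}{4}\delta^{4}Kd^{2}\le K^{2}\delta^{2},
\end{equation*}
where the last inequality uses $\delta d\le\varepsilon$. Summing over $t$ produces the predictable quadratic variation bound $\langle M\rangle_{T}\le 2K^{2}\delta^{2}T$. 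On the truncation event, each increment satisfies $|\delta D_{t}|\le\delta(C/2+\delta\MM/2)$, hence $|\delta(D_{t}-\E[D_{t}\mid\mathcal{F}_{t}])|\le\delta(C+\delta\MM)$. Freedman's classical inequality for martingales with bounded increments and bounded predictable quadratic variation then yields exactly the advertised exponential tail bound on $\P(M_{T}\le-\lambda)$.

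The main obstacle is the uniform non-negativity of the conditional drift in the first step: the lower bound of order $(b/\sqrt{d})^{s-1}$ degrades with $s$, while the fluctuation it must absorb is of order $\delta d$. The condition $\delta\le\varepsilon/d$ alone is therefore not tight enough for $s\ge 2$, and the constant $\varepsilon$ must be chosen depending on $s$, $b$, $C$ and $K$; this $s$-dependence is what is eventually absorbed into the generic constant $\mathsf{K}$ of Theorem~\ref{thm:weak_recovery} and is precisely what drives the three distinct step-size regimes $s=1$, $s=2$ and $s\ge 3$.
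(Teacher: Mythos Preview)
Your overall architecture is correct and matches the paper: you recognize the Freedman structure, you center the increments (the paper applies Freedman directly to the submartingale, but this is immaterial), and your bounds on $|D_t|$ and on the conditional second moment are essentially the paper's. The genuine gap is in your first step, where you establish $\E[D_t\mid\mathcal F_t]\ge 0$.

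You lower-bound the drift by its worst value on the band, $(1-m_t)(m_t-b/\sqrt d)^{s-1}\ge \tfrac12(b/\sqrt d)^{s-1}$, and you upper-bound the noise contribution using $|m_t|\le\tfrac12$. This throws away the common factor $m_t$ that is present in \emph{both} terms, and the resulting condition $\tfrac{C}{4}(b/\sqrt d)^{s-1}\ge \tfrac12\delta Kd$ forces $\delta\lesssim d^{-(s+1)/2}$. The paper instead keeps the $m_t$-dependence on both sides: it lower-bounds the drift by $\tfrac{C}{2^{s+1}}m_t^{s-1}$ (using $m_t-b/\sqrt d\ge m_t/2$) and compares it with the noise $\delta\,m_t\,\E[|\nabla l_t|^2]\le \delta m_t Kd$. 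One power of $m_t$ cancels, and the submartingale condition becomes $\delta Kd\le \tfrac{C}{2^{s+1}}m_t^{s-2}$, which for $s\le 2$ is dimension-free and is exactly what $\delta\le\varepsilon/d$ delivers.

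Your closing paragraph misdiagnoses the issue. It is not that $\varepsilon$ merely depends on $s,b,C,K$ and can be absorbed into $\mathsf K$: with your bound, already for $s=2$ you need $\delta\lesssim d^{-3/2}$, whereas Theorem~\ref{thm:weak_recovery} uses $\delta=\varepsilon/(d\log d)$. That discrepancy is a power of $d$, not a constant, so your version of the lemma cannot be plugged into the weak-recovery proof. The fix is precisely the factorization above: do not replace $m_t$ by its extremes in the drift and noise separately; compare them pointwise in $m_t$ so that one $m_t$ cancels.
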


\begin{proof}
    First, recall that we have defined $D_t = \frac{C}{2} (1 - m_t) (m_t - b/\sqrt{d})^{s-1} - \delta |m_t| \left|\nabla_\theta l_t\right|^2 \mathds{1}_{\{\left|\nabla_\theta l_t\right|^2 \leq \MM\}}$. Let us notice that if $m_t \in [2b/\sqrt{d}, 1/2]$, then $1-m_t \geq 1/2$ and $(m_t - b/\sqrt{d})^{s-1} \geq m_t^{s-1} / 2^{s-1}$. Hence, if $m_t$ lies in such an interval, 
    \begin{align*}
        D_t &\geq  \frac{C}{2^{s + 1}} m_t^{s-1} - \delta |m_t| \left|\nabla_\theta l_t\right|^2 \mathds{1}_{\{\left|\nabla_\theta l_t\right|^2 \leq \MM\}} \\
        &\geq  \frac{C}{2^{s + 1}} m_t^{s-1} \left(1 -  2^{s+1} \delta \frac{ \left|\nabla_\theta l_t\right|^2 \mathds{1}_{\{\left|\nabla_\theta l_t\right|^2 \leq \MM\}}}{C m_t^{s-2}} \right).
    \end{align*}
    Now, for $\delta$ such that $\E\left[ 1 -  2^{s+1} \delta \frac{ \left|\nabla_\theta l_t\right|^2 \mathds{1}_{\{\left|\nabla_\theta l_t\right|^2 \leq \MM\}}}{C m_t^{s-2}}\, | \, \mathcal{F}_{t-1}\right] \geq 0$, $\left(\sum_{k=1}^t D_k\right)_{t \geq 0}$ is a submartingale, which is true as soon as
    \begin{align*}
        \delta \leq \frac{C m_t^{s-2}}{2^{s+1}\sup_\theta \E\left[ \left|\nabla_\theta l_t\right|^2 \mathds{1}_{\{\left|\nabla_\theta l_t\right|^2 \leq \MM\}}\,  |\, \mathcal{F}_{t-1}  \right]  }, 
    \end{align*}
    which is itself true if
    \begin{align*}
        \delta \leq \frac{C}{4^{s}\sup_\theta \E\left[ \left|\nabla_\theta l_t\right|^2\right] }, 
    \end{align*}
    which is implied by the condition required in the lemma given the upper bound on~$\E[ \left|\nabla_\theta l_t\right|^2]$ provided in Lemma~\ref{lem:tech_bounds}.
    In order to apply Freedman tail inequality for this submartingale, let us provide upper bound on the increments as well as their variance. Indeed, we have, for all $t \geq 0$,  
    \begin{align*}
       |D_t| &\leq \frac{C |1 - m_t| |m_t - b \sqrt{d}|^{s-1}}{2} + \delta |m_t| \left|\nabla_\theta l_t\right|^2 \mathds{1}_{\{\left|\nabla_\theta l_t\right|^2 \leq \MM\}}  \\
       &\leq  \frac{C + \delta \MM}{2}~,
    \end{align*}
       and in virtue of the inequality $(a+b)^2 \leq 2(a^2 + b^2)$, we have
    \begin{align*}
    \E\left[D_t^2\,|\, \mathcal{F}_{t-1}\right] &\leq 2 \left( \frac{C^2 |1 - m_t|^2 |m_t - b \sqrt{d}|^{2(s-1)}}{4} + \delta^2 |m_t|^2 \E \left[\left|\nabla_\theta l_t\right|^4 \mathds{1}_{\{\left|\nabla_\theta l_t\right|^2 \leq \MM\}} \right]  \right) \\
     &\leq \frac{C^2 + \delta^2 \E\left[ \left|\nabla l\right|^4\right]}{2}  \\
     & \leq \frac{C^2 + K \delta^2 d^2}{2} \\
     & \leq K^2. 
    \end{align*}
    Hence, by the Freedman tail inequality recalled in Theorem~\ref{thm:freedman}, for all $\lambda > 0$, 
     \begin{align*}
    \P\left( \delta\sum_{t=0}^{T-1} D_t \leq - \lambda \right)  \leq \exp \left( - \frac{\lambda^2}{ 2 K^2 \delta^2 T + \lambda \delta (C + \delta \MM)}\right)~,
    \end{align*}
    which concludes the proof of the Lemma.
\end{proof}

\begin{lemma}[First discretization term]
\label{lem:first_discretization_term}
    We have that, almost surely 
    \begin{align}
         \P \left( \sup_{t \leq T}\, \delta^2 \sum_{t = 0}^{T-1} |m_t| \left|\nabla_\theta l_t\right|^2 \mathds{1}_{\{\left|\nabla_\theta l_t\right|^2 > \MM\}} \geq \lambda \right) \leq \frac{K T \delta^2 d^2}{\lambda \MM}, 
    \end{align}
    where $K > 0$ depends solely on the model through $f, \nu$.
\end{lemma}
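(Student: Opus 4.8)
The plan is to bound the tail probability directly with Markov's inequality, exploiting that the quantity of interest is a sum of non-negative random variables (so the $\sup_{t \leq T}$ in the statement is harmless: the partial sums are non-decreasing in the upper limit, hence the supremum is attained at the full sum). Setting $Z_t := \left|\nabla_\theta l_t\right|^2 \geq 0$ and using $|m_t| \leq 1$, Markov's inequality gives
\begin{align*}
\P\left( \delta^2 \sum_{t=0}^{T-1} |m_t|\, Z_t\, \mathds{1}_{\{Z_t > \MM\}} \geq \lambda \right) \;\leq\; \frac{\delta^2}{\lambda} \sum_{t=0}^{T-1} \E\!\left[ Z_t\, \mathds{1}_{\{Z_t > \MM\}} \right].
\end{align*}

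The second ingredient is an elementary truncation estimate: on the event $\{Z_t > \MM\}$ one has $Z_t/\MM > 1$, hence $Z_t\, \mathds{1}_{\{Z_t > \MM\}} \leq Z_t^2/\MM$ pointwise, so that
\begin{align*}
\E\!\left[ Z_t\, \mathds{1}_{\{Z_t > \MM\}} \right] \;\leq\; \frac{1}{\MM}\, \E\!\left[ \left|\nabla_\theta l_t\right|^4 \right] \;\leq\; \frac{K d^2}{\MM},
\end{align*}
where the last inequality is the fourth-moment bound on the gradient provided by Lemma~\ref{lem:tech_bounds} (which in turn rests on the $k=2$ case of Assumption~\ref{ass:momentgrowth}). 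Summing this over the $T$ indices and substituting back into the Markov bound yields
\begin{align*}
\P\left( \delta^2 \sum_{t=0}^{T-1} |m_t|\, Z_t\, \mathds{1}_{\{Z_t > \MM\}} \geq \lambda \right) \;\leq\; \frac{K T \delta^2 d^2}{\lambda \MM},
\end{align*}
which is exactly the claimed estimate.

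There is essentially no obstacle here: the proof is Markov's inequality plus a one-line truncation bound. The only genuine input is the dimensional scaling $\sup_{\theta} \E\!\left[\left|\nabla_\theta l\right|^4\right] = O(d^2)$, which is precisely what the moment growth assumption is designed to supply; the remaining care lies elsewhere, namely in the later choice of the threshold $\MM$ (taken as $d^{3/2}$ in the proof of Theorem~\ref{thm:weak_recovery}) so that this contribution, together with the complementary truncated term handled in Lemma~\ref{lem:submartingale_term}, becomes negligible when balanced against the drift term.
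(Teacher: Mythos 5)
Your proof is correct and follows the same overall strategy as the paper: Markov's inequality applied to the non-negative sum, combined with the fourth-moment bound $\sup_\theta \E[|\nabla_\theta l|^4] \leq K d^2$ from Lemma~\ref{lem:tech_bounds}. The one small difference is how you obtain $\E[Z_t \mathds{1}_{\{Z_t > \MM\}}] \leq \E[Z_t^2]/\MM$: you use the pointwise inequality $Z_t \mathds{1}_{\{Z_t > \MM\}} \leq Z_t^2/\MM$, whereas the paper first applies Cauchy--Schwarz to split off $\sqrt{\P(Z_t > \MM)}$ and then bounds that probability by $\sqrt{\E[Z_t^2]/\MM^2}$; the two routes are equivalent and yield the same estimate, with yours being slightly more direct.
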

\begin{proof}
This term is handled via a combination of Markov and Cauchy-Schwartz inequalities. First, notice that,
\begin{align*}
    \sup_{t \leq T}\, \delta^2 \sum_{t = 0}^{T-1} |m_t| \left|\nabla_\theta l_t\right|^2 \mathds{1}_{\{\left|\nabla_\theta l_t\right|^2 > \MM\}} \leq T \delta^2 \sup_{t \leq T} \left\{ |m_t|\left|\nabla_\theta l_t\right|^2 \mathds{1}_{\{\left|\nabla_\theta l_t\right|^2 > \MM\}} \right\}.
\end{align*}
Furthermore, for all $t \leq T$, all $\lambda > 0$, via Markov inequality, then Cauchy-Schwartz inequality,
\begin{align*}
\P\left(|m_t|\left|\nabla_\theta l_t\right|^2 \mathds{1}_{\{\left|\nabla_\theta l_t\right|^2 > \MM\}} \geq \lambda \right) &\leq \frac{\E\left[|m_t|\left|\nabla_\theta l_t\right|^2 \mathds{1}_{\{\left|\nabla_\theta l_t\right|^2 > \MM\}}\right]}{\lambda} \\
&  \leq \frac{\sqrt{\E\left[\left|\nabla_\theta l_t\right|^4\right]}\sqrt{ \P\left(\left|\nabla_\theta l_t\right|^2 > \MM\right)}}{\lambda} \\
&  \leq \frac{\sqrt{\E\left[\left|\nabla_\theta l_t\right|^4\right]}\sqrt{ \E\left[\left|\nabla_\theta l_t\right|^4\right] / \MM^2 }}{\lambda} \\
&  \leq \frac{\E\left[\left|\nabla_\theta l_t\right|^4\right]}{\lambda \MM} \\
&  \leq \frac{K d^2}{\lambda \MM}~,
\end{align*}
where the last inequality is due to Lemma~\ref{lem:tech_bounds}. Multiplying this bound by $T \delta^2$ ends the proof the lemma.
\end{proof}

\begin{lemma}[Second discretization term]
\label{lem:second_discretization_term}
    For all $\lambda > 0$, we have that 
    \begin{align}
        \P \left( \sup_{t \leq T}\, \delta^3 \sum_{k = 0}^{t-1} \xi_k  \geq \lambda  \right) \leq \frac{K T d \delta^3}{\lambda}, 
    \end{align}
    where $K > 0$ depends solely on the model through $f, \nu$.
\end{lemma}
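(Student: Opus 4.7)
The key observation is that $\xi_t = |\nabla_\theta l(\theta_t, x_t)|^2 \, |\nabla^{\mathcal{S}} l(\theta_t, x_t) \cdot \theta^*| \geq 0$, so the partial sums $\sum_{k=0}^{t-1} \xi_k$ are non-decreasing in $t$. Hence
\[
\sup_{t \leq T} \delta^3 \sum_{k=0}^{t-1} \xi_k \;=\; \delta^3 \sum_{k=0}^{T-1} \xi_k,
\]
and no martingale maximal inequality is needed. The problem reduces to a plain Markov bound on a sum of non-negative random variables, provided one can show $\sup_{\theta} \mathbb{E}_x[\xi(\theta,x)] \leq K d$.

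Next I would bound a single term $\mathbb{E}[\xi_t]$ by Cauchy--Schwartz, splitting the square factor from the linear factor:
\[
\mathbb{E}\bigl[ |\nabla_\theta l_t|^2 \,|\nabla^{\mathcal{S}} l_t \cdot \theta^*| \bigr] \;\leq\; \sqrt{\mathbb{E}\bigl[|\nabla_\theta l_t|^4\bigr]} \cdot \sqrt{\mathbb{E}\bigl[(\nabla^{\mathcal{S}} l_t \cdot \theta^*)^2\bigr]}.
\]
The first factor is controlled by the fourth-moment bound of Lemma~\ref{lem:tech_bounds}, which yields $\mathbb{E}[|\nabla_\theta l_t|^4] \leq K d^2$, since $\nabla_\theta l(\theta,x) = 2(\phi(x_\theta)-\phi(x_{\theta^*}))\phi'(x_\theta) x$ and Assumption~\ref{ass:momentgrowth} with $k=2$ controls $\mathbb{E}[|x|^4 C^2(x_\theta, x_{\theta^*})]$ up to the required $d^2$ scaling. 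The second factor is of order $1$: indeed $\nabla^{\mathcal{S}} l_t \cdot \theta^* = \nabla_\theta l_t \cdot \theta^* - m_t(\nabla_\theta l_t \cdot \theta_t)$, and each of the two pieces has second moment $O(1)$ by the first line of Assumption~\ref{ass:momentgrowth} applied to $\theta^*$ and to $\theta_t$ respectively, with $|m_t|\leq 1$. Combining the two bounds yields $\mathbb{E}[\xi_t] \leq K d$ for a constant $K$ depending only on $\phi$ and $\nu$.

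Finally, by the tower property and the uniform-in-$\theta$ moment bound just derived,
\[
\mathbb{E}\!\left[ \delta^3 \sum_{k=0}^{T-1} \xi_k \right] \;\leq\; T \delta^3 \sup_{\theta \in \mathcal{S}_{d-1}} \mathbb{E}_x[\xi(\theta, x)] \;\leq\; K T d \delta^3,
\]
and Markov's inequality concludes the proof. The main (and only) subtlety is the careful accounting of the dimension scaling: one must be sure to place the $d^2$ factor on the squared-gradient term and only a constant on the directional derivative term, which matches exactly the separation made in Assumption~\ref{ass:momentgrowth} between the unweighted bounds (constant in $d$) and the $|x|^{2k}$-weighted bounds (growing as $d^k$).
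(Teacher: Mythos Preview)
Your proposal is correct and matches the paper's proof essentially line for line: the paper also observes that the terms are non-negative so the supremum is attained at the final time, applies Markov's inequality, and splits $\mathbb{E}[\xi_t]$ via Cauchy--Schwartz into $\sqrt{\mathbb{E}[|\nabla_\theta l|^4]}\cdot\sqrt{\mathbb{E}[(\nabla^{\mathcal{S}} l\cdot\theta^*)^2]}$, invoking Lemma~\ref{lem:tech_bounds} to bound these by $\sqrt{Kd^2}$ and $\sqrt{K}$ respectively.
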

\begin{proof}
   Recall that $\xi_k = \left|\nabla_\theta l(\theta_k, x_k)\right|^2 |\nabla^S l(\theta_k, x_k) \cdot \theta^* |$. The bound follows from an application of Markov's inequality. Indeed, since all the terms of the sum are positive, the supremum is attained in $t = T-1$, and we shall only consider this case. For $\lambda > 0$, 
    \begin{align*}
        \P \left( \delta^3 \sum_{t = 0}^{T-1} \xi_t  \geq \lambda  \right) &\leq \frac{\delta^3}{\lambda}\E\left[ \sum_{t = 0}^{T-1} \xi_t \right]  \\
        &\leq \frac{T \delta^3}{\lambda}\sup_\theta \left\{  
        \E_x[\left|\nabla_\theta l(\theta, x)\right|^2 |\nabla^S l(\theta, x) \cdot \theta^* |]\right\} \\ 
        &\leq \frac{T \delta^3}{\lambda}\sup_\theta \left\{ 
        \sqrt{\E_x\left[[\left|\nabla_\theta l(\theta, x)\right|^4\right]} \sqrt{ \E_x \left[ |\nabla^S l(\theta, x) \cdot \theta^* |^2 \right]} \right\} \\  
        &\leq \frac{T \delta^3}{\lambda} 
        \sqrt{\sup_\theta \E_x\left[[\left|\nabla_\theta l(\theta, x)\right|^4\right]} \sqrt{ \sup_\theta \E_x \left[ |\nabla^S l(\theta, x) \cdot \theta^* |^2 \right]}  \\  
        &\leq \frac{T \delta^3}{\lambda} 
        \sqrt{\sup_\theta \E_x\left[[\left|\nabla_\theta l(\theta, x)\right|^4\right]} \sqrt{ \sup_\theta \E_x \left[ |\nabla^S l(\theta, x) \cdot \theta^* |^2 \right]}  \\ 
        &\leq \frac{T \delta^3}{\lambda} 
        \sqrt{K d^2} \sqrt{ K }  \\ 
        &\leq \frac{K T d \delta^3}{\lambda},
    \end{align*}
    where the penultimate inequality comes from Lemma~\ref{lem:tech_bounds}. 
\end{proof}

\subsection{Strong recovery}
\label{subsec:strong_recovery}

The reasoning is almost identical to the one of the previous section, except from the fact that instead of tracking the growing movement on $(m_t)_{t \geq 0}$, we will track the decaying movement of $(1 - m_t)_{t \geq 0}$. 

\subsubsection{Upper bound on the residual}

As said in the main text, we place ourselves \textit{after} the weak recovery time. Thanks to the Markovian property of the SGD dynamics, we have the equality between all time $s > 0$ marginal laws of
\begin{align*}
\left(\theta_{\tau^+_{\nicefrac{1}{2}} + s}\  \bigg| \  \tau^+_{1/2},\, \theta_{\tau^+_{1/2}}\right) \overset{\text{Law}}{=} \left(\theta_{s} \  \bigg| \  \theta_{s} = \theta_{\tau^+_{1/2}} \right),  
\end{align*}
and hence the strong recovery question is equivalent to study the dynamics with initialization such that $m_\theta  = 1/2$. As demonstrated before we have that $\P(\tau_{1/2}^+ < \infty) \geq 1 - \mathsf{K} \varepsilon$ so that up to $\varepsilon$ terms, this conditioning does not hurt the probability of the later events. In fact this conditioning seems even artificial as it seems provable that $\tau_{1/2}^+$ is almost surely finite. Yet, we leave this more precise study for another time. 

\subsubsection{A (slightly) different decomposition}

Let us define for all $t \in \N $, the residual $u_t = 1 - m_{t+\tau^+_{1/2}} > 0$, and thanks to the lower bound given by Eq.~\eqref{eq:dynamics_lower_m}, we have
\begin{align*}
    u_{t+1} \leq u_t - C \delta u_t (m_t - b/\sqrt{d})^{s-1} + \delta\nabla^S M_t \cdot \theta^* + \delta^2 |m_t| |\nabla l(x_t, \theta_t)|^2  +  \delta^3 \xi_t,
\end{align*}
From there, the proof is similar to the weak recovery case, except that the extra-care we used for the term $\delta^2 |m_t| |\nabla l(x_t, \theta_t)|^2$ is not necessary. We use simply the decomposition of this term in a second martingale term $$N_t = |m_t| |\nabla l(x_t, \theta_t)|^2 - \E\left[|m_t| |\nabla l(x_t, \theta_t)|^2 | \mathcal{F}_{t-1}\right]$$ and the drift that we directly upper bound as $\E\left[|m_t| |\nabla l(x_t, \theta_t)|^2 | \mathcal{F}_{t-1}\right] \leq K d $. Now similarly to Lemma~\ref{lem:first_artingale_term}, we have the upper bound:
\begin{lemma}[New martingale term]
\label{lem:second_artingale_term}
    For all $\lambda > 0$, we have that 
    \begin{align}
        \P \left( \sup_{t \leq T}\, \delta^2 \left|\sum_{k = 0}^{t-1} N_k \right|  \geq \lambda  \right) \leq \frac{K d^2 T \delta^4}{\lambda^2}, 
    \end{align}
    where $K > 0$, that depends solely on the model through $f, \nu$.
\end{lemma}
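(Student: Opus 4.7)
The plan is to mirror the proof of Lemma~\ref{lem:first_artingale_term} almost verbatim, since $H_t := \sum_{k=0}^{t-1} N_k$ is, by construction of the $N_k$ as centered increments, a martingale with respect to the filtration $\mathcal{F}_t = \sigma(x_1, \ldots, x_{t-1})$. The only modification with respect to the earlier lemma lies in the size of the increments: here each $N_k$ carries an extra factor of $|\nabla l(\theta_k, x_k)|^2$, which is what ultimately produces the additional $d^2$ in the numerator.

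Concretely, the first step is to bound the conditional second moment of a single increment. Since $N_k$ is the centered version of $|m_k| |\nabla l(\theta_k, x_k)|^2$, we have $\mathbb{E}[N_k^2 \mid \mathcal{F}_{k-1}] \leq \mathbb{E}[m_k^2 |\nabla l(\theta_k, x_k)|^4 \mid \mathcal{F}_{k-1}] \leq \sup_\theta \mathbb{E}_x [|\nabla_\theta l(\theta, x)|^4]$, using $|m_k| \leq 1$. The technical bounds gathered in Lemma~\ref{lem:tech_bounds} (invoked earlier and relying on Assumption~\ref{ass:momentgrowth} with $k=2$) give $\sup_\theta \mathbb{E}_x [|\nabla_\theta l(\theta, x)|^4] \leq K d^2$ for a constant $K$ depending only on $\phi$ and $\nu$.

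The second step is to sum these contributions. Since $H_t$ is a martingale, its increments are orthogonal in $L^2$, so
\begin{equation*}
\mathbb{E}[H_T^2] = \sum_{k=0}^{T-1} \mathbb{E}[N_k^2] \leq K d^2 T.
\end{equation*}
Applying Doob's $L^2$ maximal inequality to the martingale $(H_t)_{t \leq T}$ then yields, for any $\mu > 0$,
\begin{equation*}
\mathbb{P}\left( \sup_{t \leq T} |H_t| \geq \mu \right) \leq \frac{\mathbb{E}[H_T^2]}{\mu^2} \leq \frac{K d^2 T}{\mu^2}.
\end{equation*}
Setting $\mu = \lambda/\delta^2$ rewrites the event inside the probability as $\{\sup_{t \leq T} \delta^2 |H_t| \geq \lambda\}$ and produces the claimed bound $K d^2 T \delta^4/\lambda^2$.

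No significant obstacle is expected: everything reduces to the moment control already encoded in Assumption~\ref{ass:momentgrowth} and Lemma~\ref{lem:tech_bounds}, together with Doob's inequality. The only point worth verifying carefully is that the factor $|m_k| \leq 1$ is indeed harmless and that the fourth-moment bound on $|\nabla_\theta l|$ applies uniformly in $\theta \in \mathcal{S}_{d-1}$, which is exactly the statement of the moment growth assumption for $k=2$.
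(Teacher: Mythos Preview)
Your proposal is correct and follows essentially the same approach as the paper: bound the conditional variance of each increment $N_k$ by $\sup_\theta \mathbb{E}_x[|\nabla_\theta l(\theta,x)|^4] \leq K d^2$ via Lemma~\ref{lem:tech_bounds}, sum the orthogonal increments to get $\mathbb{E}[H_T^2] \leq K d^2 T$, and conclude with Doob's maximal inequality. If anything, your write-up is slightly cleaner in tracking the indexing and in making explicit the use of $|m_k| \leq 1$.
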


\begin{proof}
 This is a consequence of Doob's maximal inequality for the martingale. Indeed, for $t \leq T $, let $H_{t-1} = \sum_{k = 0}^{t-1} N_k$. We have that $N_t$ is a $\mathcal{F}_t$-adapted martingale and we have the following upper bounded on its variance:
\begin{align*}
     \E[N_{t-1}^2] &= \E\left[\left(\sum_{k = 0}^{t-1} \nabla^S M_k \cdot \theta^* \right)^2\right] \\
     &= \E\left[\sum_{k = 0}^{t-1} N_k ^2\right]  \\
     &\leq  t \sup_\theta \E_x \left(N_k \right)^2 \\
     &\leq K d^2 t,
\end{align*}
where the last inequality comes from the Lemma~\ref{lem:tech_bounds}. Now, thanks to Doob's maximal inequality, we have for all $\lambda > 0$,
\begin{align*}
    \P \left( \sup_{t \leq T}\, \delta^2 |H_{t-1}|  \geq \lambda  \right) \leq \frac{ \E[H_{T-1}^2] \delta^4}{\lambda^2} \leq \frac{K d^2 T \delta^4}{\lambda^2},
\end{align*}
and this concludes the proof of the lemma.
\end{proof}
Now, everything is in order to prove the Theorem~\ref{thm:strong_recovery}.
\subsubsection{Proof of Theorem~\ref{thm:strong_recovery}}
Let us fix a small number $\varepsilon>0$. As previously, thanks to Lemmas~\ref{lem:first_artingale_term}, \ref{lem:second_discretization_term},  \ref{lem:second_artingale_term}, there exists $K>0$ that depends solely on the model such that we have the following upper bound: for all $\lambda$, and $ t \leq \tau_{1/3}^- \wedge \tau_{1-\varepsilon}^+ $ summing between times $0$ and $t$, 
\begin{align*}
    u_{t} &\leq u_0 - \frac{C \delta}{4^{s-1}} \sum_{k =0}^{t-1} u_k + K \delta^2 d + 3\lambda,
\end{align*}
with probability larger that $\displaystyle 1 - \left(\frac{K t \delta^2}{\lambda^2} + \frac{K d^2 t \delta^4}{\lambda^2} +\frac{K t d \delta^3}{\lambda}\right)$ and $d$ large enough. Let us choose $\lambda = 1/16$ and $\delta$ small enough so that $K\delta^2 d \leq \lambda$. Hence, realizing that $u_{0} \leq 1/2$, we have 
\begin{align*}
    u_{t} &\leq \frac34 - \frac{C \delta}{4^{s-1}} \sum_{k =0}^{t-1} u_k~,
\end{align*}
with probability at least $1 - \mathsf{K} t \delta^2 ( 1  + d^2 \delta^2 + d \delta) \gtrsim 1 - \mathsf{K} t \delta^2$, as we choose in any case $\delta = \varepsilon \mathcal{O}(1)$. Note that we used the same convention as in the weak recovery case that $\mathsf{K}$ denotes \textit{any} constant that simply depend on the model. We have by Grönwall inequality~(Lemma~\ref{lem:gron_bihari})
\begin{align*}
    u_{t} &\leq \frac34 \left(1 - \frac{C \delta}{4^{s-1}} \right)^{t} \leq \frac34 e^{- \frac{C \delta}{4^{s-1}}t }.
\end{align*}
Hence, as the right end side is smaller than $\varepsilon$ for the time $$t \delta \geq \frac{4^{s-1}}{C} \log(1/\varepsilon),$$
we choose such a $t$, so that with probability at least $1 - \mathsf{K} \delta \log(1/\varepsilon)$, the delayed hitting time $\overline{\tau}^+_{1-\varepsilon} := \inf\{ t \geq 0, \, u_{t} \leq \varepsilon  \}$ satisfies
$$\overline{\tau}^+_{1-\varepsilon} \leq \frac{4^{s-1}}{C \delta} \log(1/\varepsilon),$$ 
and taking $\delta = \varepsilon / d$ gives that with a probability at least $1 - \mathsf{K} \varepsilon \log(1/\varepsilon) / d$, we have 
$$ \overline{\tau}^+_{1-\varepsilon} \leq \frac{4^{s-1}}{C \varepsilon} d \log(1/\varepsilon). $$ 
Considering that $d$ is large and $\varepsilon$ is simply a constant we get that $1 - \mathsf{K} 
  \varepsilon \log(1/\varepsilon) / d \geq 1 - \mathsf{K}\varepsilon $  and and this concludes the proof of Theorem~\ref{thm:strong_recovery}.

\subsection{Some technical bounds}
\label{subsec:tech_SGD_bounds}

We end this section by providing (i) some necessary technical technical bound on the quantities appearing in the SGD controls (ii) some discrete versions of Grönwall-type lemmas.

\subsubsection{Technical bounds on models expectations}

\begin{lemma}[Technical bounds]
\label{lem:tech_bounds}
 We have that there exists a constant $K > 0$ solely depending on the function $\phi$ and the distribution $\nu$ such that:
 \begin{align}
    \ \sup_{\theta \in \mathcal{S}_{d-1}} \E_x\left[ \langle \nabla_\theta^{\mathcal{S}} M(x, \theta), \theta_*\rangle^2\right] & \leq K~, \quad \text{and } \quad  \sup_\theta \E_x \left[ |\nabla^S l(\theta, x) \cdot \theta^* |^2 \right]] \leq K  \\
    \sup_{\theta \in \mathcal{S}_{d-1}} \E_x[\left|\nabla_\theta l(\theta, x)\right|^2] &\leq K d, \\
    \sup_{\theta \in \mathcal{S}_{d-1}} \E_x[\left|\nabla_\theta l(\theta, x)\right|^4] &\leq K d^2.
 \end{align}
\end{lemma}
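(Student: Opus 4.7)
The plan is to reduce every moment in the lemma to an application of Assumption~\ref{ass:momentgrowth} via the explicit form of the gradient. Writing $l(\theta, x) = (\phi(x_\theta) - \phi(x_{\theta^*}))^2$, the chain rule yields
\[
\nabla_\theta l(\theta, x) \;=\; 2\,\phi'(x_\theta)\bigl[\phi(x_\theta) - \phi(x_{\theta^*})\bigr]\, x \;=\; 2\bigl[C(x_\theta, x_\theta) - C(x_\theta, x_{\theta^*})\bigr]\, x,
\]
and since $\nabla^{\mathcal{S}}_\theta l = \nabla_\theta l - (\nabla_\theta l \cdot \theta)\theta$, its signed projection onto $\theta^*$ is $\nabla^{\mathcal{S}}_\theta l \cdot \theta^* = 2[C(x_\theta, x_\theta) - C(x_\theta, x_{\theta^*})](x_{\theta^*} - m_\theta\, x_\theta)$. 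Squaring, applying $(a-b)^2 \leq 2(a^2+b^2)$ twice, and using $|m_\theta|\leq 1$ bounds $(\nabla^{\mathcal{S}}_\theta l \cdot \theta^*)^2$ by a sum of four terms of the form $x_\theta^2 C^2$ and $x_{\theta^*}^2 C^2$, each of which is bounded in expectation by $K$ via the first line of Assumption~\ref{ass:momentgrowth}. The terms involving $C(x_\theta, x_\theta)$ rather than $C(x_\theta, x_{\theta^*})$ are handled by the same assumption, with the implicit understanding that its uniform constant depends only on $\phi$ and $\nu$ and so holds for any reference direction in place of $\theta^*$. The martingale bound is then free: $\nabla^{\mathcal{S}}_\theta M \cdot \theta^*$ is the centered version of $\nabla^{\mathcal{S}}_\theta l \cdot \theta^*$, so $\E[(\nabla^{\mathcal{S}}_\theta M \cdot \theta^*)^2] = \operatorname{Var}(\nabla^{\mathcal{S}}_\theta l \cdot \theta^*) \leq \E[(\nabla^{\mathcal{S}}_\theta l \cdot \theta^*)^2] \leq K$.

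The second bound follows in the same way: from $|\nabla_\theta l|^2 \leq 8|x|^2[C^2(x_\theta, x_\theta) + C^2(x_\theta, x_{\theta^*})]$ and the $k=1$ case of Assumption~\ref{ass:momentgrowth} one obtains $\E[|\nabla_\theta l|^2] \leq 16Kd$.

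The main obstacle is the fourth-moment bound $\E[|\nabla_\theta l|^4] \leq Kd^2$. A direct expansion gives $|\nabla_\theta l|^4 \leq 128|x|^4[C^4(x_\theta, x_\theta) + C^4(x_\theta, x_{\theta^*})]$, so one must control $\E[|x|^4 C^4]$, whereas the $k=2$ case of Assumption~\ref{ass:momentgrowth} only furnishes $\E[|x|^4 C^2] \leq Kd^2$. The natural route is a Hölder/Cauchy-Schwarz step absorbing the excess $C^2$ factor via the $W^{1,4}_\nu$ regularity of $\phi$ (which gives $\E[\phi^4], \E[\phi'^4] \leq K$ uniformly in $\theta$) together with the sub-Gaussian moment growth $\E[|x|^{2p}] \lesssim d^p$. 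Alternatively, under the bounded and Lipschitz regularity of $\phi, \phi'$ (as in Assumption~\ref{ass:lipreg}), one has $|C| \leq B^2$ pointwise, so $\E[|x|^4 C^4] \leq B^4 \E[|x|^4 C^2] \leq K B^4 d^2$, closing the estimate immediately. In either form, the scaling $Kd^2$ is the exact analogue, for $k=2$, of the $Kd$ bound for the second moment, and the argument is essentially cosmetic once the moment assumption is in force.
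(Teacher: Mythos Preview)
Your approach is essentially the paper's: compute $\nabla_\theta l$ explicitly, project, and invoke Assumption~\ref{ass:momentgrowth} termwise. You are in fact more careful than the paper, which drops the $\phi(x_\theta)$ term from the gradient and, in the fourth-moment step, writes $\E_x\big[|x|^4\,\phi'^2(x_\theta)\,\phi^2(x_{\theta^*})\big]$ where $|x|^4 C^4$ is what actually appears---i.e., the paper glosses over precisely the gap you flag.

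On your two proposed closures of that gap: route (b) (pointwise bound $|C|\leq B^2$ under Assumption~\ref{ass:lipreg}) is clean and does the job. Route (a), however, does not work with the stated $W^{1,4}_\nu$ regularity alone. To get $\E\big[|x|^4 C^4\big]\lesssim d^2$ by H\"older while landing exactly on $\|\phi\|_{L^4_\nu}$ and $\|\phi'\|_{L^4_\nu}$, you would need to place $(\phi')^4$ and $\phi^4$ each in $L^1$, which forces the H\"older exponent on $|x|^4$ to be negative; any workable split pushes $\phi,\phi'$ into $L^p$ with $p>4$. So either one strengthens the integrability assumption on $\phi$, or one falls back on the Lipschitz/boundedness route you already gave. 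This is not a flaw in your argument so much as a genuine ambiguity in the paper's standing hypotheses, which its own proof does not resolve.
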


\begin{proof}

In all the following proof we consider any $\theta \in \S$. Notice that we have the following calculation that is common to all the bounds we cover
\begin{align*}
    \nabla l (\theta, x) = x \phi'(x \cdot \theta)  \phi(x \cdot \theta_*)
\end{align*}

We treat the three bounds separately. 
%

\textit{First terms.} We have that for all $x \in \R^d$,   
    \begin{align*}
        M(x, \theta) &= l(x, \theta) - \E_\nu[l(x, \theta)],
    \end{align*}
hence 
    \begin{align*}
        \nabla^{\mathcal{S}}_\theta M(x, \theta) &=  \nabla^{\mathcal{S}}_\theta l(x, \theta) - \E_\nu[ \nabla^{\mathcal{S}}_\theta l(x, \theta)] \\
        &=  \nabla_\theta l(x, \theta) - \E_\nu[ \nabla_\theta l(x, \theta)] - (\theta \cdot \nabla_\theta l(x, \theta)) \theta + \E_\nu[ (\theta \cdot \nabla_\theta l(x, \theta)) \theta]
        ,
    \end{align*}
    and finally, 
    \begin{align*}
        \nabla^{\mathcal{S}}_\theta M(x, \theta) \cdot \theta_* &=     \nabla_\theta l(x, \theta)\cdot \theta_* - \E_\nu[ \nabla_\theta l(x, \theta)\cdot \theta_*] - (\theta \cdot \nabla_\theta l(x, \theta)) m + \E_\nu[ (\theta \cdot \nabla_\theta l(x, \theta)) m].
    \end{align*}
    hence thanks to applying the inequality $(a + b)^2 \leq 2 a^2 + 2 b^2$, this amounts to bound first
      \begin{align*}
        \E_x \left(\nabla_\theta l(x, \theta)\cdot \theta_* \right)^2 &=   \E_x \left[ \left( x \cdot \theta_*\right)^2 \phi'^2(x \cdot \theta)  \phi^2(x \cdot \theta_*) \right] \leq K,
    \end{align*}
    and second
      \begin{align*}
        \E_x \left((\nabla_\theta l(x, \theta)\cdot \theta) m \right)^2 &\leq   \E_x \left[ \left( x \cdot \theta\right)^2 \phi'^2(x \cdot \theta)  \phi^2(x \cdot \theta_*) \right] \leq K.
    \end{align*}

    \textit{Second term.} We have 
        \begin{align*}
        \E_x \left|\nabla_\theta l(x, \theta)\right|^2 &=  \E_x \left[|x|^2 \phi'^2(x \cdot \theta)  \phi^2(x \cdot \theta_*) \right] \leq Kd~.
    \end{align*}
        \textit{Third term.} We have similarly
        \begin{align*}
        \E_x \left|\nabla_\theta l(x, \theta)\right|^4 &=  \E_x \left[|x|^4 \phi'^2(x \cdot \theta)  \phi^2(x \cdot \theta_*) \right] \leq Kd^2~.
    \end{align*}

\end{proof}

\subsubsection{Standard tail probabilities for submartingales}

We recall here a theorem on submartingales from Freedman. This is an adaptation from Theorem 4.1 stated in~\cite{freedman1975tail}.
\begin{theorem}[Submartinagle tail bound]
\label{thm:freedman}
    Suppose that $(X_t)_{t \in \N}$ is random sequence adapted to a filtration $(\mathcal{F}_t)_{t \in \N}$. For $T \geq 1$, suppose there exist $a, b > 0$ such that $\E[X_t\, |\, \mathcal{F}_{t-1}] \geq 0$, the almost sure upper-bound $\sup_{t\leq T} |X_t| \leq a $ as well as $\sup_{t\leq T} \E[X_t^2\, |\, \mathcal{F}_{t-1}] \leq b$, then for all $\lambda > 0$, 
    \begin{align}
    \P\left( \sum_{k=1}^T X_k \leq - \lambda \right)  \leq \exp \left( - \frac{\lambda^2}{2(T b + \lambda a)}\right)
    \end{align}
\end{theorem}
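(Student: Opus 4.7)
The plan is to apply the standard Chernoff--Cram\'er exponential Markov method to the submartingale $S_T := \sum_{k=1}^T X_k$. Fix $\theta>0$ to be optimized later. Since $e^{-\theta \cdot}$ is decreasing and strictly positive,
\begin{equation*}
\P(S_T \leq -\lambda) \;=\; \P\!\left(e^{-\theta S_T} \geq e^{\theta \lambda}\right) \;\leq\; e^{-\theta \lambda}\, \E\!\left[e^{-\theta S_T}\right].
\end{equation*}
The workhorse is then a one-step conditional bound on the moment generating function $\E[e^{-\theta X_t}\mid \mathcal{F}_{t-1}]$, which I would derive from the elementary inequality $e^{-u} - 1 + u \leq \psi(s)\,u^2$ valid for all $|u|\leq s$, where $\psi(s):=(e^s-1-s)/s^2$ is non-decreasing on $[0,\infty)$ (by direct inspection of its Taylor series). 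Applied conditionally with $u=\theta X_t$ and $s = a\theta$ (legitimate thanks to $|X_t|\leq a$ a.s.), this gives
\begin{equation*}
\E\!\left[e^{-\theta X_t}\,\big|\,\mathcal{F}_{t-1}\right] \;\leq\; 1 - \theta\, \E[X_t\mid \mathcal{F}_{t-1}] + \theta^2\,\psi(a\theta)\, \E[X_t^2\mid \mathcal{F}_{t-1}] \;\leq\; \exp\!\bigl(\theta^2 b\, \psi(a\theta)\bigr),
\end{equation*}
where the submartingale hypothesis $\E[X_t\mid\mathcal{F}_{t-1}]\geq 0$ (together with $\theta>0$) kills the linear term, the variance bound controls the quadratic term, and $1+x\leq e^x$ is used at the end.

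Iterating this inequality via the tower property, I obtain $\E[e^{-\theta S_T}] \leq \exp(T\theta^2 b\,\psi(a\theta))$, so
\begin{equation*}
\P(S_T \leq -\lambda) \;\leq\; \exp\!\Bigl(-\theta \lambda + T b\, \theta^2\, \psi(a\theta)\Bigr).
\end{equation*}
The next step is to plug in a sharp-enough closed-form upper bound on $\psi$. I would use the variant $\psi(s) \leq 1/(2(1-s))$ valid for $s\in[0,1)$, which follows from showing that $h(s) := 2(1-s)(e^s-1-s)-s^2$ satisfies $h(0)=0$ and $h'(s) = 2s(1-e^s)\leq 0$, hence $h\leq 0$ on $[0,\infty)$. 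The exponent then becomes $-\theta\lambda + Tb\theta^2/(2(1-a\theta))$ on the feasible range $\theta \in (0, 1/a)$.

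The final step is the explicit choice $\theta = \lambda/(Tb + a\lambda)$, which automatically lies in $(0,1/a)$, and satisfies $1 - a\theta = Tb/(Tb+a\lambda)$. A direct substitution collapses the exponent to exactly $-\lambda^2/(2(Tb+a\lambda))$, yielding the announced bound. The main (minor) obstacle is the calibration of the MGF bound: the sharper Bernstein form $\psi(s)\leq 1/(2(1-s/3))$ would produce the stronger but cosmetically different denominator $2(Tb + \lambda a/3)$; using the looser $\psi(s)\leq 1/(2(1-s))$ is what delivers exactly the denominator $2(Tb+\lambda a)$ stated in Theorem~\ref{thm:freedman}. Everything else is routine: boundedness, conditional variance control, and the submartingale property combine cleanly through the standard Chernoff template, in the spirit of \cite{freedman1975tail}.
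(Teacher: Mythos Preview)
Your argument is correct and complete: the Chernoff--Cram\'er scheme with the one-step bound $e^{-u}-1+u\leq \psi(a\theta)u^2$, the submartingale hypothesis to drop the linear term, the iteration via the tower property, the estimate $\psi(s)\leq 1/(2(1-s))$, and the explicit choice $\theta=\lambda/(Tb+a\lambda)$ all check out and land exactly on the stated denominator $2(Tb+\lambda a)$.

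For comparison, the paper does not actually prove Theorem~\ref{thm:freedman}: it merely recalls the statement as an adaptation of Theorem~4.1 in \cite{freedman1975tail} and uses it as a black box. Your write-up therefore supplies strictly more than what the paper contains, and is essentially the classical Freedman argument specialized to the deterministic variance-proxy setting (here $\sum_t \E[X_t^2\mid\mathcal{F}_{t-1}]\leq Tb$ rather than a random quadratic-variation stopping criterion). The only cosmetic remark is the one you already flag: the sharper bound $\psi(s)\leq 1/(2(1-s/3))$ would give the tighter constant $a/3$ in the denominator, but the looser choice is precisely what matches the form quoted in the paper.
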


\subsubsection{Discrete Grönwall and Bihari-Lasalle bounds}

We now turn to stating a classical comparison lemma for recursive inequalities.

\begin{lemma}[Grönwall and  Bihari-Lasalle]
\label{lem:gron_bihari}
We have the bounds for the recursive inequalities:

{\bfseries Case $s = 2$.} Suppose $(m_t)_{t \in \N}$ satisfies for $s \geq 3$, and positives numbers $a,b>0$, and $b< a/2 \wedge 1$,
\begin{align}
    m_t &\geq a + b \sum_{k = 0}^{t - 1} m_k, \qquad \text{ then, } \ \ m_t \geq a \left( 1 + b \right)^{t} \\
    m_t &\leq a - b \sum_{k = 0}^{t - 1} m_k, \qquad \text{ then, } \ \ m_t \leq a \left( 1 - b \right)^{t}.
\end{align}

{\bfseries Case $s \geq 3$.} Suppose $(m_t)_{t \in \N}$ satisfies for $s \geq 3$, and positives numbers $a,b>0$:
\begin{align}
    m_t \geq a + b \sum_{k = 0}^{t - 1} m_k^{s-1}, \qquad \text{ then, } \ \ m_t \geq a \left( 1 - (s-2)b a^{s-2} t\right)^{-\frac{1}{s-2}}.
\end{align}
\end{lemma}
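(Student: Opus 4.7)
The plan is to convert each cumulative inequality into a one-step recursion via differencing, then iterate. Subtracting the hypothesis at time $t$ from its counterpart at time $t+1$ yields the one-step inequality $m_{t+1} - m_t \geq b\, g(m_t)$ for the lower-bound versions, with $g(u) = u$ in the $s = 2$ case and $g(u) = u^{s-1}$ otherwise; the initialization $m_0 \geq a$ follows from evaluating the hypothesis at $t=0$. Once we are at the level of a one-step recursion, a standard telescoping or comparison argument recovers the claimed closed-form bound (up to constants).

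For the case $s = 2$ lower bound, the one-step inequality reads $m_{t+1} \geq (1+b) m_t$, and induction immediately yields $m_t \geq a(1+b)^t$. For the upper-bound variant, since a direct differencing does not preserve the inequality, the cleanest route is to introduce $S_t = \sum_{k=0}^{t-1} m_k$ so that the cumulative hypothesis reads $S_{t+1} \leq a + (1-b) S_t$ with $S_0 = 0$; solving explicitly gives $S_t \leq a(1 - (1-b)^t)/b$, and combining with $m_t = S_{t+1} - S_t$ recovers $m_t \leq a(1-b)^t$.

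For the case $s \geq 3$, the derived one-step inequality $m_{t+1} \geq m_t\bigl(1 + b\, m_t^{s-2}\bigr)$ is naturally analyzed through the change of variables $v_t = m_t^{-(s-2)}$. Raising the inequality to the power $-(s-2)$ (which flips the direction) gives $v_{t+1} \leq v_t \bigl(1 + b/v_t\bigr)^{-(s-2)}$, and Bernoulli's inequality $(1+x)^{-\alpha} \leq 1/(1+\alpha x)$, valid for $\alpha \geq 1, x \geq 0$, then simplifies this to
\[
 v_{t+1} \leq \frac{v_t^2}{v_t + (s-2)b}~,\quad \text{equivalently}\quad v_t - v_{t+1} \geq (s-2)b\cdot \frac{v_{t+1}}{v_t}~.
\]
Telescoping this step-wise decrease and inverting recovers a Bihari--LaSalle-type lower bound on $m_t$ of the claimed form, valid as long as the right-hand side remains finite.

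The main obstacle is precisely the case $s \geq 3$: Euler's method under-approximates the convex, explosive solution of the continuous ODE $\dot y = b y^{s-1}$, so naive telescoping of the Bernoulli bound above using only the crude monotonicity $v_t \leq v_0 = a^{-(s-2)}$ yields a merely polynomial lower bound $m_t \gtrsim a\bigl(1 + c(s-2) b a^{s-2} t\bigr)^{1/(s-2)}$ rather than the announced blow-up form $a\bigl(1 - c(s-2) b a^{s-2} t\bigr)^{-1/(s-2)}$. Recovering the sharper blow-up scale (up to an absolute multiplicative constant $c$ in the exponent, which is all that is ultimately needed for the weak-recovery argument) requires a more refined inductive argument showing that the ratio $v_{t+1}/v_t$ stays bounded away from $0$ throughout the relevant time horizon $t \lesssim 1/[(s-2) b a^{s-2}]$, which one can obtain by iterating the Bernoulli bound in tandem with the observation that in this regime $b/v_t$ remains bounded above by a universal constant.
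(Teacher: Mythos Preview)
The paper does not actually prove this lemma; it points to the discrete Gr\"onwall inequality as textbook material and cites Appendix~C of \cite{arous2021online} for the $s\geq 3$ case, so there is no paper argument to compare against.

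Your central step---``subtracting the hypothesis at time $t$ from its counterpart at time $t+1$ yields $m_{t+1}-m_t\geq b\,g(m_t)$''---is invalid: two inequalities oriented the same way cannot be subtracted. For $s=2$, take $m_0=100a$ and $m_1=a+100ab$; both hypotheses hold, yet $m_1<(1+b)m_0$. The standard remedy is to track the right-hand side $R_t:=a+b\sum_{k<t}m_k^{s-1}$ rather than $m_t$: one has the \emph{equality} $R_{t+1}-R_t=b\,m_t^{s-1}$, and combining with the hypothesis $m_t\geq R_t$ gives the genuine one-step recursion $R_{t+1}\geq R_t+b\,R_t^{s-1}$. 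Your later manipulations (iteration for $s=2$; the substitution $v_t=R_t^{-(s-2)}$ for $s\geq 3$) then legitimately apply to $R_t$, and one concludes via $m_t\geq R_t$.

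A second gap is in the upper-bound case. Your recursion $S_{t+1}\leq a+(1-b)S_t$ is correct, but the closing step ``$m_t=S_{t+1}-S_t$ recovers $m_t\leq a(1-b)^t$'' does not follow: upper-bounding the difference would require a \emph{lower} bound on $S_t$, which you never establish. In fact the claimed inequality is false without further hypotheses: take $m_0=a$, $m_1=0$, $m_2=a(1-b)$; all three instances of the hypothesis hold, yet $m_2>a(1-b)^2$. In the paper's actual application the one-step recursion $u_{t+1}\leq(1-c\delta)u_t+\text{error}$ carries the extra structure needed, but that structure is lost in the cumulative formulation of the lemma.
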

\begin{proof}
The case $s = 2$ is known to be the discrete version of the Grönwall lemma and is treated in all standard textbooks, the case $s \geq 3$ referred to as the Bihari-Lasalle inequality is for example proven in Appendix C of \cite{arous2021online}. 
\end{proof}

\section{The \texorpdfstring{$\LPG$}{lol} property in the symmetric case: proofs of Section \ref{subsec:sym}}
\label{sec:symapp}

\subsection{Useful Facts about Gegenbauer Polynomials}

We recall known facts on Gegenbauer Polynomials.

\paragraph{Definitions.}
Recall that $P_{j,d}$ denotes the Gegenbauer polynomial of degree $j$ and dimension $d$, normalized so that $P_{j,d}(1) = 1$ for all $j,d$. 
We denote also $\bar{P}_{j,\lambda}$ the Gegenbauer polynomials normalized so that $\| \bar{P}_{j,\lambda} \|^2_{L^2(\mathbb{R}, u_{2\lambda+2})} = \pi 2^{1-2\lambda} \frac{\Gamma(j + 2\lambda)}{(j+\lambda)\Gamma^2(\lambda)\Gamma(j+1)}$. Throughout the proof, we will use either $d$, and from time to time the mute symbol $\lambda$ to denote the dimension variable of Gegenbauer polynomials. They satisfy the following recurrence: 
\begin{align}
\label{eq:recurrgegen}
    (j+1)\bar{P}_{j+1, \lambda}(t) & = 2(j+\lambda) t \bar{P}_{j,\lambda}(t)-(j+2\lambda-1)\bar{P}_{j-1,\lambda}(t)~,
\end{align}
with first terms: $ \bar{P}_{0, \lambda}(t) = 1$ and $\bar{P}_{1, \lambda}(t)  = 2\lambda t$.

\paragraph{Rodrigues Formula for Gegenbauer Polynomials.} The Gegenbauer polynomials can be represented as repeated derivatives of a simple polynome. 
\begin{proposition}[{\cite[Proposition 4.19]{frye2012spherical}}]
We have the formula
   \begin{equation}
       P_{j,d}(t) = \frac{(-1)^j}{2^j (j + (d-3)/2)_j }(1-t^2)^{(3-d)/2} \left(\frac{d}{dt}\right)^j (1-t^2)^{j+ (d-3)/2}~, 
   \end{equation} 
   where $(x)_j=\prod_{k=0}^{j-1} (x-k)$ is the falling factorial. 
\end{proposition}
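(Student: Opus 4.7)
The plan is to establish Rodrigues' formula by verifying that the right-hand side candidate satisfies the three properties that characterize $P_{j,d}$ uniquely: it is a polynomial of degree exactly $j$, it is orthogonal with respect to the Gegenbauer weight $u_d$ to all polynomials of strictly smaller degree, and it takes the value $1$ at $t=1$. Concretely, I would define
\begin{equation*}
    Q_j(t) \;:=\; \frac{(-1)^j}{2^j (j + (d-3)/2)_j}\,(1-t^2)^{(3-d)/2}\left(\frac{d}{dt}\right)^j (1-t^2)^{j+(d-3)/2}~,
\end{equation*}
and show $Q_j = P_{j,d}$.

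First I would check that $Q_j$ is actually a polynomial of degree $j$. Set $\alpha = j+(d-3)/2$, write $(1-t^2)^{\alpha} = (1-t)^\alpha (1+t)^\alpha$, and apply Leibniz's rule to $\frac{d^j}{dt^j}[(1-t)^\alpha (1+t)^\alpha]$. The smallest exponent of $(1-t^2)$ arising in any summand is $\alpha - j = (d-3)/2$, so dividing by $(1-t^2)^{(d-3)/2}$ produces a polynomial. Counting degrees in the surviving expansion gives degree exactly $j$.

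Next, orthogonality. For any polynomial $p$ of degree $k<j$, I would compute, assuming $d\geq 3$ so the boundary weight vanishes appropriately (and for $d<3$ appeal to analytic continuation in $d$ or a direct distributional argument),
\begin{equation*}
    \int_{-1}^1 Q_j(t)\, p(t)\, (1-t^2)^{(d-3)/2}\,dt \;\propto\; \int_{-1}^1 p(t)\, \frac{d^j}{dt^j}\!\left[(1-t^2)^{j+(d-3)/2}\right] dt~.
\end{equation*}
Integrating by parts $j$ times kills the boundary contributions (each derivative of $(1-t^2)^{j+(d-3)/2}$ of order $<j$ still carries a positive power of $(1-t^2)$, so vanishes at $\pm 1$), and in the interior we are left with $\int p^{(j)}(t)\cdot (1-t^2)^{j+(d-3)/2} dt = 0$ since $p^{(j)}\equiv 0$. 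Thus $Q_j$ lies in the orthogonal complement of polynomials of degree $<j$ in $L^2_{u_d}$, so by uniqueness of orthogonal polynomials $Q_j$ is a scalar multiple of $P_{j,d}$.

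Finally I would fix the scalar by evaluating $Q_j(1)$. Applying Leibniz to $f(t)=(1-t)^\alpha$, $g(t)=(1+t)^\alpha$, a typical term is $\binom{j}{k}(-1)^k(\alpha)_k(\alpha)_{j-k}(1-t)^{\alpha-k}(1+t)^{\alpha-(j-k)}$; after dividing by $(1-t^2)^{(d-3)/2}$ this becomes proportional to $(1-t)^{j-k}(1+t)^{k}$. At $t=1$ only the $k=j$ term survives, contributing $(-1)^j(\alpha)_j \cdot 2^j$, which cancels the prefactor $\tfrac{(-1)^j}{2^j(\alpha)_j}$ to give $Q_j(1)=1$. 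Combined with the orthogonality argument and the normalization $P_{j,d}(1)=1$, this forces $Q_j = P_{j,d}$. The main delicate point is the boundary-term argument in the integration by parts for small $d$, where the weight exponent $(d-3)/2$ may be negative; this is where I would be most careful, either restricting to $d\geq 3$ (which is the regime of interest for this paper) or using a standard limiting argument to extend the identity.
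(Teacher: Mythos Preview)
The paper does not supply its own proof of this proposition: it is quoted verbatim as a known fact from the cited reference \cite[Proposition 4.19]{frye2012spherical}, so there is nothing to compare against. Your argument is the classical one for Rodrigues-type formulas and is correct. A couple of minor remarks: your worry about small $d$ is harmless in the relevant range, since in the $j$-fold integration by parts every boundary term carries a factor $(1-t^2)^{(d-1)/2}$ (not merely $(d-3)/2$), which vanishes at $\pm 1$ for all $d\geq 2$; and the degree-$j$ claim, which you only sketch, follows cleanly from the Leibniz expansion you wrote, since each summand $(1-t)^{j-k}(1+t)^k$ has leading term $(-1)^{j-k}t^j$ and the resulting sum of top coefficients $(-1)^j\sum_k \binom{j}{k}(\alpha)_k(\alpha)_{j-k} = (-1)^j(2\alpha)_j$ is nonzero for $d\geq 3$ (and by continuity for $d=2$).
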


\paragraph{Hecke-Funk Formula. } Recall that we use the notation $\tau_d$ to denote the uniform distribution on the sphere and $u_d$ the distribution of, e.g., its first coordinate: $u_d \propto (1-t^2)^{(d-3)/2} \mathds{1}_{[-1,1]}$.
\begin{theorem}[{\cite[Theorem 4.24]{frye2012spherical}}]
    For $\theta, \theta' \in \S$, $f \in L^2_{u_d}(\mathbb{R})$ and $j \in \mathbb{N}$,
    \begin{align}
        \langle f_\theta, (P_{j,d})_{\theta'} \rangle_{\tau_{d}} &= \Omega_{d-2} P_{j,d}( \theta \cdot \theta' ) \langle f, P_{j,d} \rangle_{u_d} \nonumber \\
        &= \Omega_{d-2} P_{j,d}( \theta \cdot \theta' ) \int_{-1}^1 f(t) P_{j,d}(t) (1-t^2)^{(d-3)/2} dt ~.
    \end{align}
\end{theorem}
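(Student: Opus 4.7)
The strategy is to exploit two structural features of the sphere. First, that the lifted function $f_\theta(x) := f(x \cdot \theta)$ is zonal about the pole $\theta$ and hence admits a clean Gegenbauer expansion. Second, that spherical harmonics of distinct degrees are orthogonal in $L^2(\tau_d)$. Combined with a rotational symmetry argument, these two facts will force the inner product to be proportional to $P_{j,d}(\theta \cdot \theta')$ as a function of $\theta$, with the scalar prefactor being read off from a one-dimensional projection integral.

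Concretely, I would first expand $f$ in the Gegenbauer basis of $L^2_{u_d}([-1,1])$, namely $f = \sum_k \hat{f}_k P_{k,d}$ with $\hat{f}_k = \langle f, P_{k,d}\rangle_{u_d}/\|P_{k,d}\|_{u_d}^2$, and lift it to the sphere as $f_\theta(x) = \sum_k \hat{f}_k P_{k,d}(x \cdot \theta)$. Substituting yields
\begin{equation}
\langle f_\theta, (P_{j,d})_{\theta'}\rangle_{\tau_d} = \sum_k \hat{f}_k \int_{\mathcal{S}_{d-1}} P_{k,d}(x \cdot \theta)\, P_{j,d}(x \cdot \theta')\, d\tau_d(x),
\end{equation}
and since $x \mapsto P_{k,d}(x \cdot \theta)$ belongs to the degree-$k$ harmonic subspace, orthogonality of distinct-degree harmonics collapses the right-hand side to its $k = j$ term. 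The residual integral $I_j(\theta,\theta') := \int P_{j,d}(x\cdot\theta)\, P_{j,d}(x\cdot\theta')\, d\tau_d(x)$ is manifestly $SO(d)$-invariant in the pair $(\theta, \theta')$ and, as a function of $\theta$ with $\theta'$ fixed, is a zonal degree-$j$ spherical harmonic with pole $\theta'$; since such zonal harmonics are one-dimensional, $I_j(\theta,\theta') = c_j\, P_{j,d}(\theta \cdot \theta')$ for some scalar $c_j$.

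It remains to compute $c_j$, which I would do by specialising to $\theta = \theta'$ and slicing the sphere along the direction $\theta$ to reduce to a one-dimensional integral against $u_d$; the surface measure of $\mathcal{S}_{d-2}$ produces the factor $\Omega_{d-2}$, and the one-dimensional integral is $\|P_{j,d}\|_{u_d}^2$, giving $c_j = \Omega_{d-2}\|P_{j,d}\|_{u_d}^2$. Combining with $\hat{f}_j \|P_{j,d}\|_{u_d}^2 = \langle f, P_{j,d}\rangle_{u_d}$ yields the claimed identity. The main obstacle is purely bookkeeping: reconciling the normalization conventions for $\tau_d$ versus $u_d$ together with the choice $P_{j,d}(1) = 1$, so that the prefactor $\Omega_{d-2}$ emerges exactly where the statement places it. A slicker route avoids this entirely by invoking Schur's lemma on the operator $f \mapsto \int f(x\cdot\theta) P_{j,d}(x\cdot\theta')\, d\tau_d(x)$, viewed as an $SO(d)$-equivariant map into the one-dimensional space of zonal degree-$j$ spherical harmonics with pole $\theta'$; the resulting scalar is then pinned down by testing on $f = P_{j,d}$.
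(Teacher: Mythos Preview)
The paper does not prove this statement at all: it is quoted verbatim as a known result from an external reference (Frye--Efthimiou, Theorem 4.24), and is used as a black box in the spherical harmonic analysis of Section~\ref{subsec:sym}. There is therefore no ``paper's own proof'' to compare against.

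Your proposal is a correct and standard derivation of the Funk--Hecke formula. The two-step reduction---orthogonality of distinct-degree harmonics to isolate the $k=j$ term, then one-dimensionality of zonal harmonics of fixed degree to identify $I_j(\theta,\theta')$ as a multiple of $P_{j,d}(\theta\cdot\theta')$---is exactly the route taken in most treatments, including the cited reference. Your caution about normalization bookkeeping is well placed: with $\tau_d$ the \emph{probability} measure on $\mathcal{S}_{d-1}$ and $u_d$ the \emph{unnormalized} weight $(1-t^2)^{(d-3)/2}$ on $[-1,1]$ (as the paper uses it in the displayed formula), slicing $\int_{\mathcal{S}_{d-1}} F(x\cdot\theta)\,d\tau_d(x) = \Omega_{d-1}^{-1}\Omega_{d-2}\int_{-1}^1 F(t)(1-t^2)^{(d-3)/2}\,dt$ shows that the prefactor in the stated identity should really be $\Omega_{d-2}/\Omega_{d-1}$ rather than $\Omega_{d-2}$; the paper's own downstream computations (e.g.\ the norm formula $\|P_{j,d}\|^2 = \Omega_{d-1}/(\Omega_{d-2}N(j,d))$ and the energy identity in Proposition~\ref{prop:basicdec}) are consistent with this reading. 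So your instinct that the constant needs care is right, and the discrepancy is a normalization convention rather than a flaw in your argument.
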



\begin{fact}\label{fact:deriv}[Derivative Representation] We have the following derivation property for all $j,d$:
\begin{align}
    P_{j,d}'  &= \frac{j (j+d-2)}{(d-1)} P_{j-1,d+2}~.
\end{align}
\end{fact}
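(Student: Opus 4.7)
The plan is to reduce the identity to the classical differentiation rule for Gegenbauer polynomials in their standard normalization, and then to reconcile the two normalizations via a binomial-coefficient calculation.

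Concretely, let $C_j^{(\lambda)}$ denote the standard Gegenbauer polynomials with $\lambda = (d-2)/2$, normalized via the generating function $(1-2tz+z^2)^{-\lambda} = \sum_{j \geq 0} C_j^{(\lambda)}(t) z^j$. Since $P_{j,d}(1) = 1$ by our convention and $C_j^{(\lambda)}(1) = \binom{j+2\lambda - 1}{j} = \binom{j+d-3}{j}$, we have the identification
\begin{equation*}
P_{j,d}(t) = \frac{C_j^{(\lambda)}(t)}{\binom{j+d-3}{j}}~.
\end{equation*}

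Next, apply the classical derivative identity $\frac{d}{dt} C_j^{(\lambda)}(t) = 2\lambda \, C_{j-1}^{(\lambda+1)}(t)$, which follows either from differentiating the generating function above in $t$ (and matching coefficients in $z$) or directly from the Rodrigues formula displayed earlier in the excerpt. Observing that $\lambda + 1 = d/2$ corresponds exactly to the Gegenbauer parameter in dimension $d+2$, we rewrite $C_{j-1}^{(\lambda+1)} = C_{j-1}^{(d/2)} = \binom{j+d-2}{j-1}\, P_{j-1, d+2}$ using the same normalization principle.

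Combining these three ingredients yields
\begin{equation*}
P_{j,d}'(t) = \frac{(d-2)\, \binom{j+d-2}{j-1}}{\binom{j+d-3}{j}} \, P_{j-1,d+2}(t)~,
\end{equation*}
so it only remains to simplify the prefactor. Expanding the falling factorials, the ratio of binomials equals $\frac{j (j+d-2)}{(d-1)(d-2)}$, and the factor $(d-2)$ cancels, leaving exactly $\frac{j(j+d-2)}{d-1}$, as desired. The only potentially delicate step is ensuring the correct translation of the standard identity $\frac{d}{dt} C_j^{(\lambda)} = 2\lambda \, C_{j-1}^{(\lambda+1)}$ under our $P_{j,d}(1) = 1$ normalization; beyond that, the proof reduces to elementary manipulation of binomial coefficients.
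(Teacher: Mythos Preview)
Your proof is correct and takes essentially the same approach as the paper: both express $P_{j,d}$ in terms of the standard Gegenbauer polynomials (the paper's $\bar{P}_{j,\lambda}$ is exactly your $C_j^{(\lambda)}$ with $\lambda=\tfrac{d}{2}-1$), invoke the classical identity $\frac{d}{dt}C_j^{(\lambda)}=2\lambda\,C_{j-1}^{(\lambda+1)}$, and then simplify the resulting normalization constants. The only cosmetic difference is that you phrase the constants via binomial coefficients while the paper uses Gamma functions.
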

\begin{proof}
 Recall the normalization relationships   $\lambda = \frac{d}{2}-1$, 
$\bar{P}_{j,\lambda}(1) = \frac{\Gamma(j+2\lambda)}{\Gamma(j+1)\Gamma(2\lambda)}$ , 
$P_{j,d} = \frac{\Gamma(j+1)\Gamma(d-2)}{\Gamma(j+d-2)}\bar{P}_{j, \frac{d}{2}-1}$, as well as the identity
$\bar{P}'_{j,\lambda} = 2 \lambda \bar{P}_{j-1,\lambda+1}$. Thus, 
\begin{align}
    P_{j,d}' &=  \frac{\Gamma(j+1)\Gamma(d-2)}{\Gamma(j+d-2)}\bar{P}'_{j, \frac{d}{2}-1} \nonumber \\
            &=  2\frac{\Gamma(j+1)\Gamma(d-2)}{\Gamma(j+d-2)} (\frac{d}{2}-1) \bar{P}_{j-1,\frac{d}{2}} \nonumber \\ 
            & = (d-2)  \frac{\Gamma(j+1)\Gamma(d-2)}{\Gamma(j+d-2)}  \frac{\Gamma(j-1+d)}{\Gamma(j)\Gamma(d)} P_{j-1,d+2} \nonumber \\
            &= \frac{j (j+d-2)}{(d-1)} P_{j-1,d+2}
\end{align}
\end{proof}

We have the following bound of the location of the largest root $z_{j,d}$ of $P_{j,d}$:  
\begin{fact}[Bound on the Largest Root,{\cite[Corollary 2.3]{Area2004ZerosOG}}]
\label{fact:largestroot}
\begin{equation}
    z_{j,d} \leq \sqrt{\frac{(j-1)(j+d-4)}{(j+d/2-3)(j+d/2-2)}} \cos(\pi/(j+1)) ~.
\end{equation}
\end{fact}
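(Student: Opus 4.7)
Since Fact~\ref{fact:largestroot} is imported verbatim from \cite[Corollary 2.3]{Area2004ZerosOG}, my plan would be to recall the argument there, whose backbone is an expansion of $P_{j,d}$ in a Chebyshev basis with non-negative coefficients. The plan has two ingredients.

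First, I would seek a representation of the form
\[
P_{j,d}(t) = \sum_{k=0}^{j} a_{k} \, U_k(t/c)~,
\]
where $U_k$ is the Chebyshev polynomial of the second kind, $c = c_{j,d} := \sqrt{(j-1)(j+d-4)/((j+d/2-3)(j+d/2-2))}$, and all $a_k \geq 0$. The scaling $c_{j,d}$ is dictated by matching the three-term recurrence of $U_k(\cdot/c)$ with the recurrence \eqref{eq:recurrgegen} satisfied by $P_{j,d}$ after renormalization, which pins $c$ down (up to sign) at the level of the leading ratios of recurrence coefficients. Non-negativity of the $a_k$ is the substantive content and relies on classical connection formulas between Jacobi polynomials with shifted parameters, as developed in \cite{Area2004ZerosOG}.

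Second, I would exploit the zero structure of the second-kind Chebyshev family: $U_k$ has its largest zero at $\cos(\pi/(k+1))$ and is strictly positive on $(\cos(\pi/(k+1)),1]$. Since $k\mapsto \cos(\pi/(k+1))$ is increasing on $\{0,\dots,j\}$, the supremum is attained at $k=j$, so for every $t$ with $t/c > \cos(\pi/(j+1))$ and every $k \leq j$ we have $U_k(t/c) > 0$. Combined with $a_k \geq 0$ this gives $P_{j,d}(t) > 0$ on that range, i.e.\ $z_{j,d} \leq c\,\cos(\pi/(j+1))$, which is the stated bound.

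The main obstacle I anticipate is precisely the non-negativity of the connection coefficients $a_k$, which is where the real combinatorics of Jacobi polynomial identities enters. As a sanity check on the scaling $c_{j,d}$, a softer argument via the Liouville transform $t = \cos\theta$, $y = u\,\sin^{-(d-2)/2}\theta$ applied to the Gegenbauer ODE $(1-t^2)\,y'' - (d-1)\,t\,y' + j(j+d-2)\,y = 0$ and Sturm comparison with $u'' + (j+(d-2)/2)^2 u = 0$ would recover a bound of the same order $\sqrt{j/d}$, but without the sharp trigonometric factor $\cos(\pi/(j+1))$; such a soft bound would still suffice for the downstream application in Proposition~\ref{prop:sufficientcond}, where only the order of magnitude of $z_{s^*,d}$ in $s^*/d$ is needed.
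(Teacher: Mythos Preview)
The paper does not give its own proof of this fact; it is stated as a direct citation of \cite[Corollary 2.3]{Area2004ZerosOG} and used as a black box. Your proposal correctly recognizes this and sketches the argument from that source (non-negative connection coefficients onto a rescaled Chebyshev basis, then positivity past the largest Chebyshev zero), which is indeed the mechanism behind the cited corollary; your additional remark that a Sturm-comparison argument already yields the $O(\sqrt{j/d})$ order needed downstream in Proposition~\ref{prop:sufficientcond} is a useful sanity check but goes beyond what the paper itself does.
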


And we have the following bound on the Taylor expansion of the Gegenbauer polynomials:
\begin{fact}[Taylor Upper bound beyond largest root]
\label{fact:gegenwellbeh}
    $$P_{j,d}(t) \geq ( t - z_{j,d})^{j}~,~\text{for } t \geq z_{j,d}~,$$
\end{fact}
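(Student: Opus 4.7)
The plan is to factorize $P_{j,d}$ in terms of its zeros. It is classical that $P_{j,d}$ has exactly $j$ simple real roots lying in $(-1,1)$; denote them $r_1 \leq r_2 \leq \cdots \leq r_j = z_{j,d}$, and write $P_{j,d}(t) = c_{j,d} \prod_{i=1}^{j} (t - r_i)$, where $c_{j,d}$ is the leading coefficient. For any $t \geq z_{j,d}$, every factor satisfies $t - r_i \geq t - z_{j,d} \geq 0$, so one immediately gets $P_{j,d}(t) \geq c_{j,d}(t-z_{j,d})^j$. Thus the whole claim reduces to verifying that $c_{j,d} \geq 1$.

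For the latter, I would leverage Fact~\ref{fact:deriv}. Matching leading coefficients in $P'_{j,d} = \frac{j(j+d-2)}{d-1} P_{j-1,d+2}$ yields the recursion $c_{j,d} = \frac{j+d-2}{d-1}\, c_{j-1,d+2}$, with base case $c_{1,d} = 1$ coming from $P_{1,d}(t) = t$. Unrolling the recursion gives the closed form
\[
c_{j,d} = \prod_{k=0}^{j-1} \frac{j+d+k-2}{d+2k-1}~,
\]
and every factor is $\geq 1$ since $(j+d+k-2) - (d+2k-1) = j-1-k \geq 0$ for $0 \leq k \leq j-1$. Multiplying these terms gives $c_{j,d} \geq 1$, closing the argument.

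The two ingredients relied upon — that the roots of $P_{j,d}$ are real, simple, and contained in $(-1,1)$, and the derivative identity of Fact~\ref{fact:deriv} — are standard. I do not anticipate a substantial obstacle; the only mildly delicate point is the comparison $c_{j,d} \geq 1$, which reduces to the elementary inequality $j-1-k \geq 0$ in each factor of the product above. As a sanity check, the first case $j=1$ gives $c_{1,d}=1$ and $P_{1,d}(t)=t = (t-z_{1,d})^1$, with equality throughout.
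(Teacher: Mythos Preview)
Your argument is correct. The key step --- showing the leading coefficient $c_{j,d}\geq 1$ via repeated application of Fact~\ref{fact:deriv} --- is exactly what the paper does (with the same closed-form product, up to an index shift). Where you differ is in the reduction to that step: the paper Taylor-expands $P_{j,d}$ at $z_{j,d}$ and then invokes Rolle's theorem / interlacing of zeros to argue that \emph{every} Taylor coefficient $c_k$ at $z_{j,d}$ is nonnegative, whence $P_{j,d}(t)\geq c_j(t-z_{j,d})^j$. Your root-factorization $P_{j,d}(t)=c_{j,d}\prod_i(t-r_i)$ and the one-line bound $t-r_i\geq t-z_{j,d}$ get there more directly, without needing the interlacing argument for the intermediate derivatives. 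Both routes are valid; yours is the more economical one.
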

\begin{proof}
    Note that all families of orthogonal polynomials have exclusively real, simple roots.  Therefore, by Rolle's theorem, the $j-1$ critical points of  $P_{j,d}$ must be interlaced with the $j$ zeroes.  So all zeroes of $P_{j,d}'$ are upper bounded by $z_{j,d}$.  Futhermore, by Fact~\ref{fact:deriv}, $P_{j,d}'$ is itself an orthogonal polynomial.  So applying this argument recursively, we see the zeros of $P_{j,d}^{(k)}$ for $k \leq j$ are all upper bounded by $z_{j,d}$.

    Note also by Fact~\ref{fact:deriv} that, because $P_{j,d}(1) = 1$ for any choice of $j$ and $d$, it follows that $P_{j,d}^{(k)}(1) > 0$.  This implies $P_{j,d}^{(k)}(z_{j,d}) > 0$, as in order to flip signs there would need to be a zero in the range $[z_{j,d}, 1]$ which we've confirmed above cannot exist.

    Now, consider a Taylor expansion

    \begin{align}
        P_{j,d}(t) = \sum_{i=0}^j c_i (t - z_{j,d})^i
    \end{align}

    Observe that $P_{j,d}^{(k)}(z_{j,d}) = k! c_k$, and therefore by the above argument we have $c_k > 0$.  So it remains to show that $c_j \geq 1$.

    Consider applying Fact~\ref{fact:deriv} repeatedly, then we have:

    \begin{align}
        P_{j,d}^{(j)}(1) &= \frac{j! \prod_{l=1}^j (j+d-3 + l)}{\prod_{l=1}^j (d-3 + 2l)} \\
        &= j! \prod_{l=1}^j \frac{j+d-3 + l}{d-3 + 2l}\\
        &\geq j!
    \end{align}

    And from the fact that $P_{j,d}^{(j)}(1) = j!c_j$, we conclude $c_j \geq 1$.
\end{proof}


\subsection{Proof of Proposition \ref{prop:basicdec}}
\label{secapp:basicdec}
\begin{proposition}[Loss representation, restated]
The $\beta_{j,d}$ defined in \eqref{eq:lossmain} have the integral representation
    \begin{equation}
        \beta_{j,d} = \langle \phi, \mathcal{K}_j \phi \rangle_{L^2(\mathbb{R},\eta)} ~,
    \end{equation}
    where $\mathcal{K}_j$ is a positive semi-definite integral operator of $L^2_\eta$ that depend solely on $\rho$ and $\phi$,     
    with kernel 
\begin{align}
\mathcal{K}_j(t,t') &= \frac{\Omega_{d-2}N(j,d)}{\Omega_{d-1}}  \int_0^\infty  P_j(r^{-1} t) P_j(r^{-1} t') \bar{u}_d(r^{-1} t) \bar{u}_d(r^{-1} t') \rho(dr)~,
\end{align}
where we defined the conditional density 
$$\bar{u}_d(r^{-1}t) = \frac{r^{-1} {u}_d(r^{-1} t)}{\int_0^\infty (r')^{-1} {u}_d((r')^{-1} t) \rho(dr')}~. $$
Moreover, we have 
\begin{equation}
   \mathbb{E}_\eta [\phi^2] = \frac{\Omega_{d-2}}{\Omega_{d-1}}\sum_j \beta_{j,d} = \frac{\Gamma((d-2)/2)}{\sqrt{\pi}\Gamma((d-1)/2)} \sum_j \beta_{j,d}~.
\end{equation}
\end{proposition}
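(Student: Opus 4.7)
The proof proceeds in three routine steps, transforming the summand $\bar\alpha_{j,r,d}^2$ of $\beta_{j,d}$ into an explicit bilinear form in $\phi$. The plan is: (i) use the defining formula $\alpha_{j,r,d} = \langle \phi^{(r)}, P_{j,d}\rangle/\|P_{j,d}\|^2$ together with the stated Funk--Hecke normalization $\|P_{j,d}\|_2^2 = \Omega_{d-1}/(\Omega_{d-2} N(j,d))$ to write $\bar\alpha_{j,r,d}$ as a linear functional $\phi \mapsto c_{j,d}\int \phi(rt)P_{j,d}(t)\,u_d(dt)$ for an explicit constant $c_{j,d}$; (ii) perform the radial change of variables $s = rt$ so that this functional integrates against $s$ on the real line; (iii) square it, swap the order of integration with $\rho(dr)$ (justified by Tonelli and $\phi \in L^2_\eta$), and identify the kernel by matching with the template $\langle \phi, \mathcal{K}_j \phi\rangle_{L^2_\eta} = \iint \phi(s)\phi(s') \mathcal{K}_j(s,s')\,\eta(s)\eta(s')\,ds\,ds'$.

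Executing step (ii), the substitution $s=rt$ transforms the functional into $c_{j,d}\int_{\mathbb{R}}\phi(s) P_{j,d}(s/r)\,r^{-1}u_d(s/r)\,ds$, and one recognizes that $r^{-1}u_d(s/r)\mathbf{1}(|s|\le r)$ is precisely the Lebesgue density of the pushforward of $u_d$ by the dilation $t\mapsto rt$, i.e.\ the density of the one-dimensional marginal of $\tau_{r,d}$. Because $\nu = \int \tau_{r,d}\,\rho(dr)$, its marginal $\eta$ admits the density $\eta(s) = \int_0^\infty r^{-1}u_d(s/r)\,\rho(dr)$, and the ratio $\bar u_d(s/r) := r^{-1}u_d(s/r)/\eta(s)$ is exactly the conditional density of the radius $r$ given the coordinate $s$ --- the very object appearing in the claimed formula for $\mathcal{K}_j$. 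Squaring and integrating over $\rho(dr)$ then gives
\begin{align*}
\beta_{j,d} = c_{j,d}^2 \iint \phi(s)\phi(s') \left[\int_0^\infty P_{j,d}(s/r) P_{j,d}(s'/r)\,\bar u_d(s/r)\bar u_d(s'/r)\,\rho(dr) \right] \eta(s)\eta(s')\,ds\,ds'~,
\end{align*}
which identifies $\mathcal{K}_j$ up to the prefactor $c_{j,d}^2$ and exhibits the dependence only on $\rho$ and $\phi$ (through the support).

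Positive semidefiniteness is then automatic from the ``integral of rank-one squares'' structure: for any $\phi \in L^2_\eta$,
\begin{align*}
\langle \phi, \mathcal{K}_j \phi\rangle_{L^2_\eta} = c_{j,d}^2 \int_0^\infty \left(\int_{\mathbb{R}} \phi(s) P_{j,d}(s/r)\, r^{-1} u_d(s/r)\,ds\right)^2 \rho(dr) \ge 0~,
\end{align*}
so $\mathcal{K}_j$ is PSD as an integral operator on $L^2_\eta$. The two subtleties to handle carefully are purely bookkeeping: tracking the normalization constants (the $\Omega_{d-2}/\Omega_{d-1}$ factors distinguishing the probability measure $u_d$ from the unnormalized weight $(1-t^2)^{(d-3)/2}dt$ against which $\|P_{j,d}\|_2$ is measured), and justifying the Tonelli swap of the $\rho$- and $ds$-integrations, for which $\phi\in L^2_\eta$ together with the fourth moment bound $\mathbb{E}_\rho[r^4]<\infty$ is more than sufficient. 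Finally, the trace formula $\mathbb{E}_\eta[\phi^2]\propto\sum_j \beta_{j,d}$ falls out by applying Parseval's identity for the Gegenbauer expansion of $\phi^{(r)}$ in $L^2_{u_d}$ ($\|\phi^{(r)}\|^2_{u_d} = \sum_j \bar\alpha_{j,r,d}^2$) and integrating over $\rho$.
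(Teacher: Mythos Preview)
Your proposal is correct and follows essentially the same route as the paper: both compute $\alpha_{j,r,d}$ via the Gegenbauer inner product, perform the change of variables $s=rt$ to turn it into an integral against the rescaled marginal density $r^{-1}u_d(s/r)$, square and integrate over $\rho$, and then identify the resulting bilinear form as $\langle \phi,\mathcal K_j\phi\rangle_{L^2_\eta}$ by absorbing the ratio $r^{-1}u_d(s/r)/\eta(s)$ into the conditional density $\bar u_d$. The Parseval argument for the trace identity is likewise the same, modulo the $\Omega_{d-2}/\Omega_{d-1}$ normalization factor you flagged (your displayed identity $\|\phi^{(r)}\|_{u_d}^2=\sum_j\bar\alpha_{j,r,d}^2$ is off by exactly that factor, but you already noted this bookkeeping point).
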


\begin{proof}
The marginal conditioned on $\|x\|=r$ is precisely given by 
$\eta(x_1=t ~|~\|x\|=r) = r^{-1} u_d(r^{-1} t)$, so 
$$\eta(t) = \int_0^\infty r^{-1} u_d(r^{-1} t) \rho(dr)~.$$

We have 
\begin{align}
\alpha_{j,r} &= \| P_j\|^{-2} \int_{-1}^1 P_j(t) \phi^{(r)}(t) \tau_d(dt) =  \| P_j\|^{-2} \int_{-1}^1 P_j(t) \phi(r t) u_d(t) dt \nonumber\\
&= \frac{\Omega_{d-2}N(j,d)}{\Omega_{d-1}} r^{-1} \int_{-\infty}^\infty \phi(t) P_j(r^{-1} t) (1-r^{-2}t^2)^{(d-3)/2}_+ dt~, 
\end{align}
so
{\small
\begin{align}
    \beta_{j,d} &= \frac{\Omega_{d-2}N(j,d)}{\Omega_{d-1}}  \int_0^\infty r^{-2} \iint_{-\infty}^\infty \phi(t) P_j(r^{-1} t) (1-r^{-2}t^2)^{(d-3)/2}_+ \phi(t') P_j(r^{-1} t') (1-r^{-2}(t')^2)^{(d-3)/2}_+ dt dt' \rho(dr) \nonumber\\
    &= \langle \phi, \mathcal{K}_j \phi \rangle_{L^2(\mathbb{R}, \eta)}~,
\end{align} 
}
with the $L^2(\mathbb{R},\eta)$ positive semi-definite integral kernel operator 
{\small 
\begin{align}
\mathcal{K}_j(t,t') &= \frac{\Omega_{d-2}N(j,d)}{\Omega_{d-1}}  \eta(t)^{-1} \eta(t')^{-1} \int_0^\infty r^{-2} P_j(r^{-1} t) (1-r^{-2}t^2)^{(d-3)/2}_+ P_j(r^{-1} t') (1-r^{-2}(t')^2)^{(d-3)/2}_+ \rho(dr) \nonumber \\
&= \frac{\Omega_{d-2}N(j,d)}{\Omega_{d-1}}  \int_0^\infty  P_j(r^{-1} t) P_j(r^{-1} t') \bar{u}_d(r^{-1} t) \bar{u}_d(r^{-1} t') \rho(dr)~,
\end{align}}
where we defined the conditional density 
$$\bar{u}_d(r^{-1}t) = \frac{r^{-1} {u}_d(r^{-1} t)}{\int_0^\infty (r')^{-1} {u}_d((r')^{-1} t) \rho(dr')}~. $$
Finally, let us establish (\ref{eq:enercons1}). 
We have 
\begin{align}
\label{eq:enercons1}
    \mathbb{E}_\eta \phi^2 &= \mathbb{E}_\rho [\mathbb{E}_{x_1|\|x\|=r} \mathbb{E} \phi(x_1)^2 ] \nonumber \\
    & = \mathbb{E}_\rho [ \mathbb{E}_{u_d} (\phi^{(r)})^2] \nonumber \\
    & = \mathbb{E}_\rho \sum_j \alpha_{j,d,r}^2 \| P_j \|^2 \nonumber \\
    & = \mathbb{E}_\rho \sum_j \alpha_{j,d,r}^2 \frac{\Omega_{d-1}}{\Omega_{d-2} N(j,d)} \nonumber \\
    &= \frac{\Omega_{d-1}}{\Omega_{d-2}} \mathbb{E}_\rho \sum_j \bar{\alpha}_{j,r,d}^2 = \frac{\Omega_{d-1}}{\Omega_{d-2}} \sum_j \beta_{j,d}~.
\end{align}

    \end{proof}

\subsection{Proof of Proposition \ref{prop:infoexpo_sphere}}

\begin{proof}

If $\beta_{j,d}=0$ for $j < s$, then $\alpha_{j,r,d}=0$ for $j<s$ and $\rho$-ae $r$. We want to show that for any polynomial $Q$ of degree $j'<s$, we must have 
$\langle \phi, Q \rangle_\eta  = 0$. 

For each $r$, consider $Q^{(r)}(t) = Q(rt)$, which is also a polynomial of degree $j'<s$, and its decomposition as $Q^{(r)} = \sum_{j=0}^{j'} b_{j,j',r} P_{j,d}$, which only involves terms of degree $j'<s$ since Gegenbauer polynomials of degree up to $r$ span all polynomials of degree up to $r$.
We have 
\begin{align}
    \langle \phi, Q \rangle_\eta &= \mathbb{E}_\eta [ \phi(x) Q(x)] \nonumber \\
    &= \mathbb{E}_\rho \mathbb{E}_{x_1 | \|x\|=r} [\phi(x) Q(x)] \nonumber \\
    &= \mathbb{E}_\rho \mathbb{E}_{u_d} [\phi^{(r)}(x) Q^{(r)}(x)] \nonumber \\
    &= \mathbb{E}_\rho [\sum_{j \leq j'} b_{j,j',r} \alpha_{j,r,d} ] = 0~.
\end{align}
\end{proof}


\subsection{Proof of Proposition \ref{prop:sufficientcond}}
\sufficientcond*
\begin{proof}
    Assume first that there are $\bar{C}, \bar{\zeta}$ such that 
\begin{equation}
\label{eq:bi0}
P'_{s,d}(t) \geq \bar{C}( t - \bar{\zeta})^{s-1}~,~\text{ for } t \geq \bar{\zeta}~.        
\end{equation}
Now, let 
\begin{equation}
\label{eq:bi1}
B = \frac{1}{d-1} \sum_{j \geq s} \beta_{j,d} j(j+d-2)  \upsilon_{j-1,d+2} < 1~,
\end{equation}
and define
\begin{equation}
\label{eq:bi2}
    \zeta^* := \left( \frac{B}{\beta_{s,d}\bar{C}}\right)^{1/(s-1)} + \bar{\zeta}~.
\end{equation}
From (\ref{eq:bi0}), (\ref{eq:bi1}) and (\ref{eq:bi2}) we verify that $\ell'(m) = \sum_j \beta_{j,d} P'_{j,d}(m)$ satisfies, for $m \geq \zeta^*$, 
$$\ell'(m) \geq \beta_{s,d} \bar{C} \left((m-\bar{\zeta})^{s-1} - (m-\zeta^*)^{s-1} \right) \geq  \beta_{s,d} \bar{C} \left[\left(\frac{1-\bar{\zeta}}{1-\zeta^*} \right)^{s-1} -1 \right] (m-\zeta^*)^{s-1}~.$$

Finally, we have that for any $j,d$, the largest root $z_{j,d}$ satisfies $z_{j,d} \leq \sqrt{\frac{(j-1)(j+2d-2)}{(j+d-2)(j+d-1)}} \simeq j/\sqrt{d}$ and 
$$P_{j,d}(t) \geq \frac12 ( t - z_{j,d})^{s}~,~\text{for } t \geq z_{j,d}~,$$
which implies that 
\begin{equation}
    P'_{s,d}(t) \geq \frac{s(s+d-2)}{2(d-1)} ( t - z_{s-1,d+2})^{s-1} ~,~\text{ for } t \geq z_{s-1,d+2}~. 
\end{equation}
We thus have $\bar{C} = \frac{s(s+d-2)}{2(d-1)}$ with $\bar{\zeta} = z_{s-1,d+2}$. 

Finally, we verify that 
\begin{align}
\label{eq:bi3}
    \frac{d^{(s-1)/2}}{s(s+d-2)} \sum_{j > s} \beta_{j,d} j (j+d-2) \upsilon_{j-1,d+2} &\leq K 
\end{align}
ensures a local polynomial growth of order $s-1$ at scale $O(1/\sqrt{d})$. 
Indeed, plugging (\ref{eq:bi3}) into (\ref{eq:bi1}), together with $\beta_{s,d} \geq C$ yields 
\begin{align}
    \left( \frac{B}{\beta_{s,d}\bar{C}}\right)^{1/(s-1)} &\leq (CK)^{1/(s-1)}d^{-1/2} ~, 
\end{align}
which shows that $\zeta^* = O(1/\sqrt{d})$. 
Finally, we observe that $\bar{C} \geq s=\Theta(1)$ if $s<d$. 

\end{proof}

\subsection{Proof of Theorem \ref{coro:symmcase}}
\label{sec:proofthmsym}

\begin{proof}



To prove the theorem, we will establish the sufficient conditions of Proposition \ref{prop:sufficientcond} under our mild assumptions. 
The key technical results we need are explicit bounds for $\upsilon_{j,d}$ and for the sum $\sum_j j^2 \beta_{j,d}$, established in the following two lemmas. Since the parameter $\lambda = d/2-1$ is more convenient to express many relationships in Gegenbauer polynomials, we will adopt it in this proof instead of $d$, without loss of generality. 

\begin{lemma}[Control of $\upsilon_{j,\lambda}$]
\label{prop:upsicontrol}
We have 
\begin{equation}
    \upsilon_{j,\lambda} \lesssim  \begin{cases}
    \left[1 - \left(\frac{\lambda}{j+\lambda} \right)^2 \right]^{j/2}
        & \text{ if } j=\Omega(1) ~,\\
        \lambda^{\frac{(\alpha-1)\lambda^\alpha}{2}} & \text{ if } j = \Theta(\lambda^\alpha), \text{ with } 0< \alpha < 1~, \\
        e^{-\frac12 \lambda^{2-\alpha}} & \text{ if } j=\Theta(\lambda^\alpha) \text{ with } 1\leq \alpha < 3/2 ~, \\
        e^{-\lambda} & \text{ if } j = \Omega(\lambda^{3/2}) ~.
    \end{cases}
\end{equation}
\end{lemma}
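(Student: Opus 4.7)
The plan is to split the analysis into the four regimes according to the relative size of $j$ and $\lambda$, each regime requiring a different technique. In all cases $\upsilon_{j,\lambda}$ equals $|P_{j,\lambda}(t^*)|$ for some interior critical point $t^* \in (0,1)$, so it suffices to obtain uniform upper bounds on $|P_{j,\lambda}(t)|$ over (a suitable sub-interval of) $(0,1)$. By Rolle's theorem applied iteratively (as in Fact~\ref{fact:gegenwellbeh}), the critical points of $P_{j,\lambda}$ are interlaced with and bounded by the largest zero $z_{j,\lambda}$, so that in the first three regimes the relevant $t^*$ lies in $(0, z_{j,\lambda})$ with $z_{j,\lambda} \lesssim \sqrt{j/\lambda}$ by Fact~\ref{fact:largestroot}.

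For the small-$j$ regime, I would use the classical Szegő-type bound
$$|\sin^{\lambda}\theta\, C_j^{(\lambda)}(\cos\theta)| \,\leq\, \frac{2^{1-\lambda}\,\Gamma(j+\lambda)}{\Gamma(\lambda)\,\Gamma(j+1)}~,$$
and divide by $C_j^{(\lambda)}(1) = \binom{j+2\lambda-1}{j}$ to recover $|P_{j,\lambda}|$. The resulting Gamma ratio simplifies to $\prod_{k=0}^{j-1}(k+\lambda)/(k+2\lambda)$, which, after pairing consecutive terms and using $\sin\theta \geq \sqrt{1-z_{j,\lambda}^2}$ at the critical point, is bounded by $[1-(\lambda/(j+\lambda))^2]^{j/2}$.

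For the two intermediate regimes $j = \Theta(\lambda^\alpha)$ with $0<\alpha<3/2$, the same strategy applies but the asymptotic behavior changes: in these regimes $z_{j,\lambda} = O(\lambda^{(\alpha-1)/2})$ remains bounded away from $1$ (strictly when $\alpha<1$ and in a weak sense when $\alpha<3/2$), so $\sin\theta$ is bounded below at the critical point, and applying Stirling to $\log[\Gamma(j+\lambda)\Gamma(2\lambda)/(\Gamma(j+2\lambda)\Gamma(\lambda))]$ extracts the dominant exponents. Specifically, writing the log-ratio as $\lambda \log(\lambda/(j+\lambda)) + \lambda\log 2 + O(\log\lambda)$ produces $(\alpha-1)\lambda^\alpha/2$ in regime 2 and $\lambda^{2-\alpha}/2$ in regime 3 as the dominant terms.

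For the final regime $j = \Omega(\lambda^{3/2})$, where $z_{j,\lambda}$ may approach $1$ and the Szegő bound ceases to be informative, I would switch to the Cauchy integral representation derived from the Gegenbauer generating function,
$$C_j^{(\lambda)}(t) \,=\, \frac{1}{2\pi i}\oint_{|z|=r} \frac{dz}{z^{j+1}\,(1-2tz+z^2)^{\lambda}}~,$$
and use the factorization $1-2tz+z^2 = (z-e^{i\theta})(z-e^{-i\theta})$ with $\cos\theta = t$ to lower-bound the modulus of the integrand on the circle $|z|=r$ by $r^{-j}[2(1-r)\sin\theta]^{-\lambda}$. Optimizing yields $r^* = j/(j+\lambda)$, and after dividing by $C_j^{(\lambda)}(1)$ via Stirling one obtains a bound of the form $[(\lambda/j) \cdot \text{poly}]^\lambda$, which for $j \geq c\lambda^{3/2}$ is comfortably bounded by $e^{-\lambda}$.

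The main obstacle will be ensuring that the four bounds compose into a uniform statement with a single implicit constant across the regime boundaries $j \simeq \lambda$ and $j \simeq \lambda^{3/2}$, where the Stirling approximations of the relevant Gamma ratios must be carried out to sub-exponential precision. A secondary technical point is the Szegő bound's degradation when $\sin\theta$ is small, which forces either a careful use of Hilb-type asymptotics in a neighborhood of $t=1$ or a separate argument localizing the relevant (most negative) critical point away from the boundary; for the stated regimes, the latter suffices because the largest root $z_{j,\lambda}$ controls the location of all critical points.
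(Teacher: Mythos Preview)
Your approach diverges from the paper's in regimes 1--3, and unfortunately the key tool you invoke there does not hold. The classical Szeg\H{o} inequality
\[
(\sin\theta)^{\lambda}\,|C_j^{(\lambda)}(\cos\theta)| \;\leq\; \frac{2^{1-\lambda}\,\Gamma(j+\lambda)}{\Gamma(\lambda)\,\Gamma(j+1)}
\]
is only valid for $0<\lambda<1$ (Szeg\H{o}, \emph{Orthogonal Polynomials}, Thm.~7.33.2). In the present setting $\lambda = d/2-1$ is large, and the inequality fails badly: for $j=2$ and $\theta=\pi/2$ one has $|C_2^{(\lambda)}(0)|=\lambda$, whereas your claimed right-hand side equals $2^{-\lambda}\lambda(\lambda+1)\ll \lambda$. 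Consequently the bound you derive from it would give $\upsilon_{2,\lambda}\lesssim 2^{-\lambda}$, while in fact $\upsilon_{2,\lambda}=1/(2\lambda+1)$. Since your treatment of regimes~2 and~3 rests on the same inequality together with Stirling, those cases inherit the gap. The sketch of how ``pairing consecutive terms'' together with $\sin\theta\geq\sqrt{1-z_{j,\lambda}^2}$ should produce exactly $[1-(\lambda/(j+\lambda))^2]^{j/2}$ is also not transparent, and would need to absorb the large factor $(\sin\theta)^{-\lambda}$ for which no control is offered when $j$ is comparable to $\lambda$.

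The paper avoids pointwise oscillation bounds altogether in these regimes. It writes $P_{j,\lambda}$ explicitly as a product over its roots (Lemma~\ref{lem:basiccontrolgegen}), uses that the local maxima of $|P_{j,\lambda}|$ increase towards the endpoint so that $\upsilon_{j,\lambda}=-P_{j,\lambda}(z_{j-1,\lambda+1})$, and then bounds each factor $(z_{j-1,\lambda+1}^2-z_{k,j,\lambda}^2)/(1-z_{k,j,\lambda}^2)$ using sharp two-sided estimates on Gegenbauer zeros due to Dimitrov and collaborators (Theorem~\ref{thm:gegenroots_2}) and Driver (Theorem~\ref{thm:gegenroots_3}). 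This yields the unified estimate $[1-(\lambda/(j+\lambda))^2]^{j/2}$ directly, from which the second and third regimes follow by specialization. Your treatment of the fourth regime via the Cauchy integral from the generating function is essentially the same as the paper's; only the choice of contour radius and the resulting constants differ.
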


\begin{lemma}[Decomposition of derivative]
\label{lemma:betas_derivative}
If $\phi \in L^2(\mathbb{R}, \mu)$ is such that $\phi' \in L^4(\mathbb{R}, \eta)$ and $\mathbb{E}_\rho [r^4] < \infty$, then 
$\beta_j = \langle \phi, \mathcal{K}_j \phi \rangle$ satisfies 
\begin{equation}
    \sum_j j^2 \beta_{j,d} \leq \frac{\Omega_{d-2}}{\Omega_{d-1}} \mathbb{E}_\rho [r^4]^{1/2} \|\phi'\|_{L^4(\eta)}^2 = O(1/d)~.
\end{equation}
\end{lemma}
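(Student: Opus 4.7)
The plan is to use the derivative identity for Gegenbauer polynomials (Fact~\ref{fact:deriv}) to trade the factor $j^2$ on the left for a derivative of $\phi$ on the right, and then apply Cauchy--Schwarz twice to obtain the $L^4$-norm bound, keeping track of the dimension shift from $d$ to $d+2$.

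First I would start from the integral representation $\beta_{j,d} = \int_0^\infty \alpha_{j,r,d}^2 / N(j,d) \, \rho(dr)$ obtained in Proposition~\ref{prop:basicdec}. Differentiating the expansion $\phi^{(r)} = \sum_j \alpha_{j,r,d} P_{j,d}$ and applying Fact~\ref{fact:deriv}, the function $(\phi^{(r)})'(t) = r\phi'(rt)$ admits the Gegenbauer expansion in dimension $d+2$
\begin{equation*}
r\phi'(rt) \;=\; \sum_{j \geq 1} \alpha_{j,r,d} \, \frac{j(j+d-2)}{d-1} \, P_{j-1,\,d+2}(t)~.
\end{equation*}
Parseval's identity in $L^2(u_{d+2})$, together with the norm formula $\|P_{k,d+2}\|^2_{L^2_{u_{d+2}}} = \Omega_{d+1}/(\Omega_d N(k,d+2))$ and the elementary combinatorial identity $N(j,d)\cdot j(j+d-2) = d(d-1)\, N(j-1, d+2)$ (a direct consequence of the definition of $N$), then gives
\begin{equation*}
\sum_{j \geq 1} \frac{j(j+d-2)\,\alpha_{j,r,d}^2}{N(j,d)} \;=\; \frac{(d-1)\,\Omega_d}{d\,\Omega_{d+1}} \, r^2 \int_{-1}^1 \phi'(rt)^2 \, u_{d+2}(t)\, dt~.
\end{equation*}
Dropping the extra $j+d-2 \geq j$ factor on the left recovers the desired $\sum_j j^2 \alpha_{j,r,d}^2/N(j,d)$ on the left-hand side.

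Next I would integrate over $\rho$ and apply Cauchy--Schwarz twice: once in $r$ (pairing $r^2$ with $1$ against $\rho$) to extract the factor $\mathbb{E}_\rho[r^4]^{1/2}$, and once in $t$ with respect to the probability measure $u_{d+2}$ to upgrade the $L^2$ integrand to $L^4$, giving
\begin{equation*}
\int r^2 \Bigl(\int \phi'(rt)^2 u_{d+2}(t)\,dt\Bigr) \rho(dr) \;\leq\; \mathbb{E}_\rho[r^4]^{1/2} \Bigl(\int\!\!\int \phi'(rt)^4 u_{d+2}(t)\,dt\,\rho(dr)\Bigr)^{1/2}.
\end{equation*}

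Finally I would translate $u_{d+2}$ back to $u_d$ via the algebraic identity $u_{d+2}(t) = \tfrac{d}{d-1}(1-t^2)\,u_d(t) \leq \tfrac{d}{d-1}\,u_d(t)$ (obtained by comparing normalization constants as Beta integrals), and change variables $s = rt$ combined with the marginal representation $\eta(s) = \int r^{-1} u_d(s/r)\,\rho(dr)$ to identify the resulting double integral with $\|\phi'\|^4_{L^4(\eta)}$. Collecting the prefactors using the identity $\Gamma((d+1)/2)/\Gamma(d/2)$ reduces the constant $(d-1)\Omega_d/(d\Omega_{d+1}) \cdot (d/(d-1))^{1/2}$ to the claimed $\Omega_{d-2}/\Omega_{d-1}$ up to an $O(1)$ factor (which of course scales like $\sqrt{d}$ asymptotically).

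The main obstacle I anticipate is bookkeeping: keeping the dimension shift $d \leftrightarrow d+2$ consistent between the orthogonality formula, the normalization $N(\cdot, \cdot)$, and the marginal change of variables, and checking that the formal term-by-term differentiation of the Gegenbauer series is justified under the regularity hypothesis $\phi' \in L^4(\eta)$---this can be handled by approximating $\phi$ by polynomials, where the identity holds trivially, and passing to the limit using the $L^4$ control.
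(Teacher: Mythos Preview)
Your proposal is correct and follows essentially the same strategy as the paper: use the derivative identity (Fact~\ref{fact:deriv}) together with Parseval to convert $\sum_j j^2 \beta_{j,d}$ into an $L^2$ norm of $(\phi^{(r)})'$, then Cauchy--Schwarz to reach $\mathbb{E}_\rho[r^4]^{1/2}\|\phi'\|_{L^4(\eta)}^2$. The only cosmetic difference is that the paper applies a single Cauchy--Schwarz in the joint law of $(r,x_1)$ (so the marginal $\eta$ appears directly), whereas you split it into two steps and then convert $u_{d+2}$ back to $u_d$ via $(1-t^2)\leq 1$---your explicit handling of the $d\to d+2$ shift is in fact cleaner than the paper's somewhat informal notation on this point.
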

Let $s = \inf\{ j ; \beta_{j,d} \neq 0\}$.  
We need to verify that 
 there exists a constant $K>0$ such that  
 \begin{equation}
 \label{eq:flower}
 \sum_{j > s} \beta_{j,d} j (j+d-2)  \upsilon_{j-1,d+2} \leq K d^{(3-s)/2}~.    
 \end{equation}

We will control the LHS by splitting it into appropriate regions, determined by $J_i$, $i\in \{1,2,3\}$.  
Let $\alpha = \frac{4}{1+s}$ and $J_1 = \frac{\lambda^\alpha}{2}$. 
From Lemma \ref{prop:upsicontrol}, part (i) we have that 
$\upsilon_{j,\lambda} \leq C \left( \frac{j(j+2\lambda)}{(j+\lambda)^2}\right)^{j/2}$,  
and in particular $\upsilon_{j,\lambda} \leq C \lambda^{(\alpha-1)j/2}$ for $j \leq J_1$.
As a result, using Lemma \ref{lemma:betas_derivative}, 
\begin{align}
  \sum_{j = s+1}^{J_1} \beta_j j (j+\lambda)  \upsilon_{j-1,\lambda+1} & \leq \lambda^{(\alpha-1)(s+1)/2} \sum_{j=s+1}^{J_1} \beta_j j (j+\lambda) \nonumber\\
  &\leq  \lambda^{(\alpha-1)(s+1)/2} (C_1 \lambda^{-1} + \lambda \sum_{j=s+1} ^{J_1} \beta_j j )\nonumber\\
  &\leq  \lambda^{(\alpha-1)(s+1)/2} (C_1 \lambda^{-1} + \lambda \sum_{j=s+1} ^{J_1} \beta_j j^2 )\nonumber\\
  & \leq \lambda^{(\alpha-1)(s+1)/2} C_2 \nonumber\\
  &\leq C_2 \lambda^{(3-s)/2}~.
\end{align}

Let $J_2 = \lambda$. 
We have 
\begin{align}
    \sum_{j=J_1+1}^{J_2} \beta_j j (j+\lambda) \upsilon_{j-1,\lambda+1} &\leq \lambda^{\frac{(\alpha-1)\lambda^\alpha}{2}} C_3  \nonumber \\
    & \leq C_3 \lambda^{(3-s)/2}~.
\end{align}

Let $J_3 = \lambda^{3/2}$. 
We have 
\begin{align}
    \sum_{j=J_2+1}^{J_3} \beta_j j (j+\lambda) \upsilon_{j-1,\lambda+1} &\leq e^{-\frac12 \sqrt{\lambda}} C_4  \nonumber \\
    & \leq C_4 \lambda^{(3-s)/2}~.
\end{align}
Finally, the remainder satisfies 
\begin{align}
    \sum_{j>J_3} \beta_j j (j+\lambda) \upsilon_{j-1,\lambda+1} &\leq C_5 (e/2)^{-{\lambda}} \nonumber \\
    & \leq C_5 \lambda^{(3-s)/2}~,
\end{align}
which proves (\ref{eq:flower}).

    
\end{proof}

\begin{proof}[Proof of Lemma \ref{prop:upsicontrol}]
We prove this result by analysing different regimes for $j$ and $\lambda$. Concretely, we claim the following:
\begin{claim}
\label{claim:upsi_mainclaim}
    We have the following regimes:
    \begin{enumerate}
        \item For $j = \Omega(1)$, we have 
        \begin{equation}
        \label{eq:upsi_mainclaim1}
         \upsilon_{j,\lambda} \lesssim \left[1 - \left(\frac{\lambda}{j+\lambda} \right)^2 \right]^{j/2}~.
        \end{equation}
        \item For $j = \Theta(\lambda^\alpha)$, with $0< \alpha < 1$, we have 
        \begin{equation}
            \label{eq:upsi_mainclaim2}
             \upsilon_{j,\lambda} \lesssim \lambda^{\frac{(\alpha-1)\lambda^\alpha}{2}}~.
        \end{equation}
        \item For $j = \Theta(\lambda^\alpha)$, with $1 \leq \alpha < 2$, we have 
        \begin{equation}
            \label{eq:upsi_mainclaim2bis}
             \upsilon_{j,\lambda} \lesssim e^{-\frac12 \lambda^{2-\alpha}}~.
        \end{equation}
        \item For $j = \Omega(\lambda^\alpha)$, with $\alpha > 3/2$, we have 
        \begin{equation}
            \label{eq:upsi_mainclaim3}
            \upsilon_{j,\lambda} \lesssim e^{-\lambda}~.
        \end{equation}
    \end{enumerate}
\end{claim}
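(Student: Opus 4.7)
The plan is to prove each case of Claim \ref{claim:upsi_mainclaim} by specialising the integral representation of Gegenbauer polynomials, derived via Cauchy's formula from the generating function
\begin{equation*}
    (1 - 2 t z + z^2)^{-\lambda} = \sum_{j \geq 0} C^\lambda_j(t)\, z^j,
\end{equation*}
which yields the pointwise bound
\begin{equation*}
    |P_{j,\lambda}(t)| \leq \frac{r^{-j}}{C^\lambda_j(1)} \max_{|z|=r} |1 - 2 t z + z^2|^{-\lambda}, \qquad 0<r<1,\quad C^\lambda_j(1) = \binom{j+2\lambda-1}{j}~.
\end{equation*}
Since $\upsilon_{j,\lambda} \leq \max_{t \in (0,1)} |P_{j,\lambda}(t)|$, the problem reduces to choosing $r$ optimally in each asymptotic regime of $(j,\lambda)$, coupled with the root-localisation provided by Fact \ref{fact:largestroot}.

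For case (i), I would take the saddle point $r_\ast = j/(j+2\lambda)$ of the crude bound $r^{-j}(1-r)^{-2\lambda}$ and apply Stirling's formula to $\binom{j+2\lambda-1}{j}$; the resulting exponent simplifies to $(j/2)\log[j(j+2\lambda)/(j+\lambda)^2] = (j/2)\log[1-(\lambda/(j+\lambda))^2]$, matching the claim up to polynomial factors absorbed in the $\lesssim$ notation. Case (ii) then follows from case (i) by expanding $\log[1-(\lambda/(j+\lambda))^2] \approx \log(2j/\lambda)$ when $j=\Theta(\lambda^\alpha)$ with $\alpha<1$; the leading exponent becomes $(\alpha-1)(\lambda^\alpha/2)\log\lambda$, which is precisely $\log \lambda^{(\alpha-1)\lambda^\alpha/2}$ up to lower-order corrections that do not affect the stated rate.

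Cases (iii) and (iv) require a sharper input because the crude contour bound saturates once $j \gtrsim \lambda$. Here I would appeal to the Hilb-type asymptotic
\begin{equation*}
    P_{j,\lambda}(\cos\theta) \sim \frac{\Gamma(\lambda)\Gamma(j+1)}{2\,\Gamma(j+\lambda)\,(2\sin\theta)^\lambda}\,\cos\!\bigl((j+\lambda)\theta-\tfrac{\lambda\pi}{2}\bigr),
\end{equation*}
valid uniformly for $\theta$ bounded away from $0$ and $\pi$. Combining this with Stirling's formula on the prefactor $\Gamma(j+1)/\Gamma(j+\lambda) \asymp j^{-\lambda+1}$ yields an envelope of order $(C\lambda/j)^\lambda (\sin\theta)^{-\lambda}$, from which case (iv) is immediate once $j \geq \lambda^{3/2}$: at a typical extremum $\theta_\ast$ with $\sin\theta_\ast=\Theta(1)$ one gets $\upsilon_{j,\lambda} \lesssim (\lambda/j)^\lambda \leq \lambda^{-\lambda/2} \leq e^{-\lambda}$. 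For case (iii), a finer analysis matches the geometric rate of the envelope against $(\sin\theta)^{-\lambda}$ in the window $\theta \in [\pi/(j+\lambda),\pi/2]$ identified by Fact \ref{fact:largestroot}, yielding the claimed $e^{-\lambda^{2-\alpha}/2}$ bound after taking logs.

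I expect the principal obstacle to be the corner region where $t$ approaches the largest root $z_{j,\lambda}$, since the Hilb asymptotic degrades there and must be replaced by a separate boundary-layer argument, e.g. using the Taylor expansion of $P_{j,\lambda}$ around $t=1$ controlled by Fact \ref{fact:largestroot} and Fact \ref{fact:gegenwellbeh}. In addition, matching constants uniformly across the regime transitions $\alpha = 1$ and $\alpha = 3/2$, where the contour-integral bound and the Hilb bound interpolate, will require careful tracking of subleading Stirling terms; this is routine but the bookkeeping is the main technical cost.
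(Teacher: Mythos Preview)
Your approach to case (i) has a concrete error. The crude bound $r^{-j}(1-r)^{-2\lambda}$ you invoke is the Cauchy estimate at $t=1$, and at the saddle $r_*=j/(j+2\lambda)$ its logarithm equals $(j+2\lambda)\log(j+2\lambda)-j\log j-2\lambda\log(2\lambda)$, which is \emph{exactly} the leading Stirling exponent of $\binom{j+2\lambda-1}{j}$. The two cancel, leaving only a polynomial prefactor of order $\sqrt{j\lambda/(j+\lambda)}\geq 1$, not the exponent $(j/2)\log[j(j+2\lambda)/(j+\lambda)^2]$ you claim. More basically, the inequality $\upsilon_{j,\lambda}\leq\max_{t\in(0,1)}|P_{j,\lambda}(t)|$ is vacuous because that maximum is $1$ (attained as $t\to1$). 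The smallness of $\upsilon_{j,\lambda}$ is a \emph{pointwise} phenomenon at the last interior critical point $t^*=z_{j-1,\lambda+1}$, and no contour bound that ignores the location of $t^*$ can see it.

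The paper takes a different route for (i)--(iii): it identifies $\upsilon_{j,\lambda}=-P_{j,\lambda}(z_{j-1,\lambda+1})$ via the monotonicity of the local maxima of $|P_{j,\lambda}|$, writes $P_{j,\lambda}$ as a product over its roots $\prod_k(t^2-z_{k,j,\lambda}^2)/(1-z_{k,j,\lambda}^2)$, and bounds each factor using sharp root-location theorems (Fact~\ref{fact:largestroot}, Theorems~\ref{thm:gegenroots_2}--\ref{thm:gegenroots_3}). The exponent $[1-(\lambda/(j+\lambda))^2]^{j/2}$ emerges from these root estimates, not from a saddle-point calculation; (ii) and (iii) then follow by specialising (i). The Cauchy integral is used only for case (iv), and there it is evaluated at the specific angle $\bar\theta=\arccos z_{j-1,\lambda+1}=\Theta(\lambda/j)$, so that the factor $|1-2z\cos\bar\theta+z^2|$ on $|z|=\rho$ is controlled away from $(1-\rho)^2$; your crude bound discards precisely this input. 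Your Hilb-asymptotic plan for (iii)--(iv) is also unsafe as stated: the Szeg\H{o}--Hilb expansion is established for fixed $\lambda$ with $j\to\infty$, and its uniform validity when both $j,\lambda\to\infty$ with $j=\Theta(\lambda^\alpha)$ would itself require proof.
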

To prove the first three regimes of Claim \ref{claim:upsi_mainclaim}, we control $\upsilon_{j,\lambda}$ based on the distribution of the roots of $P_{j,\lambda}$. 
We recall that $(z_{k,j,\lambda})_{k \leq j}$ denotes the roots of $P_{j,\lambda}$ in increasing order, and $z_{j,\lambda} = z_{j,j,\lambda}$ its largest root. 
\begin{lemma}[Representation of $P_{j,\lambda}$ in terms of its roots, {\cite[Lemma 2.1]{de2008local}}]
\label{lem:basiccontrolgegen}
We have 
\begin{equation}
    P_{j,\lambda}(t) = \begin{cases}
         \prod_{k=j/2}^j \frac{t^2 - z_{k,j,\lambda}^2}{1 - z_{k,j,\lambda}^2} & \text{ if } j \text{ even}, \\
         t \prod_{k=(j+1)/2}^j \frac{t^2 - z_{k,j,\lambda}^2}{1 - z_{k,j,\lambda}^2} & \text{ if } j \text{ odd }~.
    \end{cases}
\end{equation}
\end{lemma}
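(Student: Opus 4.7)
The plan is to derive the factorization of $P_{j,\lambda}$ in terms of its roots as a direct consequence of three standard facts about the Gegenbauer polynomials: (a) they have exactly $j$ simple roots lying in the open interval $(-1,1)$; (b) they have definite parity, $P_{j,\lambda}(-t) = (-1)^j P_{j,\lambda}(t)$; and (c) they satisfy the normalization $P_{j,\lambda}(1)=1$ fixed at the start of this appendix. Once these are in hand the identity is essentially a bookkeeping exercise.

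First I would establish the parity. Since the weight $u_{2\lambda+2}(t) \propto (1-t^2)^{\lambda-1/2}$ appearing in the orthogonality relation is even, the family $\{P_{j,\lambda}\}_j$ obtained by Gram--Schmidt from $\{1,t,t^2,\ldots\}$ inherits the parity $P_{j,\lambda}(-t)=(-1)^j P_{j,\lambda}(t)$; equivalently this can be read off by induction from the three-term recurrence (\ref{eq:recurrgegen}), whose coefficients are independent of sign. Combined with the classical fact that the roots of an orthogonal polynomial are real, simple, and confined to the interior of the support of the orthogonality measure, we conclude that the $j$ roots $z_{1,j,\lambda}<\cdots<z_{j,j,\lambda}$ of $P_{j,\lambda}$ all lie in $(-1,1)$ and are symmetric under $t\mapsto -t$. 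In the even case, this pairs the roots into $\lceil j/2\rceil$ pairs $\{\pm z\}$; in the odd case, exactly one root equals $0$ and the others pair as $\{\pm z\}$.

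Next I would write the factorization. Since $P_{j,\lambda}$ is a degree-$j$ polynomial with exactly the roots listed above, for even $j$ we may write
\[
P_{j,\lambda}(t) \;=\; \kappa \prod_{k} (t^2 - z_{k,j,\lambda}^2),
\]
where the product runs over the positive roots only (one factor per $\pm$-pair), and for odd $j$ we may write
\[
P_{j,\lambda}(t) \;=\; \kappa\, t \prod_{k} (t^2 - z_{k,j,\lambda}^2),
\]
with the product again running over the positive roots. The leading coefficient $\kappa$ is then pinned down by evaluating at $t=1$ and using $P_{j,\lambda}(1)=1$: this yields $\kappa = \prod_k (1-z_{k,j,\lambda}^2)^{-1}$ in both cases. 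Distributing this constant over the individual factors produces exactly the stated product of $(t^2-z_{k,j,\lambda}^2)/(1-z_{k,j,\lambda}^2)$.

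The only subtle point is matching the index set in the stated product to the ordering convention $z_{1,j,\lambda}<\cdots<z_{j,j,\lambda}$: because the roots are symmetric about $0$, the positive ones occupy the upper half of the indexing, which is precisely what is encoded by the ranges $k\in\{j/2,\ldots,j\}$ (even $j$) and $k\in\{(j+1)/2,\ldots,j\}$ (odd $j$) used in the statement. I do not expect a genuine obstacle here; the content of the lemma is entirely carried by parity plus the real-simple-roots property of orthogonal polynomials, and the rest is algebra with the normalization.
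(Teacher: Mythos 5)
Your argument is correct and is the standard one: parity of $P_{j,\lambda}$ under $t\mapsto -t$, the classical fact that an orthogonal polynomial of degree $j$ has $j$ simple real roots in the interior of the support of its weight, and the normalization $P_{j,\lambda}(1)=1$ together force a factorization into normalized quadratic factors, with the leading constant pinned down by evaluating at $t=1$. Note that the paper itself does not prove this lemma: it is imported verbatim from \cite[Lemma 2.1]{de2008local}, so there is no internal proof to compare against, and your self-contained derivation is a perfectly adequate substitute.

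One point of care concerns your closing ``bookkeeping'' claim that the stated index ranges encode exactly the positive roots. Under the paper's own convention ($z_{1,j,\lambda}<\dots<z_{j,j,\lambda}$ in increasing order) this is not literally true: for even $j$ the range $k\in\{j/2,\dots,j\}$ contains $j/2+1$ indices, one more than the number of positive roots (the index $k=j/2$ is the largest \emph{negative} root, whose square duplicates that of $z_{j/2+1,j,\lambda}$), and for odd $j$ the range $k\in\{(j+1)/2,\dots,j\}$ picks up the zero root, so the displayed products, read literally, have degree $j+2$. What your proof actually establishes — and what the lemma must mean — is that the product runs over the positive roots only, i.e.\ $k$ from $j/2+1$ (even $j$), respectively $(j+3)/2$ (odd $j$), up to $j$; the quoted ranges presumably reflect the indexing convention of the original reference or an off-by-one in transcription (the paper is itself not fully consistent, later calling $z_{j/2,j,\lambda}$ the smallest positive root). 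So rather than asserting that the stated ranges match ``precisely,'' you should either correct the index set or flag the convention mismatch; apart from this, your proof is complete.
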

From this representation, we deduce that $\upsilon_{j,\lambda}$ can be calculated explicitly. Indeed, as the local maxima of $|P_{j,\lambda}(t)|$ are increasing \cite{szego1939orthogonal}, {\cite[\href{https://dlmf.nist.gov/18.14}{Eq (18.14.15)}]{NIST:DLMF}}, we have the following equation:
    \begin{equation}
    \label{eq:upsi}
        \upsilon_{j,\lambda} = - P_{j,\lambda}(z_{j-1,\lambda+1}) = - \begin{cases}
         \prod_{k=j/2}^j \frac{z_{j-1,\lambda+1}^2 - z_{k,j,\lambda}^2}{1 - z_{k,j,\lambda}^2} & \text{ if } j \text{ even}, \\
         z_{j-1,\lambda+1} \prod_{k=(j+1)/2}^j \frac{z_{j-1,\lambda+1}^2 - z_{k,j,\lambda}^2}{1 - z_{k,j,\lambda}^2} & \text{ if } j \text{ odd }~.
         \end{cases}
    \end{equation}
Let us focus first on the case $j$ even, for simplicity. 
We can rewrite (\ref{eq:upsi}) more conveniently as
$$\upsilon_{j,\lambda} = \frac{z_{j,\lambda}^2 - z_{j-1,\lambda+1}^2}{1-z_{j,\lambda}^2} \prod_{k=j/2}^{j-1} \frac{z_{j-1,\lambda+1}^2 - z_{k,j,\lambda}^2}{1 - z_{k,j,\lambda}^2}~.$$



For $\delta\in (0,z_{j-1,\lambda+1})$ let 
$$m(\delta, j, \lambda) := | \{ k \in \{j/2,j\} ; z_{k,j,\lambda} \geq \delta \} |$$ 
denote the number of zeros of $P_{j,\lambda}$ in the interval $(\delta, 1)$. 
Since the function $t \mapsto \frac{a^2 - t^2}{1-t^2}$ is decreasing in $t \in (0,a)$, 
we have
\begin{fact}
\label{fact:upsi_basicbound}
We have the upper bound:
    \begin{equation}
        \upsilon_{j,\lambda} \leq  \frac{z_{j,\lambda}^2 - z_{j-1,\lambda+1}^2}{1-z_{j,\lambda}^2} \inf_{\delta} \left(\frac{z_{j-1,\lambda+1}^2 - \delta^2}{1-\delta^2} \right)^{m(\delta, j, \lambda)}~.
    \end{equation}
\end{fact}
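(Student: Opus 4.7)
The plan is to read Fact \ref{fact:upsi_basicbound} as an elementary majorisation of the exact product expression for $\upsilon_{j,\lambda}$ derived just above the Fact (for $j$ even; the odd case is identical up to absorbing the extra factor $z_{j-1,\lambda+1}$ into the prefactor). The $k=j$ factor has already been extracted as the prefactor $\frac{z_{j,\lambda}^2 - z_{j-1,\lambda+1}^2}{1 - z_{j,\lambda}^2}$, so it only remains to upper bound the residual product $\prod_{k=j/2}^{j-1} f(z_{k,j,\lambda})$, where $f(t) := \frac{z_{j-1,\lambda+1}^2 - t^2}{1 - t^2}$, uniformly by $f(\delta)^{m(\delta,j,\lambda)}$.

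First I would record two elementary properties of $f$. Setting $a = z_{j-1,\lambda+1} < 1$, a direct differentiation gives $f'(t) = -2t(1-a^2)/(1-t^2)^2 < 0$ on $(0,a)$, so $f$ is strictly decreasing there; and $f(t) \leq 1$ on the same interval, since $a^2 - t^2 \leq 1 - t^2$. Second, I would check that every factor in the residual product is nonnegative: by Fact \ref{fact:deriv}, $P_{j-1,\lambda+1}$ is a positive multiple of $P'_{j,\lambda}$, so Rolle's theorem interlaces the zeros of $P_{j,\lambda}$ with those of $P_{j-1,\lambda+1}$, yielding $z_{k,j,\lambda} \leq z_{j-1,j,\lambda} < z_{j-1,\lambda+1} = a$ for all $k \leq j-1$, which keeps each $f(z_{k,j,\lambda})$ in $(0,1]$.

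Then, given any threshold $\delta \in (0,a)$, I would split the index range $\{j/2,\ldots,j-1\}$ as $A_\delta = \{k : z_{k,j,\lambda} \geq \delta\}$ and $B_\delta = \{k : z_{k,j,\lambda} < \delta\}$. Monotonicity of $f$ yields $f(z_{k,j,\lambda}) \leq f(\delta)$ for $k \in A_\delta$, while $f(z_{k,j,\lambda}) \leq 1$ trivially handles $k \in B_\delta$. Multiplying and taking the infimum over $\delta$ gives the desired bound, after observing that $|A_\delta| = m(\delta,j,\lambda) - \mathbf{1}_{\{z_{j,\lambda} \geq \delta\}} = m(\delta,j,\lambda) - 1$ whenever $\delta < z_{j-1,\lambda+1} < z_{j,\lambda}$; since $f(\delta) \leq 1$, replacing $|A_\delta|$ by the larger quantity $m(\delta,j,\lambda)$ only weakens the bound, and the stated form is preserved.

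I do not expect any real obstacle: the whole argument is a one-line comparison against a constant once monotonicity is established. The only minor bookkeeping point is aligning the indexing conventions so that the root $z_{j,\lambda}$ is accounted for exactly once (via the prefactor), which is what licences using $m(\delta,j,\lambda)$ rather than the sharper exponent $m(\delta,j,\lambda)-1$ in the final display.
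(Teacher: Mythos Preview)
Your approach is the same as the paper's: both rest on the single observation that $t\mapsto \frac{a^2-t^2}{1-t^2}$ is decreasing on $(0,a)$, and you helpfully supply the interlacing step (via Fact~\ref{fact:deriv} and Rolle) that guarantees every residual factor lies in $(0,1)$.

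There is, however, a genuine slip in your final bookkeeping. Having established
\[
\prod_{k=j/2}^{j-1} f(z_{k,j,\lambda}) \;\le\; f(\delta)^{|A_\delta|}\quad\text{with } |A_\delta|=m(\delta,j,\lambda)-1,
\]
you write that ``since $f(\delta)\le 1$, replacing $|A_\delta|$ by the larger quantity $m(\delta,j,\lambda)$ only weakens the bound.'' The direction is reversed: for a base in $(0,1)$, a \emph{larger} exponent produces a \emph{smaller} number, so $f(\delta)^{m}\le f(\delta)^{m-1}$ and the substitution would \emph{tighten} the upper bound, not relax it. From $\prod\le f(\delta)^{m-1}$ you cannot conclude $\prod\le f(\delta)^{m}$.

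What this actually uncovers is a mild off-by-one in the paper's statement: with $m(\delta,j,\lambda)$ ranging over $k\in\{j/2,\ldots,j\}$ and hence always counting the already-extracted top root $z_{j,\lambda}$, the exponent the monotonicity argument truly delivers is $m(\delta,j,\lambda)-1$. The paper's only downstream use (at $\delta=z_{j/2,j,\lambda}$, exponent $j/2$) is consistent with this corrected exponent. Your proof is therefore complete once you drop the last sentence and record the bound with exponent $m(\delta,j,\lambda)-1$; that is precisely what the rest of the argument needs.
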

Letting $\delta = z_{j/2,j,\lambda}$ the smallest positive root of $P_{j,\lambda}$ we have
\begin{equation}
\label{eq:bhi}
    \upsilon_{j,\lambda} \leq \frac{z_{j,\lambda}^2 - z_{j-1,\lambda+1}^2}{1-z_{j,\lambda}^2} \left(\frac{z_{j-1,\lambda+1}^2 - z_{j/2,j,\lambda}^2}{1-z_{j/2,j,\lambda}^2} \right)^{j/2}~.
\end{equation}
We can thus obtain an explicit control on $\upsilon_{j,\lambda}$ from bounds on the zeros of the Gegenbauer polynomials. We complement the upper bound on the largest root (Fact \ref{fact:largestroot}) with lower bounds for all positive roots, as well as a sharp lower bound for its largest root \cite{DIMITROV20101793}:
\begin{theorem}[Upper and Lower bounds for Gegenbauer roots, {\cite[Theorem 2]{DIMITROV20101793}}]
\label{thm:gegenroots_2}
    Let 
    $$b_{j,\lambda} = j^3 + 2(\lambda - 1)j^2 - (3\lambda -5)j + 4(\lambda - 1)~,$$ $$a_{j,\lambda} = 2(j + \lambda-1)(j^2 + j(\lambda-1) + 4(\lambda+1))~\text{and}$$ 
    $$c_{j,\lambda} = j^2(j+2\lambda)^2 + (2\lambda + 1)(j^2 + 2(\lambda+3)j + 8(\lambda - 1))~.$$ Then for every $k,j,\lambda$ we have 
    \begin{equation}
        \frac{b_{j,\lambda} - (j-2)\sqrt{c_{j,\lambda}}}{a_{j,\lambda}} \leq z_{k,j,\lambda}^2 \leq \frac{b_{j,\lambda} + (j-2)\sqrt{c_{j,\lambda}}}{a_{j,\lambda}}~.
    \end{equation}
\end{theorem}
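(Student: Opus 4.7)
My strategy reduces the task to a classical Samuelson-type inequality applied after leveraging the parity symmetry of Gegenbauer polynomials, together with an explicit computation of two low-order power sums of the squared zeros.

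Step 1 (reduction via symmetry). Since $P_{j,\lambda}(-x) = (-1)^j P_{j,\lambda}(x)$, the zero set of $P_{j,\lambda}$ is symmetric about the origin and I may factor $P_{j,\lambda}(x) = x^{\epsilon} Q_N(x^2)$, where $\epsilon = j \bmod 2$, $N = \lfloor j/2 \rfloor$, and $Q_N$ is a polynomial of degree $N$ whose roots $y_k := z_{k,j,\lambda}^2$ are exactly the squared positive zeros of $P_{j,\lambda}$. Bounding the $y_k$ as rational functions of $(j,\lambda)$ is therefore equivalent to the claim.

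Step 2 (Samuelson and the two power sums). By Samuelson's inequality, any real $N$-tuple $(y_1,\ldots,y_N)$ with first two power sums $\sigma_1 = \sum_k y_k$ and $\sigma_2 = \sum_k y_k^2$ satisfies, for every $k$,
\begin{equation*}
\frac{\sigma_1 - \sqrt{(N-1)(N\sigma_2 - \sigma_1^2)}}{N}\; \leq\; y_k \;\leq\; \frac{\sigma_1 + \sqrt{(N-1)(N\sigma_2 - \sigma_1^2)}}{N}\,.
\end{equation*}
This inequality is tight at the extremes, which is the regime that matters for the stated bound (e.g.\ for $z_{j,\lambda}$, used in Fact~\ref{fact:largestroot}). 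To apply it, I would compute $\sigma_1,\sigma_2$ in closed form by reading off the top three coefficients of $Q_N$ from the standard hypergeometric expansion
\begin{equation*}
 C_j^{(\lambda)}(x) \;=\; \sum_{k=0}^{\lfloor j/2\rfloor} (-1)^k \frac{\Gamma(j-k+\lambda)}{\Gamma(\lambda)\,k!\,(j-2k)!}\,(2x)^{j-2k}\,,
\end{equation*}
then invoking Newton's identities $\sigma_1 = e_1$ and $\sigma_2 = e_1^2 - 2 e_2$ to obtain closed-form rational expressions in $(j,\lambda)$.

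Step 3 (algebraic matching). Substituting into the Samuelson bound and clearing denominators gives an expression of the shape $(B \pm C\sqrt{D})/A$ in $(j,\lambda)$. The remaining work is to verify the polynomial identities $A = a_{j,\lambda}$, $B = b_{j,\lambda}$, and $C^2 D = (j-2)^2 c_{j,\lambda}$. I expect this algebraic bookkeeping to be the principal obstacle: it is a finite but intricate computation, and one must also unify the two parities of $j$ (the odd case has $Q_N$ of degree $(j-1)/2$, with the root at $0$ contributing to power sums of the zero set of $P_{j,\lambda}$ but not to those of $Q_N$). A secondary subtlety is the appearance of the factor $(j-2)$ in the theorem's stated bound rather than the $\sqrt{N-1}$ produced naively by Samuelson: this discrepancy must be absorbed by factors extracted from $\sigma_1^2,N\sigma_2$ during simplification, and checking this absorption factorizes cleanly is the main non-routine step. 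Once these identities are verified, the two-sided bound holds for every $k \in \{1,\ldots,j\}$, including the root at $0$ when $j$ is odd (which lies trivially in any symmetric interval about a positive centre of sufficiently large radius).
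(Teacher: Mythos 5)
The first thing to note is that the paper does not prove Theorem~\ref{thm:gegenroots_2} at all: it is imported verbatim from \cite[Theorem 2]{DIMITROV20101793} and used as a black box, so what you are attempting is a re-derivation of a quoted literature result rather than a reconstruction of an argument in this paper.

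Your plan has a genuine gap at Step 3: the bound produced by Laguerre--Samuelson depends only on $(N,\sigma_1,\sigma_2)$, and for $N=\lfloor j/2\rfloor\geq 3$ it is \emph{strictly weaker} than the stated bound, so the polynomial identities you intend to verify ($A=a_{j,\lambda}$, $B=b_{j,\lambda}$, $C^2D=(j-2)^2c_{j,\lambda}$) are simply false and no amount of algebraic bookkeeping can make them hold. A concrete check: take $j=6$, $\lambda=1$, so $P_{j,\lambda}$ is proportional to the Chebyshev polynomial $U_6$ with positive squared zeros $\cos^2(k\pi/7)$, $k=1,2,3$. Then $N=3$, $\sigma_1=5/4$, $\sigma_2=13/16$, and your Samuelson upper bound is $\frac{5+2\sqrt{7}}{12}\approx 0.8576$, whereas the theorem asserts $\frac{b_{j,\lambda}+(j-2)\sqrt{c_{j,\lambda}}}{a_{j,\lambda}}=\frac{228+4\sqrt{2556}}{528}\approx 0.8148$ (the true largest squared zero is $\cos^2(\pi/7)\approx 0.8117$); for the smallest squared zero the discrepancy is worse, since Samuelson's lower bound is negative while the theorem gives $\approx 0.0488$ against the true $0.0495$. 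The coincidence you would observe at $j=4$ (where $N=2$ and Samuelson is exact) is misleading; for $N\geq 3$ Samuelson is tight only when all but one of the $y_k$ coincide, which never happens for Gegenbauer zeros. This also explains why the $(j-2)$ factor cannot be ``absorbed'': it scales linearly in $j$, while $\sqrt{N-1}\simeq\sqrt{j/2}$, and the gap is real, not notational. The bounds of Dimitrov and Nikolov rely on finer information about the zeros than their first two moments (an Euler--Rayleigh/chain-sequence type analysis of auxiliary polynomials), so if you want a self-contained argument you would need to follow that route; within the present paper, citing \cite{DIMITROV20101793} is the intended ``proof''.
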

\begin{theorem}[Lower bound for largest root, {\cite[Section 2.3]{DRIVER20121200}}]
\label{thm:gegenroots_3}
\begin{equation}
    z_{j,\lambda}^2 > 1 - \frac{(2\lambda+1)(2\lambda+3)}{(j-1)(j+2\lambda+1)+(2\lambda+1)(2\lambda+3)}:= 1 - \frac{g_{j,\lambda}}{h_{j,\lambda}}~.
\end{equation}    
\end{theorem}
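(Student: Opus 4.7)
The plan is to exploit the variational characterization of zeros of orthogonal polynomials. Orthonormalizing the Gegenbauer family $\{P_{n,\lambda}\}$ in $L^2([-1,1], (1-t^2)^{\lambda-1/2} dt)$ and using the three-term recurrence \eqref{eq:recurrgegen} produces a symmetric tridiagonal Jacobi matrix $J_j \in \mathbb{R}^{j\times j}$ whose spectrum is exactly the set of zeros $\{z_{k,j,\lambda}\}_{k=1}^j$. Hence $z_{j,\lambda}^2 = \lambda_{\max}(J_j^2) = \sup_{\|v\|=1} v^\top J_j^2 v$, which is equivalent to the Laguerre--Markov-type inequality
\begin{equation*}
 z_{j,\lambda}^2 \;\geq\; \frac{\int_{-1}^{1} t^2\, q(t)^2\, (1-t^2)^{\lambda-1/2}\, dt}{\int_{-1}^{1} q(t)^2\, (1-t^2)^{\lambda-1/2}\, dt}
\end{equation*}
for any polynomial $q$ of degree at most $j-1$. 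Any lower bound on the largest zero then follows from a judicious choice of test polynomial.

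A natural candidate is $q = P_{j-1,\lambda+1}$, since by Fact \ref{fact:deriv} this polynomial is proportional to $P'_{j,\lambda}$ and its largest zero $z_{j-1,\lambda+1}$ already interlaces $z_{j,\lambda}$ from below. Expanding both integrals via the three-term recurrence at parameter $\lambda+1$ applied to $t\cdot P_{j-1,\lambda+1}$, together with the closed-form norms $\|P_{n,\lambda}\|^2$ from the Hecke--Funk relation, reduces the Rayleigh quotient to an explicit rational function in $j$ and $\lambda$. The target lower bound $1 - (2\lambda+1)(2\lambda+3)/[(j-1)(j+2\lambda+1)+(2\lambda+1)(2\lambda+3)]$ has a very specific algebraic form that should fall out after telescoping the squared recurrence and simplifying.

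The main obstacle is identifying the exact test polynomial that gives the stated bound in the tight regime $j \to \infty$, $\lambda$ fixed, where $z_{j,\lambda}^2 \to 1$ at rate $\Theta(\lambda^2/j^2)$. A crude choice such as $q(t) = t^{j-1}$ would miss the correct constant by a factor depending on $\lambda$, because it fails to exploit the concentration of the weight $(1-t^2)^{\lambda-1/2}$ away from the endpoints; a more refined choice leveraging the shifted parameter $\lambda+1$ seems necessary to recover both the numerator $(2\lambda+1)(2\lambda+3)$ and the $(j-1)(j+2\lambda+1)$ contribution in the denominator. An alternative route, which is the one used by Driver in the cited paper, bypasses this guesswork entirely: it uses mixed three-term recurrences of the form $P_{n,\lambda}(t) = A_n(\lambda) P_{n-1,\lambda+1}(t) + B_n(\lambda) P_{n-2,\lambda+1}(t)$ and extracts the bound from a sign analysis at carefully chosen evaluation points, avoiding integral computations altogether.
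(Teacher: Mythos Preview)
The paper does not prove this statement; it is quoted as a black-box result from \cite[Section 2.3]{DRIVER20121200} and used as input to the proof of Lemma~\ref{prop:upsicontrol}. There is therefore no ``paper's own proof'' to compare against.

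As for your proposal itself: the Jacobi-matrix / Rayleigh-quotient framework is correct in principle, and the inequality $z_{j,\lambda}^2 \geq \langle t^2 q, q\rangle / \langle q, q\rangle$ for any polynomial $q$ of degree $\leq j-1$ is indeed valid. But you stop short of an actual proof: you propose $q = P_{j-1,\lambda+1}$ as a test polynomial, then immediately concede that you cannot verify it produces the exact rational expression $(2\lambda+1)(2\lambda+3)/[(j-1)(j+2\lambda+1)+(2\lambda+1)(2\lambda+3)]$. That is the entire content of the theorem, so what you have written is a plan, not a proof. Your final paragraph then switches to describing Driver's own argument (mixed recurrences linking parameters $\lambda$ and $\lambda+1$ plus a sign analysis) without carrying it out either. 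If you want to complete the variational route, you must actually evaluate $\int t^2 P_{j-1,\lambda+1}(t)^2 (1-t^2)^{\lambda-1/2}\,dt$ --- note the weight is at parameter $\lambda$, not $\lambda+1$, so the orthogonality of $\{P_{n,\lambda+1}\}$ does not apply directly and the computation is not a simple telescoping; this mismatch is likely why the calculation does not ``fall out'' as you hoped.
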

Rewriting Fact \ref{fact:largestroot} as 
$z_{j,\lambda}^2 \leq \frac{e_{j,\lambda}}{f_{j,\lambda}}~,$
with 
$$e_{j,\lambda} = (j-1)(j+2\lambda-2)~,~f_{j,\lambda} = (j+\lambda-2)(j+\lambda-1)~,$$
and using again the monotonocity of $t \mapsto \frac{t-p}{1-t}$ 
we can bound the first term in the RHS of (\ref{eq:bhi}) as  
\begin{equation}
\label{eq:bhi2}
\frac{z_{j,\lambda}^2 - z_{j-1,\lambda+1}^2}{1-z_{j,\lambda}^2} \leq \frac{e_{j,\lambda}/f_{j,\lambda} + g_{j-1,\lambda+1}/h_{j-1,\lambda+1} -1}{1-e_{j,\lambda}/f_{j,\lambda}}~.    
\end{equation}
For $j, \lambda = \omega(1)$, 
we have 
\begin{align*}
a_{j,\lambda} &\simeq 2j (j+\lambda)^2,& ~b_{j,\lambda} &\simeq j^2(j+2\lambda)~,~\sqrt{c_{j,\lambda}}\simeq j(j+2\lambda)~, \\
e_{j,\lambda} &\simeq j(j+2\lambda),&~f_{j,\lambda}&\simeq (j+\lambda)^2~,\\
g_{j,\lambda} &\simeq 4\lambda^2 ,&~ h_{j,\lambda} &\simeq j(j+2\lambda) + 4\lambda^2~,    
\end{align*}
and thus
\begin{equation}
\label{eq:bhi3}
\frac{z_{j,\lambda}^2 - z_{j-1,\lambda+1}^2}{1-z_{j,\lambda}^2} \lesssim \frac{3 j (j+2\lambda)}{j (j+2\lambda) + 4\lambda^2} \leq 3~.     
\end{equation}
Therefore, 
\begin{align}
\label{eq:upsi_smalljregime}
    \upsilon_{j,\lambda} & \leq 3 \left(\frac{\frac{e_{j-1,\lambda+1}}{f_{j-1,\lambda+1}} - \frac{b_{j,\lambda} - (j-2)\sqrt{c_{j,\lambda}}}{a_{j,\lambda}}}{1-\frac{b_{j,\lambda} - (j-2)\sqrt{c_{j,\lambda}}}{a_{j,\lambda}}} \right)^{j/2}~ \nonumber \\
    & \leq 3\left(\frac{a_{j,\lambda} e_{j-1,\lambda+1} - f_{j-1,\lambda+1}(b_{j,\lambda} - (j-2)\sqrt{c_{j,\lambda}})}{f_{j-1,\lambda+1}(a_{j,\lambda} - b_{j,\lambda} + (j-2)\sqrt{c_{j,\lambda}})}\right)^{j/2}~ \nonumber \\
    & = 3\left(\frac{2j(j+\lambda)^2 j(j+2\lambda) - (j+\lambda)^2 (j^2(j+2\lambda) - j^2(j+2\lambda)) }{(j+\lambda)^2(2j(j+\lambda)^2 - j^2(j+2\lambda) + j^2(j+2\lambda))}  \cdot (1 + o_{j,\lambda}(1)) \right)^{j/2} \nonumber \\
    & \lesssim \left(\frac{j(j+2\lambda)}{(j+\lambda)^2} \right)^{j/2} \nonumber \\
    &= \left[1 - \left(\frac{\lambda}{j+\lambda} \right)^2 \right]^{j/2}~.
\end{align}
As a direct consequence of (\ref{eq:upsi_smalljregime}), we immediately obtain Eqs (\ref{eq:upsi_mainclaim1}), (\ref{eq:upsi_mainclaim2}) and (\ref{eq:upsi_mainclaim2bis}). The case where $j$ is odd is treated analogously. 

Let us now study the regime $j = \omega(\lambda^{3/2})$. 
Given $z \in \mathbb{C}$ with $|z|<1$, Gegenbauer polynomials admit the following generating function {\cite[Section 3.32]{watson1922treatise}}:
\begin{equation}
\frac{1}{(1 - 2z \cos \theta + z^2)^\lambda } = \sum_{j \geq 0} \bar{P}_{j, \lambda}(\cos \theta) z^j ~.
\end{equation}
From this generating function, the Cauchy integral formula leads to 
 the following integral representation: 
\begin{fact}[{\cite[Eq (1.2)]{ursell2007integrals}}]
For any $0 < \rho < 1$, we have
\begin{equation}
\bar{P}_{j, \lambda}(\cos \theta) = \frac{1}{2\pi i} \oint_{|z|=\rho}  \frac{dz}{(1-2z \cos \theta + z^2)^{\lambda} z^{j+1}}~.    
\end{equation}
\end{fact}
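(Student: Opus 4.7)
The plan is to derive this integral representation as a direct consequence of the generating function identity
\begin{equation*}
F_\theta(z) := \frac{1}{(1 - 2z\cos\theta + z^2)^\lambda} = \sum_{j \geq 0} \bar{P}_{j,\lambda}(\cos\theta)\, z^j,
\end{equation*}
stated immediately above in the excerpt, combined with Cauchy's coefficient extraction formula. The outline is three short steps: (i) verify that $F_\theta$ is holomorphic in the open unit disc, (ii) fix a single-valued branch so the equality makes sense for non-integer $\lambda$, and (iii) invoke Cauchy's formula to read off the Taylor coefficient $\bar{P}_{j,\lambda}(\cos\theta)$.

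First I would observe that the quadratic $Q(z) := 1 - 2z\cos\theta + z^2$ factors as $(z - e^{i\theta})(z - e^{-i\theta})$, so its zeros lie on the unit circle and $Q(z)\neq 0$ for all $|z|<1$. Moreover $Q(0)=1$, so on the simply connected domain $|z|<1$ one can define a holomorphic branch of $\log Q(z)$ with $\log Q(0) = 0$, and set $Q(z)^{-\lambda} := \exp(-\lambda \log Q(z))$; this gives a holomorphic function $F_\theta$ on $|z|<1$ agreeing with the obvious real-valued expression on the real segment near $0$. In particular $F_\theta$ admits a Taylor expansion at $z=0$ whose radius of convergence is at least $1$, and by the generating function identity this expansion coincides with $\sum_j \bar{P}_{j,\lambda}(\cos\theta) z^j$.

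Second, I would apply the Cauchy integral formula for Taylor coefficients: for any $0 < \rho < 1$ and any function $g$ holomorphic in a neighborhood of the closed disc $|z|\leq \rho$, the $j$-th Taylor coefficient of $g$ at $0$ is given by $\frac{1}{2\pi i}\oint_{|z|=\rho} \frac{g(z)}{z^{j+1}}\,dz$. Applied to $g = F_\theta$ this yields precisely
\begin{equation*}
\bar{P}_{j,\lambda}(\cos\theta) \;=\; \frac{1}{2\pi i}\oint_{|z|=\rho}\frac{dz}{(1 - 2z\cos\theta + z^2)^\lambda\, z^{j+1}},
\end{equation*}
which is the claimed identity. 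Independence of $\rho\in(0,1)$ then follows automatically from Cauchy's theorem, since the integrand is holomorphic in the annulus $0 < |z| < 1$.

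The only subtle point, and the step I would want to be careful about, is the definition of $Q(z)^{-\lambda}$ when $\lambda$ is non-integer (and $\theta\in\{0,\pi\}$ where $Q$ has a double real root on the unit circle); but in both cases the disc $|z|<1$ avoids the zeros of $Q$, and the branch is pinned uniquely by $Q(0)=1$ together with simple connectedness, so the statement extends by continuity from the generic case $\theta\in(0,\pi)$ to the full range. Beyond that, the proof is a textbook application of the Cauchy integral formula and requires no further input.
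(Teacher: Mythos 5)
Your proof is correct and follows exactly the route the paper takes (and that the cited reference uses): extract the coefficient $\bar{P}_{j,\lambda}(\cos\theta)$ from the generating function $(1-2z\cos\theta+z^2)^{-\lambda}$ via Cauchy's integral formula, after noting the zeros of $1-2z\cos\theta+z^2$ lie on the unit circle so the branch is well defined on $|z|<1$. Nothing further is needed.
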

Assume $j = \Theta(\lambda^\alpha)$, with $\alpha > 3/2$. 
We are interested in the above representation for  $\bar{\theta} = \arccos(z_{j-1,\lambda+1})$. 
From Theorem \ref{thm:gegenroots_3}, we have 
$z_{j-1,\lambda+1}^2 \geq 1 - d_{j,\lambda}/(2c_{j,\lambda})$, and thus 
$$\bar{\theta}^2 \lesssim \frac{d_{j,\lambda}}{2c_{j,\lambda}} \simeq \frac{32\lambda^2 j^4 }{16 j^6} = \frac{2\lambda^2}{j^2}~,$$
so $\bar{\theta} = O(\lambda/j)$. 
Combining this upper bound with the lower bound obtained from Fact \ref{fact:largestroot} we have $\bar{\theta} = \Theta(\lambda/j)$. 

Using $1-\cos \theta \simeq  \theta^2/2 \simeq  \lambda^2/j^2$ 
and 
\begin{align}
|1 - 2 z \cos \theta + z^2| & = \left|(1-z)^2 + 2z(1-\cos \theta) \right| \nonumber \\
&\geq | 1- z|^2 - 2|z|(1-\cos \theta) \nonumber \\
& \geq 1 - \rho \left(2 + \Theta\left(\frac{\lambda^2}{j^2}\right)\right) + \rho^2~, 
\end{align}
we have 
\begin{align}
    | \bar{P}_{j, \lambda}(\cos \theta) | \leq \inf_{0<\rho < 1}|\rho|^{-(j+1)} \left(1 - \rho(2 + c\lambda^2/j^2) + \rho^2 \right)^{-\lambda}:= g(\rho)~.
\end{align}
Optimizing the RHS over $\rho$ we obtain 
$\rho^* = \frac{j - (\sqrt{2}-1)\lambda}{j + 2\lambda}$; substituting, we obtain 
\begin{align}
    g(\rho^*) \simeq e^{-(1+\sqrt{2}) \lambda} \left(\frac{j+2\lambda}{\lambda(1+\sqrt{2})} \right)^{2\lambda}~.
\end{align}
As a result, it follows that 
\begin{align}
    P_{j, \lambda}(\cos \theta) &= \bar{P}_{j, \lambda}(\cos \theta) \frac{j! (2\lambda-1)!}{(2\lambda+j-1)!} 
\end{align}
satisfies, for $\theta = \Theta(\lambda/j)$ and $j = \omega(\lambda^{3/2})$, 
\begin{align}
    \log |P_{j, \lambda}(\cos \theta) | \simeq & j \log j - j + 2\lambda \log(2\lambda) - 2\lambda - (j +2\lambda) \log(2\lambda + j) + 2\lambda+j \nonumber \\
    &~ -(1+\sqrt{2}) \lambda + 2\lambda \log (j + 2\lambda) - 2\lambda \log(\lambda (1+\sqrt{2})) \nonumber \\
    \simeq & -(1+\sqrt{2}) \lambda~,
\end{align}
where we have used Stirling's approximation. 
This proves Eq (\ref{eq:upsi_mainclaim3}) and completes the proof of  Lemma \ref{prop:upsicontrol}. 
\end{proof}
\begin{proof}[Proof of Lemma \ref{lemma:betas_derivative}]
We have, using Fact \ref{fact:deriv}, that
\begin{align}
\label{eq:bound0}
    \mathbb{E}_\rho [ \mathbb{E}_{u_d} {(\phi^{(r)}}')^2] 
    &= \frac{\Omega_{d-1}}{(d-1)^2\Omega_{d-2}} \sum_j \mathbb{E}_\rho \left[  \bar{\alpha}_{j,d,r}^2 \left(j (j+d-2)\right)^2  \right] \nonumber\\
    &\geq \frac{\Omega_{d-1}}{\Omega_{d-2}} \sum_j j^2 \mathbb{E}_\rho \left[  \bar{\alpha}_{j,d,r}^2 \right]~.
\end{align}
And we can upper bound via
\begin{align}
    \mathbb{E}_\rho [ \mathbb{E}_{u_d} {(\phi^{(r)}}')^2] & = \mathbb{E}_\rho [ r^2 \mathbb{E}_{u_d} ({(\phi')^{(r)}})^2] \nonumber \\
    &= \mathbb{E}_\rho \mathbb{E}_{x_1|\|x\|=r} [ r^2 (\phi(x_1)')^2 ] \nonumber \\ 
    & \leq \sqrt{\mathbb{E}_\rho [r^4] \mathbb{E}_\eta (\phi')^4} ~,
\end{align}
where this last line is finite by our assumptions on $\phi$ and $\rho$,

so from (\ref{eq:bound0}) we conclude that 
\begin{equation}
    \sum_j j^2 \beta_{j,d} \leq \frac{\Omega_{d-2}}{\Omega_{d-1}} \sqrt{\mathbb{E}_\rho [r^4] \mathbb{E}_\eta (\phi')^4}~.
\end{equation}
\end{proof}

\section{The \texorpdfstring{$\LPG$}{lol} property in the non-symmetric case: proofs of Section \ref{subsec:nonsym}}
\label{sec:nonsymapp}

\subsection{Proof of Proposition \ref{prop:gradwassers}}
\asslipreg*
\asssubgaussian*

\gradwassers*

\begin{proof}

Recall the notation $\phi_\theta(x) = \phi(\langle x, \theta\rangle)$. Let $v = \theta^* - m \theta$. 
From the definition, we have that 
\begin{align}
    \langle \nabla_\theta^{\mathbb{S}} L(\theta), \theta^* \rangle &= 2 \mathbb{E}_\nu\left[ \phi'_\theta(\phi_\theta - \phi_{\theta^*}) \left(x \cdot v\right)\right] \nonumber \\
    &:= \mathbb{E}_\nu [g_{\theta, \theta^*}]~.
\end{align}
Since $\mathbb{E}_\gamma [g_{\theta, \theta^*}]$ is precisely $\bar{\ell}'(m)(1-m^2)$, we need 
to establish that 
\begin{equation}
    \sup_\theta \left|\mathbb{E}_\nu g_{\theta, \theta^*} - \mathbb{E}_\gamma g_{\theta, \theta*} \right| \leq C \sqrt{1-m^2} \widetilde{W}_{1,2}(\nu, \gamma) \left(\log  \widetilde{W}_{1,2}(\nu, \gamma) \right)^2~.
\end{equation}

Fix $\theta$ and let $P_{\theta, \theta^*}$ be the orthogonal projection onto the subspace spanned by $\theta, \theta^*$.  For $R>0$ we consider $A_R = \{ x \in \mathbb{R}^d; \| P_{\theta, \theta^*} x \| \leq R \}$.  
\begin{align}
    \left|\mathbb{E}_\nu g_{\theta, \theta^*} - \mathbb{E}_\gamma g_{\theta, \theta*} \right| &= \left| \int g_{\theta, \theta^*}(x) (\nu(dx) - \gamma(dx) ) \right| \nonumber \\
    &\leq \underbrace{\left| \int_{x \in A_R} g_{\theta, \theta^*}(x) (\nu(dx) - \gamma(dx) ) \right|}_{T_a} + \underbrace{\left| \int_{x \notin A_R} g_{\theta, \theta^*}(x) (\nu(dx) - \gamma(dx) ) \right|}_{T_b}~.
\end{align}

Let us first bound $T_a$. Denote by $v = \theta^* - m \theta$, with $\|v\|^2 = 1-m^2$ Since $\phi$ and $\phi'$ are Lipschitz and $|\phi''| \leq O((1+t)^{-1})$ by Assumption \ref{ass:lipreg}, we have that 
\begin{align}
    \nabla_x g_{\theta, \theta^*}(x) &= \phi''_{\theta} (\phi_\theta - \phi_{\theta^*}) x^\top v \theta + \phi'_\theta ( \phi'_\theta \theta - \phi'_{\theta^*} \theta^*) x^\top v + \phi'_\theta (\phi_\theta - \phi_{\theta^*}) v 
\end{align}
satisfies 
\begin{align}
    \| \nabla_x g_{\theta, \theta^*}(x) \| &\leq  2\|v\| C \mathrm{Lip}(\phi) R +  4 \|v\|\mathrm{Lip}(\phi)^2 R \nonumber \\
    &\leq C \|v\| R ~,
\end{align}
and as a result we have that $g_{\theta, \theta^*}$ is $C \|v\| R$-Lipschitz when restricted to $A_R$, and thus
\begin{equation}
    T_a \leq C R \|v\| \widetilde{W}_{1,2}(\nu, \mu)~.
\end{equation}
Let us now control the tail $T_b$. Since $x^\top v$ is $\sqrt{2} M \|v\|$-subgaussian and $\phi$ is Lipschitz, we have that $z = |g_{\theta, \theta^*}(x)|$ is $\tilde{M} \|v\|$-subexponential where $\tilde{M}$ only depends on $M$ and $L$. 
It follows that 
\begin{align}
    T_b &\leq R (\mathbb{P}_\nu(z \geq R) + \mathbb{P}_\gamma( z \geq R) ) \nonumber \\
    &\leq R \exp\left( - \frac{\beta}{\|v\|} R \right)~,
\end{align}
where $\beta$ is a constant that depends only on $\tilde{M}$. 
As a result, we have 
\begin{align}
    \left|\mathbb{E}_\nu g_{\theta, \theta^*} - \mathbb{E}_\gamma g_{\theta, \theta*} \right| \leq \inf_{R>0} \left(C R \|v\| \widetilde{W}_{1,2}(\nu, \mu) + R \exp\left( -\frac{\beta}{\|v\|} R \right) \right)~.
\end{align}
Setting 
$$R =  {-\|v\| \beta^{-1}} \log((C \|v\| \widetilde{W}_{1,2}(\nu, \gamma))) $$
we obtain
\begin{align}
 \label{eq:blink}
 \left|\mathbb{E}_\nu g_{\theta, \theta^*} - \mathbb{E}_\gamma g_{\theta, \theta*} \right| &\leq  2 C (1-m^2)  \beta^{-1} \left|\log(C \|v\| \widetilde{W}_{1,2}(\nu, \gamma))\right|  \widetilde{W}_{1,2}(\nu, \gamma) \nonumber \\
& \leq (1-m^2) O\left( \widetilde{W}_{1,2}(\nu, \gamma) \log ( \widetilde{W}_{1,2}(\nu, \gamma)^{-1} ) \right)~,
\end{align}
as claimed.

\end{proof}

\subsection{Proof of Proposition \ref{prop:grad_expo2}}
We leverage Proposition \ref{prop:gradwassers} and the fact that if 
$\phi$ has information exponent $s=2$, then $\bar{\ell}'(m) \simeq m$ for small $m$. 

We need to show that for $b = \Theta(\log d)$ we have 
\begin{equation}
\label{eq:lpgcond0}
\langle \nabla_\theta L(\theta), \theta^*\rangle \geq C \left( m - \frac{b}{\sqrt{d}}\right)~,~\text{for }  \frac{b}{\sqrt{d}} \leq m \leq \frac12~,   
\end{equation}
as well as
\begin{equation}
\label{eq:contractcond0}
    \langle \nabla_\theta L(\theta), \theta^*\rangle \geq C' (1-m^2) 
\end{equation}
for $m\geq \frac12$. 

From (\ref{eq:blink}) and $\widetilde{W}_{1,2}(\nu, \gamma) \leq C /\sqrt{d}$, 
we obtain 
\begin{align}
    \langle \nabla_\theta L(\theta), \theta^*\rangle &= \bar{\ell}'(m)(1-m^2) + \langle \nabla_\theta L(\theta), \theta^*\rangle - \bar{\ell}'(m)(1-m^2)   \nonumber \\
    &\geq 2 \alpha_2^2 m (1-m^2) - (1-m^2)\tilde{C} \widetilde{W}_{1,2}(\nu, \gamma) \log(\widetilde{W}_{1,2}(\nu, \gamma)^{-1}) \nonumber \\
    & \geq  \left(\alpha_2^2 m  - \tilde{C} \frac{C}{\sqrt{d}} \log(\sqrt{d}/C)\right)(1-m^2) \nonumber \\
    & \geq  \alpha_2^2 \left( m  - \tilde{C} \frac{C}{\alpha_2^2 \sqrt{d}} \log(\sqrt{d}/C)\right)(1-m^2) \nonumber \\
    & \geq \alpha_2^2 \left( m - \frac{\log d^{C'}}{\sqrt{d}} \right)(1-m^2)~,
\end{align}
which proves (\ref{eq:lpgcond0}) and (\ref{eq:contractcond0}).

\subsection{Proof of Proposition \ref{prop:stein}}

\assthird*
\propstein*

\begin{proof}

 Recall the notation $\phi_\theta(x) = \phi( \langle x, \theta\rangle)$, and,  using $v = \theta^* - m \theta$, 
 \begin{align}
 h_{\theta, \theta^*}(x) &:= \phi_\theta^2 - 2 \phi_\theta \phi_{\theta^*}~,\\
 g_{\theta, \theta^*}(x) &:= 2 \phi'_\theta ( \phi_\theta - \phi_{\theta^*}) ( x \cdot v )~, 
 \end{align}
so that
 \begin{align}
 \Delta_L(\theta) &= \mathbb{E}_\nu[ h_{\theta, \theta^*}(x)] - \mathbb{E}_\gamma[h_{\theta, \theta^*}(x)]~, \\
 \Delta_{\nabla L}(\theta) &= \mathbb{E}_\nu[ g_{\theta, \theta^*}(x)] - \mathbb{E}_\gamma[g_{\theta, \theta^*}(x)]~.    
 \end{align}

The result is obtained via the following Stein coupling method for product measures:
\begin{theorem}[Stein Coupling, {{\cite[Theorem 3.1]{rollin2013stein}}}]
\label{thm:rollin}
Let $X$ be a $d$-dimensional random vector of independent coordinates, such that $\mathbb{E} X = 0$, $\mathbb{E} [XX^\top] = I_d$ and $\mathbb{E}|X_i|^3 = \tau_i^3 < \infty$. If $Z$ is a standard Gaussian random vector, and $h: \mathbb{R}^d \to \mathbb{R}$ is three-times differentiable, then 
\begin{equation}
    | \mathbb{E} h(X) - \mathbb{E} h(Z) | \leq \frac{5}{6} \sum_{i=1}^d \tau_i^3 \| \partial_{x_i}^3 h\|_\infty~.
\end{equation}
\end{theorem}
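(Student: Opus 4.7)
The plan is to apply the multivariate Stein coupling inequality (Theorem~\ref{thm:rollin}) separately to the two test functions $h = h_{\theta,\theta^*}$ and $g = g_{\theta,\theta^*}$. Since $\nu = \eta^{\otimes d}$ has i.i.d. centered unit-variance coordinates with $\mathbb{E}|X_i|^3 = \tau_3^3$, Theorem~\ref{thm:rollin} yields
\[
\Delta_L(\theta) \leq \tfrac{5\tau_3^3}{6}\sum_{i=1}^d \|\partial_{x_i}^3 h\|_\infty, \qquad \Delta_{\nabla L}(\theta) \leq \tfrac{5\tau_3^3}{6}\sum_{i=1}^d \|\partial_{x_i}^3 g\|_\infty,
\]
and the task reduces to bounding each of these sums by $C\chi(\theta,\theta^*)$ and $C\sqrt{1-m^2}\chi(\theta,\theta^*)$ respectively.

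For the loss, the chain rule gives $\partial_{x_i}^k \phi_\theta = \phi^{(k)}(\langle x,\theta\rangle)\theta_i^k$, so by Leibniz the third derivative $\partial_{x_i}^3 h$ expands as a linear combination $\sum_{a=0}^3 A_a(x)\,\theta_i^a (\theta^*_i)^{3-a}$, where each $A_a$ is a bilinear combination of derivatives $\phi^{(k)}_\theta\phi^{(\ell)}_{\theta^*}$ of combined order three. Derivatives of order one through four are uniformly bounded by $L$ via Assumption~\ref{ass:extralip}, so the only subtlety is that $\phi$ itself grows linearly; but it only appears paired with $\phi^{(3)}$, and the decay $|\phi^{(3)}(t)|=O(1/|t|)$ yields $|\phi(t)\phi^{(3)}(t)| \leq |\phi(0)|L + B\cdot O(1) \leq C$. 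Thus $\|A_a\|_\infty \leq C(B,L,\phi(0))$, and summing over $i$ via Cauchy--Schwarz with $\|\theta\|_2 = \|\theta^*\|_2 = 1$,
\[
\sum_{i=1}^d |\theta_i|^a|\theta^*_i|^{3-a} \leq \|\theta\|_4^2 + \|\theta^*\|_4^2 = \chi(\theta,\theta^*),
\]
which produces the required bound on $\Delta_L$.

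For the gradient, writing $g = \tilde{g}(x)\cdot(x\cdot v)$ with $\tilde{g} = 2\phi'_\theta(\phi_\theta - \phi_{\theta^*})$, the third derivative $\partial_{x_i}^3 g$ splits into two types of terms. Type~(i) arises when exactly one of the three derivatives acts on the linear factor $(x\cdot v)$, producing $v_i$ and leaving two derivatives on $\tilde{g}$; these terms have the form $v_i B_a(x)\theta_i^a(\theta^*_i)^{2-a}$ with $\|B_a\|_\infty \leq C$ by the same regularity argument, and Cauchy--Schwarz then yields $\sum_i |v_i||\theta_i|^a|\theta^*_i|^{2-a} \leq \|v\|_2\,\chi(\theta,\theta^*) = \sqrt{1-m^2}\,\chi(\theta,\theta^*)$. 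Type~(ii) arises when all three derivatives act on $\tilde{g}$, leaving the residual factor $(x\cdot v)$; here I would exploit the cancellation inside $\tilde{g}$ — the Lipschitz estimate $|\phi_\theta - \phi_{\theta^*}| \leq B|x\cdot(\theta-\theta^*)|$ combined with $\theta-\theta^* = (1-m)\theta - v$ gives an additional $O(\sqrt{1-m^2}\|x\|)$ factor — and pair off the remaining linear factor of $x$ against the singular derivatives using $|\phi^{(3)}(t)|=O(1/|t|)$ and $|\phi^{(4)}(t)|=O(1/t^2)$ from Assumption~\ref{ass:extralip}, yielding the same bound $\sqrt{1-m^2}\,\chi(\theta,\theta^*)$ after summation.

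The main obstacle is the type-(ii) analysis: the sup-norm $\|(x\cdot v) C_a(x)\|_\infty$ is not a priori finite, since $(x\cdot v)$ is unbounded and the surviving coefficient $C_a$ is built from derivatives of $\phi$ that are merely uniformly bounded rather than decaying. Resolving this may require invoking a strengthened version of Theorem~\ref{thm:rollin} that accommodates polynomially-growing test functions (leveraging the sub-Gaussian tails of $\nu$ from Assumption~\ref{ass:subgaussian}), or alternatively a preliminary truncation argument on a ball of radius $R = O(\log d)$ in the spirit of the proof of Proposition~\ref{prop:gradwassers}. The decay estimates on $\phi^{(3)}$ and $\phi^{(4)}$ in Assumption~\ref{ass:extralip} are calibrated precisely to make this compensation work and to preserve the sharp $\sqrt{1-m^2}$ scaling.
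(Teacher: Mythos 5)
Your proposal does not prove the statement it was supposed to prove. The statement is the Stein coupling inequality itself (R\"{o}llin's Theorem 3.1): for a vector $X$ with independent, centered, unit-variance coordinates and finite third moments, $|\mathbb{E}h(X)-\mathbb{E}h(Z)| \leq \frac{5}{6}\sum_{i}\tau_i^3\|\partial_{x_i}^3 h\|_\infty$. Your write-up takes this inequality as a given tool (``apply Theorem~\ref{thm:rollin}\dots'') and instead sketches the derivation of Proposition~\ref{prop:stein}, i.e.\ the bounds on $\Delta_L$ and $\Delta_{\nabla L}$ in terms of $\chi(\theta,\theta^*)$. Nothing in your argument addresses why replacing $X$ by the Gaussian $Z$ costs at most $\frac{5}{6}\sum_i \tau_i^3\|\partial_{x_i}^3h\|_\infty$. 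A self-contained proof would, for instance, run a Lindeberg-type coordinate-replacement (or Stein-coupling) argument: telescope over the $d$ coordinates, Taylor-expand $h$ to third order in the $i$-th variable around the point with that variable set to zero, use independence together with the matching first and second moments of $X_i$ and $Z_i$ to cancel the zeroth-, first- and second-order terms, and bound the remainder of the $i$-th swap by $\frac{1}{6}\bigl(\mathbb{E}|X_i|^3+\mathbb{E}|Z_i|^3\bigr)\|\partial_{x_i}^3h\|_\infty$; since $\tau_i^3\geq(\mathbb{E}X_i^2)^{3/2}=1$ by Jensen and $\mathbb{E}|Z_i|^3=2\sqrt{2/\pi}$, this is at most a constant multiple of $\tau_i^3\|\partial_{x_i}^3h\|_\infty$ of the required size. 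No step of this kind (nor R\"{o}llin's coupling construction) appears in your proposal, so as a proof of the stated theorem it is entirely missing. For reference, the paper does not prove this result either: it imports it verbatim by citation from \cite{rollin2013stein}.

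As an aside, read as a proof of Proposition~\ref{prop:stein} your sketch essentially matches the paper's argument for that proposition (expand $\partial_{x_i}^3 h_{\theta,\theta^*}$ and $\partial_{x_i}^3 g_{\theta,\theta^*}$ as combinations of $\theta_i^a(\theta_i^*)^{3-a}$ with coefficients controlled via Assumption~\ref{ass:extralip}, then sum in $i$ using Cauchy--Schwarz to produce $\|\theta\|_4^2+\|\theta^*\|_4^2$), and your concern about the terms where the undifferentiated factor $x\cdot v$ survives --- so that the relevant coefficient is not obviously in $L^\infty$ --- is a legitimate observation that the paper's proof of that proposition does not discuss. But that is a different statement from the one you were asked to establish here.
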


We verify that, thanks to the decay assumptions in Assumption \ref{ass:extralip}, we have
\begin{align}
    \partial_{x_i}^3 g_{\theta, \theta^*}(x) &= \lambda_1(x) \theta_i^3 + \lambda_2(x) \theta_i^2 \theta^*_i + \lambda_3(x) \theta_i (\theta^*_i)^2 + \lambda_4(x) (\theta^*_i)^3~, \\
    \partial_{x_i}^3 h_{\theta, \theta^*}(x) &= \lambda_5(x) \theta_i^3 + \lambda_6(x) \theta_i^2 \theta^*_i + \lambda_7(x) \theta_i (\theta^*_i)^2 + \lambda_8(x) (\theta^*_i)^3~,
 \end{align}
where 
\begin{equation}
    \sup_{k \in \{1,2,3,4\}} |\lambda_k(x)| \leq C \|v\|~,~ \sup_{k \in \{5,6,7,8\}} |\lambda_k(x)| \leq \tilde{C}~.
\end{equation}
Observing by Cauchy-Schwartz that 
\begin{align*}
\max\left\{\sum_i |\theta_i|^2 |\theta_i^*| , \sum_i |\theta_i|^3 \right\} &\leq \|\theta\|_4^2 ~,\\
\max\left\{\sum_i |\theta^*_i|^2 |\theta_i| , \sum_i |\theta^*_i|^3 \right\} &\leq \|\theta^*\|_4^2~, 
\end{align*}
we obtain from Theorem \ref{thm:rollin} that 
$$\Delta_L(\theta) \leq C \chi(\theta, \theta^*)~,~\Delta_{\nabla L}(\theta) \leq C' \|v\| \chi(\theta, \theta^*)~,$$
as claimed. 
\end{proof}

\end{document}